\documentclass{article} %
\usepackage{iclr2027_conference,times}   %
\usepackage[utf8]{inputenc}

\usepackage{amsmath,amsfonts,bm}

\def\eqref#1{equation~\ref{#1}}

\def\ceil#1{\lceil #1 \rceil}

\def\1{\bm{1}}

\def\eps{{\epsilon}}

\DeclareMathAlphabet{\mathsfit}{\encodingdefault}{\sfdefault}{m}{sl}
\SetMathAlphabet{\mathsfit}{bold}{\encodingdefault}{\sfdefault}{bx}{n}

\newcommand{\E}{\mathbb{E}}

\usepackage{amsmath,amssymb,amsthm}
\usepackage{tikz}
\usetikzlibrary{arrows.meta,decorations.pathreplacing,patterns,circuits.logic.US}
\usepackage{booktabs}
\usepackage{longtable}
\usepackage{xcolor}
\usepackage{textcomp}
\usepackage{graphicx}
\usepackage{wrapfig}
\usepackage{array}
\usepackage{url}
\usepackage{hyperref}
\usepackage[all]{hypcap}
\hypersetup{colorlinks=true, linkcolor=blue, citecolor=blue, urlcolor=blue}
\graphicspath{{figures/}}

\DeclareMathOperator{\suc}{succ}
\DeclareMathOperator{\decode}{dec}

\DeclareMathOperator{\bag}{bag}
\DeclareMathOperator{\TV}{TV}
\newcommand{\gap}{\Delta}
\newcommand{\TCz}{\mathsf{TC}^0}
\newcommand{\ACz}{\mathsf{AC}^0}

\newcommand{\NCone}{\mathsf{NC}^1}

\newcommand{\poly}{\mathrm{poly}}
\newcommand{\polylog}{\mathrm{polylog}}
\newcommand{\negl}{\mathrm{negl}}
\newcommand{\Unif}{\mathcal{U}}
\renewcommand{\E}{\mathbb{E}}
\newcommand{\ind}{\mathbf{1}}
\newcommand{\wt}[1]{\widetilde{#1}}
\renewcommand{\eps}{\varepsilon}
\newcommand{\PAR}{\mathrm{PAR}}
\newcommand{\WP}{\mathrm{WP}}

\theoremstyle{plain}
\newtheorem{theorem}{Theorem}[section]
\newtheorem{lemma}[theorem]{Lemma}
\newtheorem{proposition}[theorem]{Proposition}

\newtheorem{conjecture}[theorem]{Conjecture}
\newtheorem{fact}[theorem]{Fact}
\theoremstyle{definition}
\newtheorem{definition}[theorem]{Definition}
\newtheorem{assumption}[theorem]{Assumption}
\theoremstyle{remark}
\newtheorem{remark}[theorem]{Remark}
  \newcommand{\shal}{\mathrm{shallow}}
\title{Ceiling of a Task: When Can a Transformer Succeed Without Its Chain of Thought?}

\author{Jiashu He\thanks{Equal contribution.} \\
University of Pennsylvania \\
{\small\texttt{jiashuhe@seas.upenn.edu}}
\And
Jinxuan Fan\footnotemark[1] \\
University of Texas at Austin \\
{\small\texttt{jinxuan\_fan@utexas.edu}}
\And
Xiao Xiao \\
Yale University \\
{\small\texttt{xiao.xiao.xx244@yale.edu}}
\AND
Radu Marculescu \\
University of Texas at Austin \\
{\small\texttt{radum@utexas.edu}}
\And
Alejandro Ribeiro \\
University of Pennsylvania \\
{\small\texttt{aribeiro@seas.upenn.edu}}
}

\iclrfinalcopy
\begin{document}

{\setlength{\tabcolsep}{3pt}\maketitle}
\lhead{}\renewcommand{\headrulewidth}{0pt}
\suppressfloats[t]   %

\begin{abstract}
Reasoning models generate long chains of thought before they answer, yet it is debated whether the content of these chains does real computational work or is largely decorative. We study this question by viewing a transformer as a shallow circuit. One forward pass through a fixed number of layers has constant depth, so any procedure that runs the model a constant number of times is a shallow circuit. We call the best accuracy that a shallow circuit can reach on a task the \emph{ceiling} of the task, and a task is serial if its ceiling lies below one. We prove three results on serial tasks that hold for every transformer, no matter how it was trained. \emph{Necessity}: replacing the chain by anything that does not depend on its content, such as filler tokens or a restatement of the question, drives the accuracy down to the ceiling, and on a maximally serial task down to chance. \emph{Depth}: no shallow computation can write the chain of a model whose accuracy exceeds the ceiling, not even approximately. \emph{Locality}: the answer is one shallow pass away from the finished chain, so all of the serial reasoning happens in the chain. On word problems of finite groups, whose ceilings are known, small transformers trained from scratch, with or without reinforcement learning, attain the predicted numbers: chain-trained models solve every input length and fall to chance when the chain is erased, chainless models collapse to the ceiling as the input length grows, and open-weight reasoning models given the same problem in words return to the baseline without their chain. On MATH-500 and AIME, erasing the chain costs open reasoning models 0.52 to 0.82 accuracy, a sentence shuffle is harmless, and a token shuffle is as harmful as erasing; the same holds for checkpoints trained by GRPO with a correct or a random reward. The ceiling of a task therefore answers when a transformer can succeed without its chain of thought, no matter how the model was trained.
\end{abstract}

\section{Introduction}
\label{sec:intro}

Reasoning models produce long chains of thought, and longer chains are often correlated with better performance on hard problems. But it is unclear whether these chains do real computational work or are mostly decorative, as recent work shows that filler and pause tokens can reproduce some gains \citep{pfau2024, goyal2024pause}, and chains often fail to state the true reasons for answers \citep{turpin2023, chen2025reasoning}. Besides, experiments that erase, truncate, or shuffle chains \citep{lanham2023} are hard to interpret because it is unclear whether the accuracy drops are due to the chain or the model (answer head) reading it. This paper proves quantitative bounds on what any transformer can do without the content of its chain, no matter how it is trained.

\begin{figure}[t]
\centering
\begin{minipage}[t]{0.40\linewidth}
\centering
\begin{tikzpicture}[x=0.76cm, y=0.53cm, >={Stealth[length=4pt]}, baseline=(current bounding box.north)]
\begin{scope}[tok/.style={draw, rounded corners=1.5pt, minimum width=0.6cm, minimum height=0.38cm, inner sep=0pt, font=\small},
  hid/.style={draw, circle, minimum size=0.24cm, inner sep=0pt, fill=black!8},
  dep/.style={black!24, line width=0.3pt},
  fb/.style={->, orange!85!black, line width=1.1pt},
  long/.style={blue!65!black, line width=1.3pt}]
  \draw[rounded corners=3pt, draw=teal!45, fill=teal!6] (-0.85,-3.65) rectangle (6.2,5.05);
  \node[font=\scriptsize\bfseries, anchor=north, inner sep=2pt] at (2.675,5.02) {Where a transformer's depth comes from};
  \foreach \i/\lab in {1/x_1, 2/x_2, 3/x_3} \node[tok] (t\i) at (\i,0) {$\lab$};
  \foreach \i/\lab in {4/c_1, 5/c_2} \node[tok, fill=orange!22] (t\i) at (\i,0) {$\lab$};
  \foreach \i in {1,...,5} \foreach \l in {1,2,3} \node[hid] (h\i\l) at (\i,\l) {};
  \foreach \t in {1,...,5} \foreach \s in {1,...,\t} {
    \draw[dep] (t\s.north) -- (h\t1);
    \draw[dep] (h\s1) -- (h\t2);
    \draw[dep] (h\s2) -- (h\t3); }
  \draw[long] (t1.north) -- (h31) -- (h32) -- (h33);
  \draw[long] (t4.north) -- (h41) -- (h42) -- (h43);
  \draw[long] (t5.north) -- (h51) -- (h52) -- (h53);
  \draw[fb] (h33.north) -- ++(0,0.5) -- ++(0.5,0) |- (t4.west);
  \draw[fb] (h43.north) -- ++(0,0.5) -- ++(0.5,0) |- (t5.west);
  \draw[->, long] (h53.north) -- ++(0,0.55) node[above, font=\scriptsize, text=black, draw=blue!65!black, line width=0.5pt, rounded corners=1.5pt, fill=blue!6, inner sep=2pt] {answer $a$};
  \foreach \n in {h31,h32,h33,h41,h42,h43,h51,h52,h53} \node[hid, draw=blue!65!black, line width=0.9pt] at (\n) {};
  \foreach \l in {1,2,3} \node[anchor=east, font=\scriptsize] at (0.5,\l) {layer \l};
  \node[anchor=east, font=\scriptsize] at (0.5,0) {tokens};
  \draw[dep, line width=0.6pt] (-0.8,-1.0) -- (-0.4,-1.0) node[right, font=\scriptsize, text=black, align=left, inner sep=2pt, text width=4.9cm] {inside one forward pass, a state reads all earlier positions of the layer below};
  \draw[fb] (-0.8,-1.9) -- (-0.4,-1.9) node[right, font=\scriptsize, text=black, inner sep=2pt] {sampled token, fed back as input};
  \draw[long] (-0.8,-2.4) -- (-0.4,-2.4) node[right, font=\scriptsize, text=black, inner sep=2pt] (lg3) {one dependency path of length};
  \node[font=\scriptsize, draw=red!65!black, rounded corners=2pt, line width=0.5pt, inner sep=2pt, anchor=north west] (threeL) at (-0.42,-2.7) {$3L$: \textcolor{red!65!black}{Not shallow, grows with input and chain}};
\end{scope}
\end{tikzpicture}
\end{minipage}\hfill
\begin{minipage}[t]{0.58\linewidth}
\centering
\resizebox{\linewidth}{!}{%
\begin{tikzpicture}[>={Stealth[length=3pt]}, font=\footnotesize, baseline=(current bounding box.north),
  ycell/.style={draw=black!50, rounded corners=1.5pt, minimum width=0.42cm, minimum height=0.3cm, inner sep=0pt, font=\scriptsize},
  ok/.style={ycell, fill=green!25}, bad/.style={ycell, fill=red!22}, guess/.style={ycell, fill=black!10},
  lab/.style={font=\scriptsize, inner sep=1pt},
  ttl/.style={font=\scriptsize\bfseries, inner sep=1pt}]
\newcommand{\tick}[2]{\draw[green!45!black, line width=0.7pt, line cap=round] (#1-0.09,#2) -- (#1-0.03,#2-0.07) -- (#1+0.09,#2+0.07);}
\newcommand{\cross}[2]{\draw[red!70!black, line width=0.7pt, line cap=round] (#1-0.07,#2-0.07) -- (#1+0.07,#2+0.07) (#1-0.07,#2+0.07) -- (#1+0.07,#2-0.07);}
\foreach \k in {1,...,6} {
  \pgfmathsetmacro{\r}{3.28-0.42*(\k-1)}
  \node[lab, anchor=east] at (0.42,\r) {$q_\k$};
  \draw[->, black!45, line width=0.4pt] (0.48,\r) -- (1.52,\r);
}
\draw[rounded corners=2pt, fill=white, draw=black!40] (0.72,0.95) rectangle (1.2,3.57);
\node[inner sep=0pt] at (0.96,2.26) {\includegraphics[height=0.5cm]{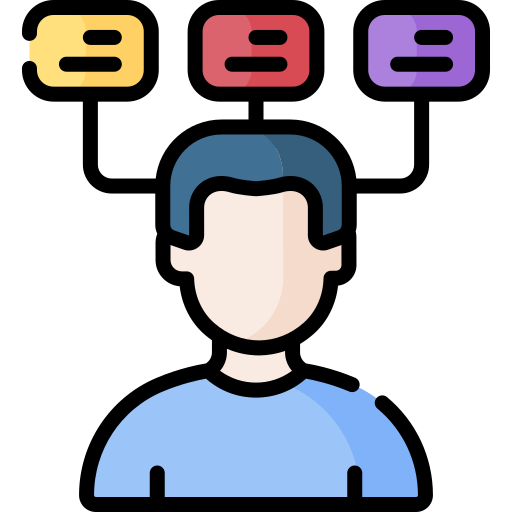}};
\node[lab] at (0.96,3.75) {$y$};
\node[lab, text=black!70, align=center, anchor=north] (cp) at (0.96,0.87) {constant passes\\[-2pt] fixed depth};
\node[font=\scriptsize\bfseries, anchor=south] at (5.2,4.3) {Degree of Seriality};
\foreach \fx/\ttl/\pattern/\ceil/\cons in {
  1.6/{(i) inherently}/{1,1,1,1,1,2}/{$\shal_n$ may be $\approx 1$}/{a model without a chain can reach accuracy $\approx 1$},
  4.05/{(ii) average-case}/{1,1,1,1,2,2}/{$\shal_n \le 1 - \varepsilon$}/{a model with accuracy $p$ earns at least $p - (1-\varepsilon)$ through its chain},
  6.5/{(iii) maximally}/{3,3,3,3,3,3}/{$\shal_n \approx b$}/{the accuracy gain above $b$ comes from the chain}} {
  \draw[rounded corners=3pt, draw=black!45, fill=black!3] (\fx,0.85) rectangle (\fx+2.35,4.2);
  \node[ttl, anchor=north] at (\fx+1.175,4.15) {\ttl};
  \node[lab, anchor=north] at (\fx+1.175,3.86) {$y(q_i)$ vs.\ target};
  \foreach \pat [count=\k] in \pattern {
    \pgfmathsetmacro{\r}{3.28-0.42*(\k-1)}
    \ifnum\pat=1 \node[ok] at (\fx+0.4,\r) {}; \tick{\fx+0.4}{\r} \fi
    \ifnum\pat=2 \node[bad] at (\fx+0.4,\r) {}; \cross{\fx+0.4}{\r} \fi
    \ifnum\pat=3 \node[guess] at (\fx+0.4,\r) {?}; \fi
  }
  \node[lab, align=left, text width=1.55cm, anchor=west] at (\fx+0.68,2.95) {\ceil};
  \node[lab, align=left, text=black!70, text width=1.55cm, anchor=west] at (\fx+0.68,1.85) {\cons};
}
\node[ok] (lg1) at (2.15,0.35) {}; \tick{2.15}{0.35} \node[lab, anchor=west] at (2.38,0.35) {can solve};
\node[bad] at (4.0,0.35) {}; \cross{4.0}{0.35} \node[lab, anchor=west] at (4.23,0.35) {cannot solve};
\node[guess] at (5.95,0.35) {?}; \node[lab, anchor=west] at (6.18,0.35) {can only guess};
\draw[->, black!45, densely dashed, line width=0.4pt] (cp.south) |- (lg1.west);
\end{tikzpicture}}
\end{minipage}
\caption{\emph{Left:} a transformer with $L = 3$ layers on three input tokens and two generated ones. Inside one forward pass a hidden state reads only earlier positions of the layer below, so information travels at most $L$ steps. A transformer is therefore \emph{shallow} when its output length is constant and not when it grows with the input. \emph{Right:} the three degrees of seriality of Definition~\ref{def:serial} for a task: a collection of inputs $q_i$ with targets $y(q_i)$.}
\label{fig:grid}
\vspace{-0.4cm}
\end{figure}

To study how CoT improves information processing in LLMs when handling complex tasks, we model one forward pass of a transformer with $L$ layers as a calculation conducted by a circuit of constant depth $L$, where ``constant'' means that the amount of parallelizable computation does not depend on the input size, and we call such a circuit a \emph{shallow circuit}. As shown in Figure~\ref{fig:grid} (left), increasing the input length and context window expands the circuit's width, but never increases the information processing depth. Depth can only be increased by producing a token and then feeding it back in as part of the input. As a result, generating $T$ output tokens causes the input information to be processed through $T$ sequential forward passes, rather than a single pass when there is no CoT. The intermediate tokens are the links that carry information forward from one pass to the next across the $T$ passes.

This is what makes reasoning powerful: with polynomially many intermediate tokens, a transformer can solve problems that no single forward pass can \citep{li2024cot, merrillsabharwal2024cot}. Existing results of this type, based on the circuit perspective on transformers \citep{merrillsabharwal2023parallelism, merrillsabharwal2025depth, amiri2025lower}, establish that for a given class of problems there exists a transformer with some chain length that solves it. We ask a different question, about one transformer that has already been trained by any algorithm: \emph{what can it do at test time without the content of its chain?}

A fixed transformer run a constant number of times is itself a shallow circuit (Lemma~\ref{lem:pipeline}), so its accuracy is bounded by the best accuracy any shallow circuit reaches on the task, the task's \emph{ceiling} $\shal_n$. A task is defined as \emph{average-case serial} if its ceiling is bounded away from one and \emph{maximally serial} if its ceiling is the guessing baseline (Section~\ref{sec:prelim}); for the word problem of the group $A_5$, the ceiling is the guessing baseline by a result of \citet{milesviola2013}, which makes it the test case of Section~\ref{sec:experiments}. Applying the lemma at the three places of Figure~\ref{fig:pipeline} (right) gives three theorems that hold for every transformer, however it was trained. \emph{Necessity} (Section~\ref{sec:necessity}): replace the chain by anything that does not depend on its content, such as filler tokens or a restatement of the question, and the accuracy falls to the ceiling, or to chance on a maximally serial task. \emph{Depth} (Section~\ref{sec:depth}): a chain that lifts accuracy above the ceiling cannot be written by any shallow computation, not even approximately, so its tokens must build on one another over a number of steps that grows with the input. \emph{Locality} (Section~\ref{sec:locality}): the answer head is one forward pass, so once the chain ends the answer is readable from it by a shallow circuit; all the serial work sits in the chain. None of this requires the chain to be faithful or human-readable. Appendix~\ref{sec:discussion} discusses what the theorems say about current reasoning models, from one-pass models to process reward models.

We test the theory at two scales (Section~\ref{sec:experiments}). On word problems of finite groups, whose ceilings are known, small transformers trained from scratch, with or without reinforcement learning, match the predicted numbers: with a chain they reach accuracy $1$ at every input length, and fall to chance when the chain is erased. Without a chain they collapse to the ceiling once the input outgrows their depth. Open-weight reasoning models given the same word problem in words also return to the baseline without their chain, and a chain of bounded length helps them less as the input grows. On AIME and MATH-500, where the ceiling is unknown, open reasoning models with 1.5B to 32B parameters lose 0.52 to 0.82 accuracy when the chain is erased. With only a prefix of the chain, the remaining accuracy depends on how often the kept prefix already states the answer. Their accuracy survives a shuffle of the sentences of the chain but not of its tokens, since a shallow decoder can still read the answer from the bag of sentences, and the same pattern holds for checkpoints trained by GRPO with a correct or a random reward. Together, they reveal that without its chain a transformer keeps exactly what one pass can compute, and everything above the ceiling of the task comes from the chain, whether the model was trained by supervision or by reinforcement learning.

Our contributions are summarized as follows.
\begin{itemize}
\item A framework that measures a task by its ceiling, the best accuracy of a shallow circuit. The ceiling bounds the accuracy that reasoning LLMs, small transformers, one-pass transformers and latent reasoners alike can reach without a chain. It also grades tasks by how much of the accuracy on them has to come from serial computation, which we call degrees of seriality (Section~\ref{sec:prelim}). We extend the worst-case notion of \citet{li2024cot} to the average case.
\item Three theorems on chains of thought, Necessity, Depth, and Locality, that hold for every transformer at test time regardless of how it was trained and bound what a trained transformer can do without the content of its chain (Sections~\ref{sec:necessity} to~\ref{sec:locality}).
\item Experiments on tasks with a known ceiling and on math reasoning benchmarks: small transformers trained from scratch, with or without RL, and open-weight reasoning models land on the predicted ceiling without their chains, and interventions on open reasoning models on mathematical benchmarks, including erasing, truncating, and shuffling the chain, all behave exactly as the theorems predict (Section~\ref{sec:experiments}).
\end{itemize}
\section{Transformers as shallow circuits}
\label{sec:prelim}

\newcommand{\icCeil}[3]{\node[inner sep=0pt, fill=white] at (#1,#2) {\includegraphics[height=#3\dimexpr0.5cm\relax]{figures/icons/shallow.png}};}
\newcommand{\icSucc}[3]{\node[inner sep=0pt, fill=white] at (#1,#2) {\includegraphics[height=#3\dimexpr0.5cm\relax]{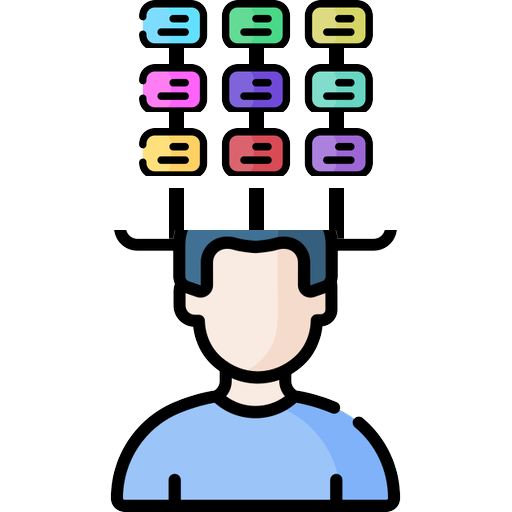}};}
\newcommand{\icDec}[3]{\node[inner sep=0pt, fill=white] at (#1,#2) {\includegraphics[height=#3\dimexpr0.5cm\relax]{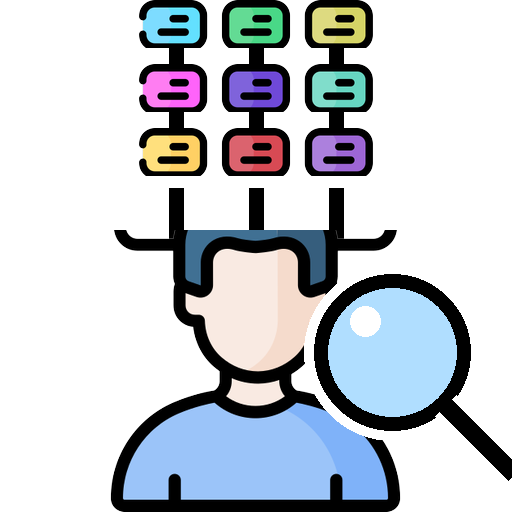}};}
\newcommand{\icErased}[3]{\node[inner sep=0pt, fill=white] at (#1,#2) {\includegraphics[height=#3\dimexpr0.5cm\relax]{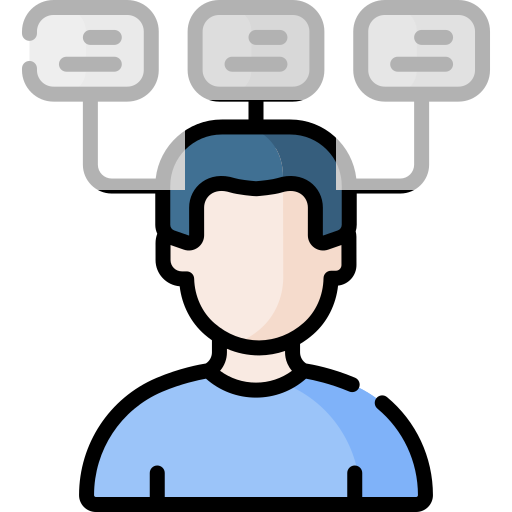}};}
\newcommand{\icDie}[3]{\node[inner sep=0pt, fill=white] at (#1,#2) {\includegraphics[height=#3\dimexpr0.5cm\relax]{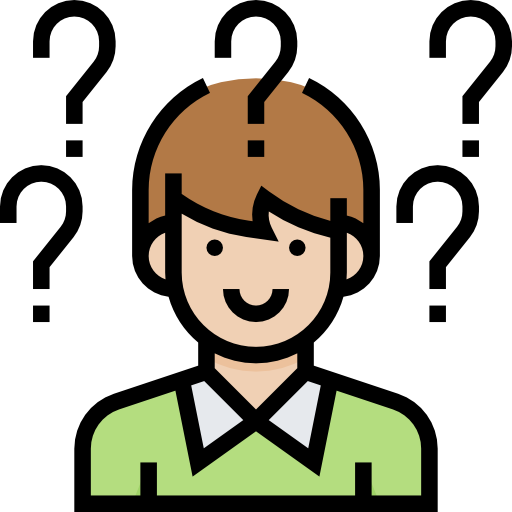}};}
\newcommand{\icCeilI}{\raisebox{-0.25\height}{\includegraphics[height=0.42cm]{figures/icons/shallow.png}}}
\newcommand{\icSuccI}{\raisebox{-0.25\height}{\includegraphics[height=0.42cm]{figures/icons/reasoning.png}}}
\newcommand{\icDecI}{\raisebox{-0.25\height}{\includegraphics[height=0.42cm]{figures/icons/decode.png}}}
\newcommand{\icDieI}{\raisebox{-0.25\height}{\includegraphics[height=0.42cm]{figures/icons/guess.png}}}
 
\begin{figure}[t]
\centering
\resizebox{0.86\linewidth}{!}{%
\begin{tikzpicture}[yscale=0.95, circuit logic US, circuit symbol unit=3.6pt, >={Stealth[length=3.2pt]}, font=\small,
  every circuit symbol/.style={line width=0.5pt, fill=white},
  wire/.style={black!55, line width=0.35pt},
  tok/.style={draw=orange!70!black, rounded corners=1.2pt, fill=orange!35, minimum width=0.42cm, minimum height=0.32cm, inner sep=1pt, font=\footnotesize},
  gtok/.style={tok, draw=black!50, fill=black!12},
  htok/.style={tok, draw=black!50, fill=black!4, pattern=north east lines, pattern color=black!35},
  xbox/.style={draw=black!55, fill=red!12, minimum width=0.36cm, minimum height=0.3cm, inner sep=0pt, font=\scriptsize},
  blk/.style={draw=black!60, rounded corners=2pt, minimum width=1.9cm, inner sep=0pt, font=\scriptsize, align=center},
  pass/.style={draw=black!60, rounded corners=1.5pt, fill=black!5, minimum width=0.46cm, minimum height=0.46cm, inner sep=0pt},
  arr/.style={->, line width=0.5pt},
  oarr/.style={->, orange!80!black, line width=0.6pt},
  eq/.style={black!50, densely dashed, line width=0.5pt},
  ttl/.style={font=\footnotesize\bfseries, inner sep=1pt},
  note/.style={font=\scriptsize, text=black!65, inner sep=1pt},
  cell/.style={font=\footnotesize, anchor=north west, align=left, inner sep=2pt, text width=2.3cm},
  panel/.style={rounded corners=3pt, line width=0.5pt}]
\newcommand{\minipass}[3]{\node[pass] (#1) at (#2,#3) {};
  \foreach \i in {-1,0,1} \foreach \j in {-0.5,0.5} \draw[black!35, line width=0.3pt] (#2+0.13*\i, #3-0.15) -- (#2+0.13*\j, #3);
  \foreach \j in {-0.5,0.5} \draw[black!35, line width=0.3pt] (#2+0.13*\j, #3) -- (#2, #3+0.15);
  \foreach \i in {-1,0,1} \fill[black!50] (#2+0.13*\i, #3-0.15) circle (0.027);
  \foreach \i in {-0.5,0.5} \fill[black!50] (#2+0.13*\i, #3) circle (0.027);
  \fill[black!50] (#2, #3+0.15) circle (0.027);}
\node[ttl] at (1.65,10.2) {Circuit of depth $L$};
\draw[panel, draw=black!50, fill=black!5] (0.3,5.60) rectangle (3.0,8.70);
\foreach \j in {0,...,4} \coordinate (in\j) at (0.6+0.5*\j,5.25);
\foreach \j/\lab in {0/$x_1$, 1/$x_2$, 2/$\cdots$, 4/$x_n$} \node[font=\scriptsize, anchor=north, inner sep=1pt] at (0.6+0.5*\j,5.15) {\lab};
\node[and gate, inputs={nnnnn}, point up, fill=blue!18] (g1) at (0.85,6.25) {};
\node[draw=black!60, circle, fill=violet!20, minimum size=0.44cm, inner sep=0pt, font=\tiny] (g2) at (1.65,6.25) {MAJ};
\node[or gate, inputs={nnnnn}, point up, fill=green!22] (g3) at (2.45,6.25) {};
\foreach \j in {1,...,5} { \pgfmathtruncatemacro{\k}{\j-1} \draw[wire] (in\k) -- (g1.input \j); \draw[wire] (in\k) -- (g3.input \j); \draw[wire] (in\k) -- (g2.south); }
\foreach \j in {0,...,4} \fill[black!60] (in\j) circle (0.03);
\node[or gate, inputs={nnn}, point up, fill=green!22] (h1) at (1.25,7.40) {};
\node[draw=black!60, circle, fill=violet!20, minimum size=0.44cm, inner sep=0pt, font=\tiny] (h2) at (2.1,7.40) {MAJ};
\foreach \g/\i in {g1/1, g2/2, g3/3} { \draw[wire] (\g.north) -- (h1.input \i); \draw[wire] (\g.north) -- (h2.south); }
\node[and gate, inputs={nn}, point up, fill=blue!18] (o1) at (1.65,8.35) {};
\draw[wire] (h1.north) -- (o1.input 1); \draw[wire] (h2.north) -- (o1.input 2);
\node[font=\tiny, text=black!70] at ([yshift=0.03cm]g1.center) {AND};
\node[font=\tiny, text=black!70] at ([yshift=0.03cm]g3.center) {OR};
\node[font=\tiny, text=black!70] at ([yshift=0.02cm]h1.center) {OR};
\draw[arr, black!70] (o1.output) -- (1.65,9.00);
\node[note, anchor=west] at (1.75,8.92) {output};
\draw[<->, line width=0.5pt] (3.2,5.85) -- (3.2,8.60) node[midway, font=\scriptsize, inner sep=1pt, rotate=90, anchor=north] {depth $L$};
\draw[rounded corners=3pt, draw=violet!30, fill=violet!5] (0.0,10.5) -- (0.0,11.85) -- (8.15,11.85) -- (8.15,4.95) -- (3.75,4.95) -- (3.75,10.5) -- cycle;
\node[ttl] at (5.7,11.58) {Transformer with $L$ layers};
\foreach \j/\lab in {0/$x_1$, 1/$x_2$, 2/, 3/, 4/$x_n$} \node[xbox] (X\j) at (4.75+0.5*\j,5.25) {\lab};
\node[font=\scriptsize] at (4.75+0.5*2,5.25) {$\cdots$};
\node[font=\scriptsize] at (4.75+0.5*3,5.25) {$\cdots$};
\draw[panel, draw=black!50, fill=black!5] (4.5,5.6) rectangle (7.4,8.7);
\node[blk, fill=orange!35, minimum height=0.5cm, minimum width=2.4cm] (att) at (5.95,6.15) {multi-head attention};
\node[blk, fill=yellow!45, minimum height=0.3cm, minimum width=2.4cm] (an1) at (5.95,6.85) {add \& norm};
\node[blk, fill=cyan!25, minimum height=0.5cm, minimum width=2.4cm] (ffn) at (5.95,7.5) {feed forward};
\node[blk, fill=yellow!45, minimum height=0.3cm, minimum width=2.4cm] (an2) at (5.95,8.2) {add \& norm};
\draw[line width=0.5pt] (5.95,5.42) -- (5.95,5.72);
\draw[line width=0.5pt] (5.65,5.72) -- (6.25,5.72);
\foreach \x in {5.65,5.95,6.25} \draw[arr] (\x,5.72) -- (\x,5.9);
\draw[arr] (att) -- (an1); \draw[arr] (an1) -- (ffn); \draw[arr] (ffn) -- (an2);
\draw[arr, black!60] (6.25,5.72) -- (7.25,5.72) |- (an1.east);
\draw[black!60, line width=0.5pt] (5.95,7.12) -- (7.25,7.12); \draw[arr, black!60] (7.25,7.12) |- (an2.east);
\node[font=\scriptsize, anchor=east, inner sep=1pt] at (4.45,7.15) {$L\times$};
\node[blk, fill=green!25, minimum height=0.3cm, minimum width=1.2cm] (sm) at (5.95,9.0) {decoder};
\draw[arr] (an2) -- (sm);
\draw[arr, black!70] (sm) -- (5.95,9.42);
\node[note, anchor=west] at (6.05,9.33) {next token};
 
\node[draw=black!55, densely dashed, rounded corners=2pt, fill=white, align=center, inner sep=2.5pt, font=\footnotesize] (bC) at (1.65,9.52) {\icCeilI\; $\shal_n$\\[-1pt] {\scriptsize best accuracy of any such circuit}};
\draw[black!50, dotted, line width=0.5pt] (1.65,9.0) -- (bC.south);
\draw[draw=black!55, densely dashed, rounded corners=2pt, fill=white] (0.1,10.6) rectangle (7.95,11.35);
\node[font=\footnotesize, anchor=west, inner sep=1pt] at (0.35,10.97) {\icCeilI\; $\shal_n$ {\scriptsize if constant passes}};
\node[font=\footnotesize, anchor=west, inner sep=1pt] (bS) at (4.1,10.97) {\icSuccI\; $\suc_n(\pi)$ {\scriptsize with reasoning}};
\draw[arr, black!65] (4.42,10.6) -- (4.42,10.3);
\node[font=\scriptsize, text=black!65, anchor=west, inner sep=1pt] at (4.55,10.45) {with the best decoder};
\draw[draw=black!55, densely dashed, rounded corners=2pt, fill=white] (3.9,9.55) rectangle (7.95,10.3);
\node[font=\footnotesize, anchor=west, inner sep=1pt] at (4.1,9.93) {\icDecI\; $\decode(\pi)$ {\scriptsize read off the chain}};
\draw[black!50, dotted, line width=0.5pt] (5.95,9.42) -- (5.95,9.55);
\draw[eq] (3.5,7.15) -- (4.05,7.15); \node[font=\small, fill=white, inner sep=1pt] at (3.8,7.15) {$=$};
\node[font=\footnotesize, align=center] at (3.78,4.80) {one evaluation of the circuit $=$ one forward pass (Fact~\ref{fact:onepass})};
 
\draw[panel, draw=black!45, fill=white] (0.1,0.25) rectangle (7.5,3.85);
\node[font=\footnotesize, anchor=west, fill=white, inner sep=2pt] at (0.3,3.85) {\textbf{Lemma~\ref{lem:pipeline}}, on a serial task};
\draw[panel, draw=black!55, densely dashed, fill=black!4] (0.3,2.95) rectangle (7.3,3.65);
\node[font=\footnotesize] (s0) at (0.6,3.3) {$x$};
\node[tok] (s1) at (1.15,3.3) {$c_1$}; \node[tok] (s2) at (1.7,3.3) {$c_2$};
\node[font=\footnotesize] (s3) at (2.2,3.3) {$a$};
\draw[oarr] (s0) -- (s1); \draw[oarr] (s1) -- (s2); \draw[arr] (s2) -- (s3);
\node[font=\footnotesize, anchor=west, inner sep=1pt] at (2.6,3.3) {$T = O(1)$ passes: \textbf{shallow}};
\draw[black!28, line width=3.2pt] (0.6,2.15) -- (3.2,2.15);
\draw[orange!80!black, line width=3.2pt] (3.2,2.15) -- (5.6,2.15);
\draw[green!45!black!60, line width=3.2pt] (5.6,2.15) -- (6.6,2.15);
\draw[line width=0.6pt] (0.6,2.15) -- (7.3,2.15);
\foreach \x in {0.6, 7.3} \draw[line width=0.6pt] (\x,2.05) -- (\x,2.25);
\icDie{1.2}{2.15}{1} \icCeil{3.2}{2.15}{1} \icSucc{5.6}{2.15}{1} \icDec{6.6}{2.15}{1}
\node[font=\footnotesize, anchor=north] at (0.6,1.88) {$0$};
\node[font=\footnotesize, anchor=north west, inner sep=1pt] at (1.42,1.95) {$b$};
\node[font=\footnotesize, anchor=north] at (3.2,1.88) {$\shal_n$};
\node[font=\footnotesize, anchor=south] at (5.6,2.42) {$\suc_n(\pi)$};
\node[font=\footnotesize, anchor=north] at (7.3,1.9) {$1$};
\node[font=\footnotesize, anchor=south] at (6.9,2.42) {$\decode(\pi)$};
\draw[black!60, line width=0.5pt] (3.2,2.95) -- (3.2,2.42);
\node[font=\scriptsize, anchor=east, inner sep=1pt] at (3.1,2.68) {$\Pr[\,a \text{ correct}\,] \le$};
\draw[orange!80!black, line width=0.5pt] (5.6,1.35) -- (5.6,1.88);
\node[font=\scriptsize, anchor=east, inner sep=1pt, text=orange!60!black] at (5.5,1.5) {$\Pr[\,a \text{ correct}\,]$};
\draw[panel, draw=orange!70!black, densely dashed, fill=orange!8] (0.3,0.55) rectangle (7.3,1.35);
\node[font=\scriptsize, fill=orange!8, inner sep=1.5pt, text=orange!50!black] at (1.45,1.35) {the reasoning model $\pi$};
\node[font=\footnotesize] (u0) at (0.6,0.95) {$x$};
\node[tok] (u1) at (1.15,0.95) {$c_1$}; \node[tok] (u2) at (1.7,0.95) {$c_2$};
\node[font=\footnotesize] (ud) at (2.15,0.95) {$\cdots$};
\node[tok] (uT) at (2.6,0.95) {$c_T$};
\node[font=\footnotesize] (ua) at (3.1,0.95) {$a$};
\draw[oarr] (u0) -- (u1); \draw[oarr] (u1) -- (u2); \draw[oarr] (u2) -- (ud); \draw[oarr] (ud) -- (uT); \draw[arr] (uT) -- (ua);
\node[font=\footnotesize, anchor=west, inner sep=1pt] at (3.45,0.95) {$T$ grows with $n$: \textbf{not shallow}};
\draw[->, line width=0.9pt, black!70, rounded corners=2pt] (6.0,4.6) -- (6.0,4.2) -- (7.72,4.2) -- (7.72,0.95) -- (7.32,0.95);
\node[font=\footnotesize, anchor=east, inner sep=1pt] at (5.9,4.2) {each pass generates one CoT token $c_i$};
\draw[panel, draw=red!45!black!50, fill=red!7] (8.35,8.5) rectangle (14.65,11.85);
\node[font=\footnotesize, anchor=north west, inner sep=2pt] at (8.4,11.80) {\textbf{Necessity} (Thm~\ref{thm:necessity})};
\node[note, anchor=north, align=center] at (8.95,11.35) {reasoning\\[-2pt] model};
\node[note, anchor=north, align=center, text=red!60!black] at (10.2,11.35) {erase $c$:\\[-2pt] filler, random\\[-2pt] tokens, template};
\node[note, anchor=north, align=center] at (11.55,11.35) {chain\\[-2pt] erased};
\node[note, anchor=north, align=center] at (13.8,11.35) {ceiling};
\node[font=\footnotesize] at (8.95,10.35) {$\pi$};
\node[font=\footnotesize, text=red!60!black] at (10.2,10.35) {$\Phi$};
\node[font=\footnotesize] at (11.55,10.35) {$\suc_n(\pi^{\Phi})$};
\node[font=\footnotesize] at (12.7,10.35) {$\le$};
\node[font=\footnotesize] at (13.8,10.35) {$\shal_n$};
\icSucc{8.95}{9.40}{1.5}
\draw[arr, red!70!black, line width=0.7pt] (9.4,9.40) -- (11.1,9.40);
\icErased{11.55}{9.40}{1.5}
\node[font=\scriptsize, align=center] at (12.7,9.40) {$\Pr[\,a \text{ correct}\,]$\\[-2pt] $\le$};
\icCeil{13.8}{9.40}{1.5}
\draw[panel, draw=blue!45!black!50, fill=blue!7] (8.35,4.9) rectangle (14.65,8.2);
\node[font=\footnotesize, anchor=north west, inner sep=2pt] at (8.4,7.90) {\textbf{Depth} (Thm~\ref{thm:depth})};
\node[note, anchor=north] at (13.05,7.62) {at least the gap};
\icCeil{8.9}{6.95}{1.3}
\node[font=\footnotesize, anchor=west, inner sep=1pt] at (9.22,6.95) {$\varphi$};
\draw[arr] (9.55,6.95) -- (10.1,6.95);
\node[tok] at (10.35,6.95) {$c_1$}; \node[tok] at (10.77,6.95) {$c_2$}; \node[tok] at (11.19,6.95) {$c_3$};
\node[font=\scriptsize, anchor=south west, inner sep=1pt] at (10.15,7.22) {$\varphi(x)$, one pass};
\icSucc{8.9}{5.55}{1.3}
\node[font=\footnotesize, anchor=west, inner sep=1pt] at (9.22,5.55) {$\pi$};
\draw[arr] (9.55,5.55) -- (10.1,5.55);
\node[tok] at (10.35,5.55) {$c_1$}; \node[font=\scriptsize] at (10.77,5.55) {$\cdots$}; \node[tok] at (11.19,5.55) {$c_T$};
\node[font=\scriptsize, anchor=south] at (10.77,5.80) {$\pi_c(\cdot \mid x)$};
\draw[<->, line width=0.5pt] (11.6,5.80) -- (11.6,6.75) node[at start, below, font=\scriptsize, inner sep=1.5pt] {TV};
\node[font=\footnotesize, anchor=west, inner sep=1pt] at (11.9,7.00) {$\eta_n := \mathbb{E}_x\,\mathrm{TV}$};
\node[font=\footnotesize, anchor=west, inner sep=1pt] at (11.9,6.30) {$\ge \suc_n - \shal_n$};
\draw[orange!80!black, line width=3.2pt] (12.55,5.45) -- (13.55,5.45); \icCeil{12.55}{5.45}{1.3} \icSucc{13.55}{5.45}{1.3}
\draw[panel, draw=green!45!black!50, fill=green!9] (8.35,0.25) rectangle (14.65,4.6);
\node[font=\footnotesize, anchor=north west, inner sep=2pt] at (8.4,4.55) {\textbf{Locality} (Thm~\ref{thm:locality})};
\newcommand{\phibadge}[2]{\node[circle, draw=red!60!black, fill=red!12, inner sep=0.4pt, font=\tiny, text=red!60!black] at (#1+0.24,#2+0.22) {$\Phi$};}
\node[note, anchor=south] at (12.75,4.0) {any intervention $\Phi$};
\draw[rounded corners=2pt, draw=green!45!black!50, fill=white] (9.8,2.85) rectangle (13.2,3.98);
\node[font=\footnotesize] at (11.5,3.75) {$\suc_n(\pi^{\Phi}) \le \decode_{\Phi}(\pi)$};
\icSucc{10.95}{3.18}{1.1} \phibadge{10.95}{3.18}
\node[font=\footnotesize] at (11.5,3.18) {$\le$};
\icDec{12.05}{3.18}{1.1} \phibadge{12.05}{3.18}
\draw[arr, black!65] (10.5,2.85) -- (9.95,1.95);
\node[note, anchor=east, align=right] at (10.05,2.5) {no intervention\\[-2pt] $\Phi$};
\draw[rounded corners=2pt, draw=green!45!black!50, fill=white] (8.45,0.55) rectangle (11.35,1.9);
\node[font=\footnotesize] at (9.9,1.65) {$\suc_n(\pi) \le \decode(\pi)$};
\icSucc{9.35}{1.05}{1.1}
\node[font=\footnotesize] at (9.9,1.05) {$\le$};
\icDec{10.45}{1.05}{1.1}
\draw[arr, black!65] (12.5,2.85) -- (13.05,1.95);
\node[note, anchor=west, align=left] at (12.95,2.5) {$\Phi$ is\\[-2pt] erasing};
\draw[rounded corners=2pt, draw=green!45!black!50, fill=white] (11.6,0.55) rectangle (14.55,1.9);
\node[font=\footnotesize] at (13.07,1.65) {$\decode_{\Phi}(\pi) \le \shal_n$};
\icDec{12.52}{1.05}{1.1} \phibadge{12.52}{1.05}
\node[font=\footnotesize] at (13.07,1.05) {$\le$};
\icCeil{13.62}{1.05}{1.1}
\end{tikzpicture}}

\caption{\emph{Top left:} one forward pass of a transformer with $L$ layers can be viewed as a circuit of constant depth $L$ with unbounded fan-in (Fact~\ref{fact:onepass}). The ceiling $\shal_n$ (\protect\icCeilI) is the best accuracy of any such shallow circuit, $\suc_n(\pi)$ (\protect\icSuccI) is the accuracy of the reasoning model, and $\decode(\pi)$ (\protect\icDecI) is what the best shallow decoder reads off the finished chain. \emph{Bottom left (Lemma~\ref{lem:pipeline}):} a constant number of passes is still shallow and lands below the ceiling; only a chain that grows with the input can rise above it. \emph{Right:} for every transformer, erasing the chain sends the accuracy back to the ceiling (Necessity), no shallow computation could write a chain close to the model's reasoning chain (Depth), and the finished chain already makes the answer readable in one pass (Locality).}
\label{fig:pipeline}
\vspace{-0.3cm}
\end{figure}

\paragraph{Circuits and depth.}
We use $n$ to denote the length of the input of a circuit, and $\poly(n)$ is the set of all quantities bounded by a polynomial in $n$. The \emph{depth} of a Boolean circuit is the length of the longest path from an input to the output. The class $\TCz$ consists of circuit families of size $\poly(n)$ that have constant depth and unbounded fan-in. The allowed gate types include AND, OR, NOT, and majority, and a majority gate enables counting within a single circuit layer. We call a circuit family \emph{shallow} if it lies in $\TCz$, or in $\ACz$ (under the constant-precision setting, detailed in Appendix~\ref{app:constprec}). The forward pass of a transformer follows precisely this structure (Figure~\ref{fig:pipeline}, top left), where its depth is fixed and does not depend on the input length $n$ (Figure~\ref{fig:grid}, left).

\begin{fact}[One forward pass of a transformer is shallow; \citealp{merrillsabharwal2023parallelism}]
\label{fact:onepass}
Take a transformer with a fixed (constant) depth, context length $\poly(n)$, and $O(\log n)$-bit numerical precision. For any given input, the result of a single forward pass can be computed by a circuit family in $\TCz$.
\end{fact}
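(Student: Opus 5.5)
The plan is to simulate the forward pass gate by gate, tracking two budgets throughout. Every intermediate quantity must be computed by a $\TCz$ circuit of size $\poly(n)$ from the $O(\log n)$-bit representations of the previous layer. Only a constant number of such circuits may be stacked in sequence. The transformer has a fixed number $L$ of layers, and each layer is a fixed composition of the same few operations: embedding, attention, residual addition, layer norm, feed-forward, and the final decoder. So it suffices to show that each operation maps $O(\log n)$-bit numbers at the $\poly(n)$ positions to $O(\log n)$-bit numbers at those positions in constant depth and polynomial size. Composing $O(L)$ such blocks then gives constant depth, since $L$ does not depend on $n$, and polynomial size. The embedding is a lookup on a constant vocabulary plus a positional term indexed by an $O(\log n)$-bit position; it is a finite table per bit and hence depth $O(1)$.

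The key steps, in order, are as follows. First, I will recall the standard $\TCz$ arithmetic toolkit on $O(\log n)$-bit numbers: addition, multiplication and comparison of two numbers, iterated addition of $\poly(n)$ numbers (threshold gates make this constant depth), and division. Any function of a constant number of $O(\log n)$-bit inputs, such as $\exp$, a GELU, or an inverse square root rounded to precision, is a table on $O(\log n)$ input bits. That table is a DNF of size $2^{O(\log n)}=\poly(n)$ and depth $2$. Second, the feed-forward block, residual, and layer norm act positionwise on vectors of constant dimension. They need only constant-length inner products, one table lookup per nonlinearity, and a mean and variance over constantly many coordinates, so each is in $\TCz$ by the toolkit. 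Third, for attention at a position $i$, each score $q_i\cdot k_j$ is a constant-dimension inner product, computed in parallel for all $j\le i$. Each $\exp$ is a table lookup. The normalizer $\sum_j e^{s_{ij}}$ and the weighted sum $\sum_j e^{s_{ij}}v_j$ are iterated additions of $\poly(n)$ terms. A final division, rounded to $O(\log n)$ bits, gives the output. Fourth, the decoder is a linear map plus an argmax over a constant vocabulary, which is done with comparisons in constant depth.

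The main obstacle is the attention step. It is the only place where a value depends on $\poly(n)$ positions at once, and the precision assumption has to be handled carefully there. Summing $\poly(n)$ numbers of $O(\log n)$ bits yields $O(\log n)$-bit sums, which is fine for iterated addition in $\TCz$. The issue is the rounding convention: the statement fixes $O(\log n)$-bit precision, and the model must round after each operation, or at least at the end of each attention head, for the bit-length not to grow across layers. I will adopt the convention of \citet{merrillsabharwal2023parallelism}, which rounds every intermediate result to $O(\log n)$ bits, and check that iterated addition followed by rounding remains in $\TCz$. It does: exact iterated addition is in $\TCz$, and truncation is a projection of bits. With this convention, each layer's output again has $O(\log n)$ bits per coordinate, the induction over the $L$ layers closes, and the total circuit has depth $O(L)=O(1)$ and size $\poly(n)$.
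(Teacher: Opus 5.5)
The paper does not prove this fact itself but imports it from \citet{merrillsabharwal2023parallelism}, and your plan is essentially the argument of that source: a polynomial-size lookup table for every operation on $O(\log n)$ bits, iterated addition in $\TCz$ for the attention normalizer and weighted sum (exact sum, then a single rounding), and composition over the constant number of layers. Your proof is correct for a fixed transformer, and the polynomial-width variant stated in Fact~\ref{fact:tc0} follows the same way once the constant-length inner products and layer-norm statistics are also computed by iterated addition.
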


In the following discussion, a transformer means a model as in Fact~\ref{fact:onepass}, and the theorems of Sections~\ref{sec:necessity} to~\ref{sec:locality} are stated for such models.

\paragraph{Pipelines and the ceiling.}

\begin{definition}[Pipeline]
\label{def:pipeline}
Fix a transformer $\pi$. A \emph{constant-pass pipeline} is a randomized procedure that maps an input $x$ to an answer in a \emph{constant} number of stages. Each stage processes $x$ and the outputs of the earlier stages, either by applying a shallow randomized map or by running one forward pass of $\pi$. The output of the last stage is the answer.
\end{definition}

Lemma~\ref{lem:pipeline} below shows that every pipeline is a shallow circuit (Figure~\ref{fig:pipeline}, bottom left), so the best accuracy that a shallow circuit of the same depth and size reaches on a task bounds every pipeline on that task. For constants $d$ and $s$, let $\mathcal{C}_n(d,s)$ be the set of randomized circuits of depth at most $d$ and size at most $n^s$ built from the gates of $\TCz$; a shallow randomized circuit family is a sequence $\{C_n\}$ with $C_n \in \mathcal{C}_n(d,s)$ for some fixed $d$ and $s$. For a task with target $y$ and input distribution $\mathcal{D}_n$, the \emph{ceiling} of the task at depth $d$ and size $n^s$ is
\[
\shal_n(d,s) \;:=\; \sup_{C \in \mathcal{C}_n(d,s)}\ \Pr_{x \sim \mathcal{D}_n,\ r}\big[\,C(x, r) = y(x)\,\big],
\]
where $r$ is the random string of the circuit. We write $\shal_n$ without the two constants when they are clear from the context, and Appendix~\ref{app:onepass} states how such a bound is read.

\begin{lemma}[Pipelines are shallow]
\label{lem:pipeline}
For every transformer $\pi$ and every constant-pass pipeline, there exist constants $d$ and $s$, depending only on $\pi$ and on the pipeline, and a family of shallow randomized circuits $C_n \in \mathcal{C}_n(d,s)$ whose output distribution on every input matches the pipeline’s answer distribution. Consequently, on every task and for every input distribution, the success of the pipeline is at most $\shal_n(d,s)$.
\end{lemma}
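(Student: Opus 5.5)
The plan is to compose circuits stage by stage, tracking depth and size as sums of constants. Fix $\pi$ and a pipeline with a constant number $k$ of stages. The output of stage $j$ is a string of length at most $\poly(n)$; a forward pass emits one token of $O(\log n)$ bits, and a shallow randomized map has $\poly(n)$ size and so at most $\poly(n)$ output bits. Stage $j$ reads $x$ together with the outputs of stages $1,\dots,j-1$, so by induction its input length is $n_j \le n^{t_j}$ for a constant $t_j$ that depends only on the pipeline.

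Next, each stage gets its own randomized circuit. A forward-pass stage uses Fact~\ref{fact:onepass}, instantiated at input length $n_j$. It gives a $\TCz$ circuit of depth $d_\pi$, which depends only on $\pi$, and size $n_j^{s_\pi}\le n^{t_j s_\pi}$; the context stays $\poly(n)$ and the precision stays $O(\log n)$. Sampling the next token from the softmax output is also shallow. Using fresh random bits $r_j$, the circuit compares $r_j$ against the $O(\log n)$-bit cumulative probabilities of the vocabulary entries and selects the first index that exceeds it. This uses comparisons and iterated addition, which are in $\TCz$, at constant extra depth. The draw then has exactly the distribution of the finite-precision model. A shallow-map stage is a randomized circuit in $\mathcal{C}_{n_j}(d_j,s_j)$ by definition, with its own independent random string $r_j$.

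Then the stages are wired in series. The output wires of stages $1,\dots,j-1$, together with the input $x$, feed the input wires of stage $j$, and the answer is read from stage $k$. The composite has depth at most $d=\sum_j (d_j+O(1))$ and size at most $\sum_j n^{t_j s_j}\le n^{s}$, both constants independent of $n$. Its random string is $r=(r_1,\dots,r_k)$. Since the stages use independent randomness, conditioning on $x$ and the earlier outputs shows inductively that each stage's output has the same conditional law in the circuit as in the pipeline. So the answer distributions agree on every input. For the consequence, success is an average over $x\sim\mathcal{D}_n$ and $r$ of $\ind[C_n(x,r)=y(x)]$, and it is at most the supremum defining $\shal_n(d,s)$.

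I expect the main obstacle to be the exact distributional match in the sampling step. The softmax probabilities need not be dyadic, so with finitely many random bits the circuit can only reproduce the pipeline's law exactly if the pipeline's own sampler is the finite-precision one, namely $O(\log n)$-bit probabilities sampled with uniform bits. I would adopt that as the model of the pipeline. Under the paper's precision assumption this is consistent, and otherwise the match holds only up to $1/\poly(n)$ total variation. A smaller point is to keep every $t_j$ constant, which holds because $k$ is constant and each stage grows the input only polynomially.
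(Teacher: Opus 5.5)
Your proposal is correct and follows essentially the same route as the paper: compose the constant number of stages, use Fact~\ref{fact:onepass} for each forward pass, sample with a prefix-sum-and-compare circuit on fresh random bits, sum the depths and sizes, and read the success bound off the definition of $\shal_n(d,s)$. Your caveat about exactness of the sampler is what the paper settles by convention in Remark~\ref{rem:sampling}: the finite-precision sampler reproduces the model's law exactly, and ideal softmax sampling is matched only up to a small total-variation term.
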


We include the detailed proof in Appendix~\ref{app:onepass}.

\paragraph{Three degrees of seriality.}
It is therefore important to identify which tasks have an upper bound that is strictly below one, indicating that the tasks cannot be solved with only a constant number of forward passes. Based on the fraction of inputs that pipelines can solve, we define three levels of \emph{seriality} as a property of the task (Figure~\ref{fig:grid}, right).

\begin{definition}[Degrees of seriality]
\label{def:serial}
For each input length $n$, a task consists of a target function $y$ and a distribution $\mathcal{D}_n$ over inputs, and the accuracy of any procedure on the task is the probability that it outputs $y(x)$ for $x \sim \mathcal{D}_n$. Let $b$ be the \emph{guessing baseline} of the task. The task is
\begin{itemize}
\item[(i)] \emph{inherently serial}, in the sense of \citet{li2024cot}, if no shallow circuit family computes $y$ on every input;
\item[(ii)] \emph{average-case serial} if its ceiling is bounded away from one, that is, if there is a constant $\eps > 0$ such that for every $d$ and $s$, $\shal_n(d,s) \le 1 - \eps$ for all sufficiently large $n$, so that every pipeline fails on at least an $\eps$ fraction of the inputs;
\item[(iii)] \emph{maximally serial} if its ceiling is the guessing baseline up to a negligible term, that is, if for every $d$ and $s$ and every constant $k$, $\shal_n(d,s) \le b + n^{-k}$ for all sufficiently large $n$, so that every pipeline does no better than guessing.
\end{itemize}
In (ii) and (iii) the threshold on $n$ may depend on $d$ and $s$.
\end{definition}

Condition (i) is a worst-case notion and does not depend on the input distribution. Conditions (ii) and (iii) are average-case notions, since the ceiling is an accuracy averaged over the input distribution of the task.

Our argument focuses on the second and third degrees, because on a task that is only inherently serial, a shallow circuit can succeed on all but a vanishing fraction of the inputs, so a model without a chain can reach accuracy close to one. Figure~\ref{fig:grid} (right) shows how much of the accuracy of a model has to come from its chain at each degree. The serial degrees are properties of the task, and we study what they imply for every model that tries to solve it.

Note that the degrees are relative to the precision setting, because shallow means $\TCz$ at log precision and $\ACz$ at constant precision. The main text uses the log-precision setting, following \citet{merrillsabharwal2023parallelism, merrillsabharwal2024cot}, and Appendix~\ref{app:parity} gives the constant-precision version of every result in the setting of \citet{li2024cot}.
\section{Necessity: the accuracy rests on the content of the chain}
\label{sec:necessity}

We first examine what Lemma~\ref{lem:pipeline} tells us about the necessity of CoT, by analyzing the performance drops upon replacing the chain after it has been produced. We argue that when the replacement does not depend on the content of the chain, the replacement and the answer head form a two-stage (constant-pass) pipeline, and the accuracy of every reasoning model thus drops to the ceiling of the task, no matter how it was trained.

\paragraph{Interventions and the load-bearing gap.}
Define an \emph{intervention} $\Phi$ on the CoT as a shallow randomized procedure that takes the input $x$ and the generated chain $c$ and returns a modified chain $\tilde c = \Phi(x, c)$ of length at most $\poly(n)$. The \emph{intervened model} $\pi^\Phi$ runs $\pi$ as a reasoning model, for $T$ forward passes to generate $c$, then replaces $c$ by $\tilde c$ and feeds it into the prompt to sample the answer from $\pi(\cdot \mid x, \tilde c)$, as illustrated by Figure~\ref{fig:interventions} in Appendix~\ref{app:proofs-necessity}. We study the \emph{load-bearing gap} $\gap_\Phi(\pi) := \suc_n(\pi) - \suc_n(\pi^\Phi)$, the accuracy that the model loses when its chain is replaced. An intervention is \emph{erasing} if it does not read the chain, that is, $\Phi(x, c) = \varphi(x)$ for a shallow randomized map $\varphi$ (Figure~\ref{fig:pipeline}, right). Interventions that read the chain, such as truncation and shuffling, are treated in Section~\ref{sec:locality}.

\begin{theorem}[Necessity]
\label{thm:necessity}
Let $\pi$ be any transformer and let $\Phi$ be any erasing intervention. There are constants $d$ and $s$, depending only on $\pi$ and $\Phi$, such that on every task and for every input distribution, $\suc_n(\pi^\Phi) \leq \shal_n(d,s)$, and hence $\gap_\Phi(\pi) \geq \suc_n(\pi) - \shal_n(d,s)$.
\end{theorem}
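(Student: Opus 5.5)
The plan is to reduce the statement to Lemma~\ref{lem:pipeline} by showing that, as far as the answer is concerned, the intervened model $\pi^\Phi$ is a constant-pass pipeline. The point to get right first is that $\pi^\Phi$, as defined, still runs $\pi$ for $T$ forward passes to generate $c$, and $T$ may grow with $n$. However, an erasing intervention satisfies $\Phi(x,c)=\varphi(x)$, so $c$ never influences the answer. I will argue that the answer distribution of $\pi^\Phi$ on each input $x$ equals that of the procedure which skips generation, computes $\tilde c=\varphi(x)$, and then samples $a\sim\pi(\cdot\mid x,\tilde c)$.

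To justify this, I would condition on $c$. Since $\varphi$ uses randomness independent of the sampling of $c$, the pair $(c,\tilde c)$ has product law $\pi_c(\cdot\mid x)\otimes\varphi(x)$. The answer depends only on $(x,\tilde c)$, so marginalizing over $c$ leaves the law of $a$ unchanged. Hence $\suc_n(\pi^\Phi)$ equals the success of the two-stage procedure for every task and input distribution.

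Next I would check that this two-stage procedure is a constant-pass pipeline in the sense of Definition~\ref{def:pipeline}.
\begin{itemize}
\item Stage one applies the shallow randomized map $\varphi$ to $x$, producing $\tilde c$ of length $\poly(n)$.
\item Stage two runs one forward pass of $\pi$ on the context $(x,\tilde c)$, whose length is $\poly(n)$, and samples the answer token.
\end{itemize}
If the answer consists of several tokens, I would allow a constant number of further forward passes, still a constant number of stages. Two side conditions are needed. The total context must fit within the $\poly(n)$ context length of Fact~\ref{fact:onepass}, which holds because $|\tilde c|\le\poly(n)$. The sampling step must also be realizable by a shallow randomized circuit; I expect the proof of Lemma~\ref{lem:pipeline} in Appendix~\ref{app:onepass} to cover this when it treats a forward pass as a stage.

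Lemma~\ref{lem:pipeline} then supplies constants $d,s$, depending only on $\pi$ and on the pipeline (hence only on $\pi$ and $\Phi$), together with shallow randomized circuits $C_n\in\mathcal{C}_n(d,s)$ matching the answer distribution. This gives $\suc_n(\pi^\Phi)\le\shal_n(d,s)$ for every task and distribution. The bound on the gap follows by subtracting this from $\suc_n(\pi)$.

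The main obstacle is the first step: making rigorous that the possibly long, non-shallow generation of $c$ can be dropped. This requires the randomness of $\varphi$ to be independent of $c$, and the answer head to see only $(x,\tilde c)$ and no residual state from the original run. Both follow from the definitions, but I would state them explicitly.
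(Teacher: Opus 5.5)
Your proposal is correct and takes essentially the same route as the paper: you marginalize over the discarded chain $c$, so that the answer of $\pi^\Phi$ has the law of the two-stage pipeline ``apply $\varphi_n$, then run one forward pass of $\pi$ and sample'', and you then invoke Lemma~\ref{lem:pipeline}. Your side remarks also match the paper's treatment: the randomness of the erasing intervention is independent of $c$, and multi-token answers are handled as Remark~\ref{rem:multitoken} does.
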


On a maximally serial task, the accuracy of a model rests entirely on the content of its chain. Training can only change what the model writes in its chain. It can never produce a model that keeps its accuracy without reasoning, because the theorem constrains what any transformer can compute at test time. 

Appendix~\ref{sec:discussion} discusses what our theorems imply about current reasoning models, including one-pass models such as Jev, latent reasoning, diffusion decoding, chain compression, and process reward models.

\section{Depth: no shallow computation can write the chain}
\label{sec:depth}

In Section~\ref{sec:necessity}, Theorem~\ref{thm:necessity} bounds the accuracy of a model after its chain has been replaced. In this section, we ask what happens if a shallow map produces chains close to the model's own. We argue that the accuracy changes by at most the distance between the two chain distributions, and the swapped-in version is capped by the ceiling, so the chain of a successful reasoning model cannot be replaced by a shallow computation.

Define a \emph{shallow imitator} of $\pi$ to be a shallow randomized family $\varphi$ that maps an input $x$ to a chain, and its \emph{imitation error} $\eta_n := \E_x\, \TV\big(\varphi(x), \pi_c(\cdot \mid x)\big)$ is the average distance between its chains and the chains $\pi_c(\cdot \mid x)$ that the model generates. Here $\TV(P, Q) := \max_A |P(A) - Q(A)|$ is the total variation distance, the largest difference in probability that $P$ and $Q$ assign to any set of chains; it is $0$ when the two distributions are identical and $1$ when no chain can be produced by both.

\begin{theorem}[Depth]
\label{thm:depth}
Let $\pi$ be any transformer and let $\varphi$ be any shallow imitator of $\pi$ with imitation error $\eta_n$. There are constants $d$ and $s$, depending only on $\pi$ and $\varphi$, such that on every task and for every input distribution, $\suc_n(\pi) - \shal_n(d,s) \leq \eta_n$.
\end{theorem}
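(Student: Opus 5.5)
The plan is to compare the model $\pi$ with a hybrid pipeline $\pi^{\varphi}$. This hybrid first draws a chain $\tilde c \sim \varphi(x)$ from the shallow imitator, runs no reasoning passes at all, and then samples the answer from the answer head $\pi(\cdot \mid x, \tilde c)$. It is the erasing intervention $\Phi(x,c) = \varphi(x)$ of Section~\ref{sec:necessity}, and we may also read it as a two-stage constant-pass pipeline in the sense of Definition~\ref{def:pipeline}. Its first stage is the shallow randomized map $\varphi$, and its second stage is the answer-producing forward pass(es) of $\pi$. The argument then splits into two steps: an upper bound on $\suc_n(\pi^{\varphi})$ by the ceiling, and a lower bound on $\suc_n(\pi^{\varphi})$ by $\suc_n(\pi) - \eta_n$.

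\textbf{Step 1 (ceiling).} Lemma~\ref{lem:pipeline}, or equivalently Theorem~\ref{thm:necessity} applied to the erasing intervention induced by $\varphi$, gives constants $d,s$ that depend only on $\pi$ and $\varphi$ and satisfy $\suc_n(\pi^{\varphi}) \le \shal_n(d,s)$ on every task and every input distribution. The points to check are the following. First, $\varphi$ outputs chains of length $\poly(n)$, so the context stays polynomial and Fact~\ref{fact:onepass} applies. Second, the answer head uses only a constant number of passes; if the answer has constant length, this means a constant number of sampled tokens.

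\textbf{Step 2 (coupling).} Fix an input $x$ and let $S_x$ be the set of chains $c$. Define $g_x(c) := \Pr_{a \sim \pi(\cdot \mid x, c)}[a = y(x)] \in [0,1]$. Then
\[
\Pr[\pi \text{ correct on } x] = \E_{c \sim \pi_c(\cdot\mid x)} g_x(c), \qquad \Pr[\pi^{\varphi} \text{ correct on } x] = \E_{c \sim \varphi(x)} g_x(c).
\]
For any two distributions $P,Q$ and any function $g$ with values in $[0,1]$, the standard inequality $|\E_P g - \E_Q g| \le \TV(P,Q)$ holds. One proof writes $g = \int_0^1 \ind[g > t]\,dt$ and bounds each level set by the maximum over events in the definition of $\TV$. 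Applying it here gives
\[
\Pr[\pi \text{ correct on } x] \le \Pr[\pi^{\varphi} \text{ correct on } x] + \TV\big(\varphi(x), \pi_c(\cdot\mid x)\big).
\]
Averaging over $x \sim \mathcal{D}_n$ yields $\suc_n(\pi) \le \suc_n(\pi^{\varphi}) + \eta_n$, and combining this with Step 1 proves the theorem.

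The main obstacle is Step 1, in particular making sure that $d,s$ do not depend on $n$ or on the task. This requires the answer head to be a bounded number of forward passes on a polynomial-length context. It also requires that the sampling randomness of $\varphi$ and of the answer head can both be supplied by the circuit's random string $r$, so that the composite is a single member of $\mathcal{C}_n(d,s)$. I would handle both points by invoking Lemma~\ref{lem:pipeline} directly rather than rebuilding the circuit. A second, smaller subtlety is that $\varphi(x)$ and $\pi_c(\cdot\mid x)$ must live on a common space of chains for $\TV$ to make sense. I would take that space to be all token strings of length $\poly(n)$, padding if necessary, which the definition of $\eta_n$ implicitly assumes.
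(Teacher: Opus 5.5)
Your proposal is correct and follows the paper's proof essentially step for step: the paper also treats $\varphi$ as the erasing intervention $\Phi(x,c) := \varphi(x)$, bounds $\suc_n(\pi^\Phi)$ by $\shal_n(d,s)$ via Theorem~\ref{thm:necessity} (that is, Lemma~\ref{lem:pipeline}), and uses $|\E_P h - \E_Q h| \le \TV(P,Q)$ on $h_x(c) = \pi(y(x) \mid x, c)$, averaged over $x$, to get $\suc_n(\pi^\Phi) \ge \suc_n(\pi) - \eta_n$. Your remarks on the common space of chains and on multi-token answers are consistent with the paper's conventions (cf.\ Remark~\ref{rem:multitoken}).
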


Because the total variation distance between a chain produced by shallow generation and the original chain from the successful reasoner is at least as large as the performance gap between that reasoner and a pipeline, it follows that the output chains of an accurate model are not even approximately close to the set of chains that any shallow computation can produce. Shallow computation can only generate chains where each token depends on previous tokens through a constant-bounded number of rounds. This further indicates that a successful model has to produce chains whose tokens are built on one another over a number of rounds that increases with the input.
\section{Locality: all of the multi-step work happens in the chain}
\label{sec:locality}

Finally, we apply Lemma~\ref{lem:pipeline} to the answer head. It produces the answer with one forward pass after the chain ends, so it is shallow and cannot contribute any serial work. We therefore conclude that by the time the chain is generated, the answer must already be computable by a shallow circuit from the input and the finished chain. 

\begin{definition}[Shallow decodability]
\label{def:decodability}
A \emph{decoder} is a shallow randomized circuit that predicts the target $y(x)$ from the input $x$ and a chain. For a transformer $\pi$, an intervention $\Phi$, and constants $d$ and $s$, the \emph{shallow decodability} $\decode_\Phi(\pi; d,s)$ is the highest accuracy that any decoder in $\mathcal{C}_n(d,s)$ reaches on $x$ and the intervened chain $\Phi(x, c)$, where $c$ is the chain that the model generates; $\decode(\pi; d,s)$ denotes the same quantity when no intervention is applied, which is the highest accuracy that any such decoder reaches from $x$ and the chain the model generates. As for the ceiling, we write $\decode_\Phi(\pi)$ and $\decode(\pi)$ when the two constants are clear from the context.
\end{definition}

\begin{theorem}[Locality]
\label{thm:locality}
For every transformer $\pi$ there are constants $d$ and $s$, depending only on $\pi$, such that for every intervention $\Phi$, every task, and every input distribution, $\suc_n(\pi^\Phi) \leq \decode_\Phi(\pi; d,s)$; in particular $\suc_n(\pi) \leq \decode(\pi; d,s)$. Moreover, when the intervention $\Phi$ is erasing (Section~\ref{sec:necessity}), for every $d$ and $s$ there are constants $d'$ and $s'$, depending only on $d$, $s$, and $\Phi$, such that $\decode_\Phi(\pi; d,s) \leq \shal_n(d',s')$.
\end{theorem}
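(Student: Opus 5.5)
The first claim follows from Fact~\ref{fact:onepass} applied to the answer head. Fix $\pi$. Its answer head samples $a \sim \pi(\cdot \mid x, \tilde c)$, which is one forward pass on the concatenated input $(x,\tilde c)$ followed by sampling from the output distribution. The context has length $\poly(n)$, since $\tilde c$ has length at most $\poly(n)$ by the definition of an intervention. By Fact~\ref{fact:onepass} this pass is computed by a $\TCz$ family whose depth and size exponent depend only on $\pi$. Sampling from the resulting $O(\log n)$-bit distribution with a random string adds only constant depth; I will reuse the same construction as in the proof of Lemma~\ref{lem:pipeline}. This gives constants $d, s$, depending only on $\pi$, and a randomized circuit $D_n \in \mathcal{C}_n(d,s)$ with $D_n(x,\tilde c, r) \sim \pi(\cdot \mid x,\tilde c)$ for every pair $(x,\tilde c)$.

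This $D_n$ is therefore a decoder in the sense of Definition~\ref{def:decodability}. Its accuracy on $(x, \Phi(x,c))$, with $c \sim \pi_c(\cdot\mid x)$, is exactly $\suc_n(\pi^\Phi)$. Hence $\suc_n(\pi^\Phi) \le \decode_\Phi(\pi; d,s)$, and because $d,s$ were chosen before $\Phi$, they are uniform over all interventions, tasks and input distributions. Taking $\Phi$ to be the identity gives $\suc_n(\pi) \le \decode(\pi;d,s)$.

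For the second claim, let $\Phi$ be erasing, so $\Phi(x,c) = \varphi(x)$ for a shallow randomized map $\varphi$. The chain $c$ then never affects what the decoder sees. Any decoder $D \in \mathcal{C}_n(d,s)$ run on $(x,\Phi(x,c))$ therefore has the same output distribution as the composed circuit $x \mapsto D(x, \varphi(x; r_1); r_2)$, which uses independent randomness $r_1, r_2$. Composing the two circuits, by wiring the outputs of $\varphi$'s circuit into the chain inputs of $D$, gives depth at most $d + d_\varphi$. The size is at most $n^s$ evaluated at the $\poly(n)$ input length, plus $\varphi$'s size, so it is bounded by $n^{s'}$ for a constant $s'$ determined by $s$ and $\varphi$'s parameters. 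The composed circuit is a shallow randomized circuit on input $x$ with accuracy equal to that of $D$, so it is bounded by $\shal_n(d',s')$. Taking the supremum over $D$ gives $\decode_\Phi(\pi;d,s) \le \shal_n(d',s')$.

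The main obstacle is bookkeeping the size parameter. A decoder's size is measured relative to its input, and that input now has length $m = n + \poly(n)$. The cleanest approach is to fix the convention that $\mathcal{C}_n(d,s)$ for decoders means size at most $n^s$ in the task length $n$. If sizes are instead measured in $m$, then $m^s \le n^{cs}$ for a constant $c$ depending on $\Phi$'s output length, and this is absorbed into $s'$. A secondary point is making sure that sampling from the answer distribution is itself shallow, which I handle exactly as in Lemma~\ref{lem:pipeline}.
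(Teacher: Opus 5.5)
Your proposal is correct and is essentially the paper's proof: the answer head (one forward pass plus the sampler from the proof of Lemma~\ref{lem:pipeline}) is itself a decoder in $\mathcal{C}_n(d,s)$ whose accuracy on $(x,\Phi(x,c))$ is exactly $\suc_n(\pi^\Phi)$, and for an erasing $\Phi$ any decoder composed with $\varphi$ is a two-stage shallow circuit on $x$, hence bounded by $\shal_n(d',s')$. The one caveat, which the paper also leaves implicit, concerns your ``cleanest'' convention of measuring decoder size in the task length $n$: under it, the claim that $s$ depends only on $\pi$ uniformly over all $\Phi$ needs the intervened chain to fit in $\pi$'s fixed $\poly(n)$ context window, whereas measuring size in the decoder's input length, as in your alternative, avoids this.
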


The theorem tells us when a reasoning model has its answer. The answer is committed by the time the chain ends, in the sense that a shallow computation with no reasoning ability of its own could read it from the input and the finished chain.

\section{Experiments}
\label{sec:experiments}

We conduct two sets of experiments: (1) We train small transformers on word problems of finite groups, whose ceilings are known, so every intervened accuracy in Sections~\ref{sec:necessity} through~\ref{sec:locality} has an exact predicted value. (2) We apply the interventions of Sections~\ref{sec:necessity} and~\ref{sec:locality} to open-source reasoning models, first on the word problem of $A_5$, and then on mathematical benchmarks, whose ceiling is unknown. The full setup is given in Appendix~\ref{app:exp}, and additional results are given in Appendix~\ref{app:more-exp}.

\subsection{Tasks with a known ceiling: small transformers trained from scratch on word problems}
\label{sec:exp-toy}
For a finite group $G$, the word problem $\WP_G$ at length $n$ asks for the product $y(x) = g_1 g_2 \cdots g_n$ of an input $x = (g_1, \dots, g_n)$ drawn uniformly from $G^n$ unless stated otherwise, with each $g_i$ encoded as one token, so $b = 1/|G|$, and a chain that writes the running product once per element solves it exactly. Its ceiling is known, and it reaches each of the three degrees when defined on different groups \citep{liu2023shortcuts, merrill2024illusion}, as detailed in Appendix~\ref{app:degrees}: the maximally serial case is $\WP_n$, the word problem of $A_5$ on uniform inputs; the average-case serial case is the word problem of $S_5$; and the non-serial cases, whose ceiling is 1, are the solvable groups $\mathbb{Z}_{60}$ and $A_4 \times \mathbb{Z}_5$ (Table~\ref{tab:prices} in Appendix~\ref{app:proofs-necessity}). In the maximally serial case the ceiling is $\tfrac{1}{60} + n^{-k}$ (Lemma~\ref{lem:avgcase}).

\begin{figure}[t]
\centering
{\offinterlineskip
\includegraphics[width=\textwidth, trim=0 254.1bp 0 0.0bp, clip]{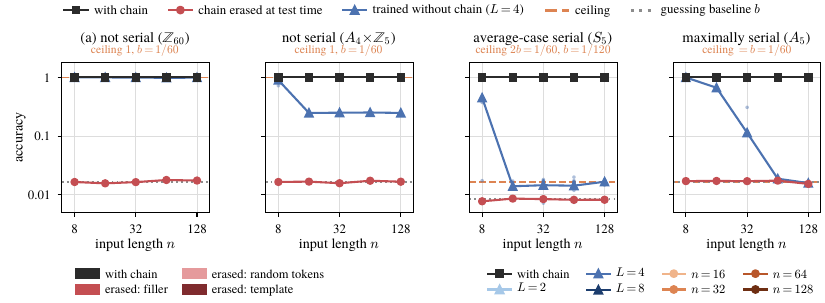}\par
\includegraphics[width=\textwidth, trim=0 139.1bp 0 157.0bp, clip]{figures/section7/fig_small.pdf}\par
\includegraphics[width=\textwidth, trim=0 122.6bp 0 264.5bp, clip]{figures/section7/fig_small.pdf}\par
\includegraphics[width=\textwidth, trim=0 13.1bp 0 287.0bp, clip]{figures/section7/fig_small.pdf}\par
}
\caption{Small transformers on word problems ($L = 4$ unless stated; $n = 8$ to $128$). \emph{(a)} Necessity, from the least to the most serial group: models trained with the chain solve every length (black); erasing their chain at test time drops them to the baseline $b$ on every group (Theorem~\ref{thm:necessity}). \emph{(b)} The same drop, for each of the three erasing interventions, after reinforcement learning with GRPO, RLOO, or PPO on $A_5$ at $n = 32$. \emph{(c)} Depth: without a chain, accuracy stays high only while one pass can track the input, and the collapse length grows with $L$. \emph{(d)} The tracked prefix $t^\ast$ levels off at a value set by $L$. \emph{(e)} Models trained to write a chain of $T$ steps: only $T = n$ solves the task.  \emph{(f)} Locality on $A_5$ for three chain formats. After a token shuffle the answer head reads only the bag of chain tokens, so its accuracy (green) cannot exceed that of the best bag decoder trained on the bag ($\triangledown$) (Theorem~\ref{thm:locality}, Lemma~\ref{lem:bag}). The bag reveals the answer only when the chain repeats it $2n+1$ times (Proposition~\ref{prop:dissoc}). }
\label{fig:small}
\end{figure}

\textbf{On the maximally serial case ($\WP_n$ of $A_5$), erasing the chain drops the performance from 1 to the predicted ceiling.} The chain-trained models reach accuracy $1.000$ on $A_5$ at every length up to $n = 128$ (Figure~\ref{fig:small}a). When their chain is replaced by filler, by random tokens, or by a template, the accuracy drops to between $0.0148$ and $0.0188$ over all lengths against the baseline $1/60 = 0.0167$; the quantitative ceiling predicted by Theorem~\ref{thm:necessity} is exactly met. Therefore, the whole accuracy above chance, $1 - b$, rests on the content of the chain.

\textbf{After reinforcement learning that rewards only the final answer, erasing the chain still drops the accuracy to the ceiling.} Models trained on $A_5$ at $n = 32$ with a chain and then with GRPO \citep{shao2024deepseekmath}, RLOO \citep{ahmadian2024back}, or PPO \citep{schulman2017ppo} end at $0.709$ to $0.804$ with the chain and at $0.014$ to $0.019$ under every erasing intervention (Figure~\ref{fig:small}b).

\textbf{What survives without a chain is the part of the answer that counting can compute.} At $n = 128$ the chainless models keep exactly the accuracy of computing the commutative part of the product and guessing the rest (Proposition~\ref{prop:quotient};
Figure~\ref{fig:small}a, blue): all of it on $\mathbb{Z}_{60}$; $1/4$ on $A_4 \times \mathbb{Z}_5$, although its ceiling is $1$, because training does not find the shallow circuit for the rest (Appendix~\ref{app:more-exp}).

\textbf{Without a chain, accuracy collapses beyond an input length that grows with the depth, even with supervision at every step.} We follow \citet{li2024cot} to train with dense supervision, which provides the model with the running product at every position, so the collapse is not due to insufficient training signal. On $A_5$ the collapse comes at $n = 16$ with $L = 2$ and at $n = 64$ with $L = 4$, while $L = 8$ still holds $0.886$ at $n = 64$ (Figure~\ref{fig:small}c). By Fact~\ref{fact:onepass}, a pass of fixed depth (pipeline) performs a bounded number of sequential steps, and for a sufficiently large $n$, no training makes a single pass sufficient.

\textbf{One forward pass can perform only a bounded number of sequential steps.} Figure~\ref{fig:small}d measures this maximum number of steps a forward pass can perform by tracking the prefix $t^\ast$, the largest position up to which a chainless model still computes the running product correctly. In the maximally serial case ($\WP_n$), $t^\ast$ stays at about $8$ at $L = 2$ for inputs of all lengths, stays at about $30$ at $L = 4$, and reaches about $110$ at $L = 8$. The only way to extend the bound is a chain, since each generated token adds one more pass (Figure~\ref{fig:circuits}).

\textbf{A chain helps only when its length grows with the input.} We train models whose chain writes only every $k$-th running product, so the chain has $T = n/k$ steps and each step has to multiply $k$ elements in one pass. Every $T < n$ stays at the baseline and only $T = n$ reaches $1.000$ (Figure~\ref{fig:small}e).

\subsection{A task with a known ceiling at scale: open-weight reasoning models on the word problem}
\label{sec:exp-wp}
We pose the word problem of $A_5$ to Qwen3 models \citep{yang2025qwen3} with 4B to 32B parameters in natural language: each element of $A_5$ is given the name of an orientation of an object; the prompt states the starting orientation, the rotations to apply in order, and for each rotation the orientation it produces from every orientation; the model has to name the final orientation, one of $60$ ($b = 1/60$). For $200$ words at each length $n$ from $4$ to $16$ we run two protocols: we let the model think for at most $T$ tokens and then force the answer as in Section~\ref{sec:exp-real}, for $T$ from $0$ to $2{,}048$; or we let it finish its chain and replace the chain by filler of the same length before forcing the answer (Appendix~\ref{app:exp}).
\begin{figure}[t]
\centering
{\offinterlineskip
\includegraphics[width=\textwidth, trim=0 112.5bp 0 0.0bp, clip]{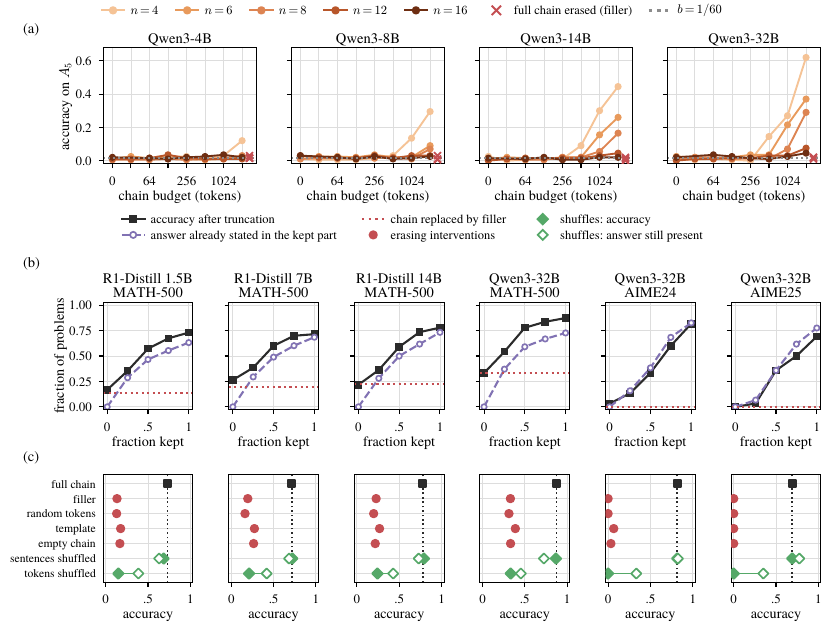}\par
\includegraphics[width=\textwidth, trim=0 4.4bp 0 308.8bp, clip]{figures/section7/fig_open.pdf}\par
}
\caption{Open-weight reasoning models. \emph{(a)} Qwen3 models on $A_5$ (maximally serial): accuracy after a chain of at most the given number of tokens. \emph{(b)} Mathematical benchmarks: accuracy after truncating the chain to a prefix (solid) against the fraction of problems whose kept prefix already states the answer (dashed). \emph{(c)} Accuracy after each intervention: the four erasing interventions (red) drop the accuracy, a sentence shuffle keeps the accuracy of the full chain, and a token shuffle falls to the level of erasing although some answer is still present in the shuffled chains (hollow diamonds). \emph{(d)} The four untrained models (filled) and $19$ checkpoints of Qwen3 models trained by GRPO with a correct or a random reward (hollow), on MATH-500: accuracy after intervention against the accuracy with the full chain (Table~\ref{tab:app-checkpoints}).}
\label{fig:open}
\end{figure}

\textbf{Erasing returns the open models to the baseline, and a chain helps only from $512$ tokens on.} With its full chain Qwen3-32B reaches $0.62$ at $n = 4$ and $0.29$ at $n = 8$; with filler, or with no chain at all, every model scores at most $0.030$ at every $n$, and equally on $\mathbb{Z}_{60}$ and $A_4 \times \mathbb{Z}_5$ (Figure~\ref{fig:open}a, Appendix~\ref{app:more-exp}). With chains of at most $2{,}048$ tokens, the accuracy falls with $n$, from $0.62$ at $n = 4$ to $0.075$ at $n = 12$ for Qwen3-32B, as Theorem~\ref{thm:depth} states; beyond $n = 8$ every model scores at most $0.075$ even with its full chain.

\subsection{Tasks with an unknown ceiling: open-weight reasoning models on mathematical benchmarks}
\label{sec:exp-real}
To test the implications of Section~\ref{sec:necessity} through Section~\ref{sec:locality}, we apply the interventions to DeepSeek-R1-Distill-Qwen models \citep{guo2025deepseekr1} with 1.5B, 7B, and 14B parameters and to Qwen3-32B, on AIME24, AIME25, and MATH-500. Each intervention edits the content of the thinking block, and the model is then forced to state its answer immediately without reopening the block, so everything after the intervened chain is the answer head defined in Section~\ref{sec:prelim}. The content-dependent interventions of Section~\ref{sec:locality} are truncation to a prefix (keeping $0\%$ of the chain gives an empty chain, which is erasing), sentence shuffle, and token shuffle.

\textbf{Erasing the chain removes 0.52 to 0.82 accuracy, and what survives is the part that the model can solve without its chain.} Replacing the chain by filler costs the four models $0.52$ to $0.60$ accuracy on MATH-500 and costs Qwen3-32B $0.82$ on AIME24 and $0.69$ on AIME25; the four erasing interventions agree within $0.11$ (Figure~\ref{fig:open}c). From Section~\ref{sec:necessity}, the surviving accuracy is the part of the benchmark that the model solves without any serial computation. On MATH-500 this floor is $0.13$ to $0.33$ and is highest for the largest model. The part carried by the chain, $0.52$ to $0.60$, stays roughly constant across sizes, indicating that a larger model solves more problems in one pass, but the number of problems that need the chain does not go down.

\textbf{The model answers correctly in step with how often the kept part of the chain already states the answer.} For Qwen3-32B on AIME24, keeping $25$, $50$, and $75$ percent of the chain gives accuracy $0.133$, $0.333$, and $0.600$, while the fraction of problems whose answer is already \emph{present} in the kept part is $0.158$, $0.383$, and $0.683$ (Figure~\ref{fig:open}b). By Theorem~\ref{thm:locality}, wherever accuracy survives truncation the answer was already readable in the kept prefix. The presence curve shows that the answer enters the chain progressively rather than at the start. 

\textbf{Shuffling sentences keeps the answer readable while shuffling tokens does not, even though the answer is still there.} Shuffling the sentences of the chain changes accuracy by at most $0.045$ on every model and benchmark (Figure~\ref{fig:open}c). By Theorem~\ref{thm:locality} and Lemma~\ref{lem:bag} in Appendix~\ref{app:proofs-locality}, this certifies that the answer is shallowly readable from the bag of sentences: the model only needs to select the sentence that states the answer, which is a shallow operation. Shuffling the tokens drops accuracy to the level of an empty chain, $0.000$ on AIME and $0.333$ on MATH-500, although the answer string is still present in $33\%$ to $45\%$ of the token-shuffled chains. Recovering the answer from a bag of tokens would mean reordering them, which is serial work that a shallow answer head cannot do. The small models show the same bound (Figure~\ref{fig:small}f).

\section{Conclusion}
\label{sec:conclusion}

In this work, we have shown that a transformer computes each token with one shallow pass, so any procedure that runs it a constant number of times is a shallow circuit. The best accuracy a shallow circuit reaches on a task, the ceiling of the task, bounds the accuracy of every such procedure. We further introduced three theorems that hold for every transformer, no matter how it is trained. Everything a model achieves above the ceiling rests on the content of its chain (Necessity); such a chain cannot be written by a shallow computation, not even approximately (Depth); and the answer is committed by the time the chain ends and is read off by one more shallow pass (Locality). We ran extensive experiments with small transformer models on word problems with known ceilings. We also evaluated open-weight reasoning models on the same word problems, as well as on mathematical benchmarks. Both families of models were tested with and without reinforcement learning. All the results meet what our theorems predict exactly. Our theorems also apply to current systems from one-pass models to diffusion decoders and process rewards (Appendix~\ref{sec:discussion}). This is a step toward deciding, from the task rather than the model, when a chain of thought is necessary and when a single pass suffices. We hope our work can offer insights for the next harness system that integrates models with varying reasoning capacities for different real-world tasks.

\bibliography{iclr2027_conference}

\begin{thebibliography}{39}
\providecommand{\natexlab}[1]{#1}
\providecommand{\url}[1]{\texttt{#1}}
\expandafter\ifx\csname urlstyle\endcsname\relax
  \providecommand{\doi}[1]{doi: #1}\else
  \providecommand{\doi}{doi: \begingroup \urlstyle{rm}\Url}\fi

\bibitem[Adleman(1978)]{adleman1978}
Leonard~M. Adleman.
\newblock Two theorems on random polynomial time.
\newblock In \emph{19th Annual Symposium on Foundations of Computer Science
  (FOCS)}, pp.\  75--83, 1978.

\bibitem[Ahmadian et~al.(2024)Ahmadian, Cremer, Gall{\'e}, Fadaee, Kreutzer,
  Pietquin, {\"U}st{\"u}n, and Hooker]{ahmadian2024back}
Arash Ahmadian, Chris Cremer, Matthias Gall{\'e}, Marzieh Fadaee, Julia
  Kreutzer, Olivier Pietquin, Ahmet {\"U}st{\"u}n, and Sara Hooker.
\newblock Back to basics: Revisiting {REINFORCE}-style optimization for
  learning from human feedback in {LLMs}.
\newblock In \emph{Proceedings of the 62nd Annual Meeting of the Association
  for Computational Linguistics (ACL)}, 2024.

\bibitem[Almeida(2026)]{typesafe2026jev}
Diogo Almeida.
\newblock Introducing {S}ystem {O}ne models \& {J}ev.
\newblock TypeSafe AI blog,
  \url{https://typesafe.ai/blog/introducing-system-one-models-and-jev},
  September 2026.
\newblock Accessed September 22, 2026.

\bibitem[Amiri et~al.(2025)Amiri, Huang, Rofin, and Hahn]{amiri2025lower}
Alireza Amiri, Xinting Huang, Mark Rofin, and Michael Hahn.
\newblock Lower bounds for chain-of-thought reasoning in hard-attention
  transformers.
\newblock \emph{arXiv preprint arXiv:2502.02393}, 2025.

\bibitem[Barrington(1989)]{barrington1989}
David A.~Mix Barrington.
\newblock Bounded-width polynomial-size branching programs recognize exactly
  those languages in {NC}$^1$.
\newblock \emph{Journal of Computer and System Sciences}, 38\penalty0
  (1):\penalty0 150--164, 1989.

\bibitem[Barrington \& Th{\'e}rien(1988)Barrington and
  Th{\'e}rien]{barringtontherien1988}
David A.~Mix Barrington and Denis Th{\'e}rien.
\newblock Finite monoids and the fine structure of {NC}$^1$.
\newblock \emph{Journal of the ACM}, 35\penalty0 (4):\penalty0 941--952, 1988.

\bibitem[Chen et~al.(2024)Chen, Xu, Liang, He, Pang, Yu, Song, Liu, Zhou,
  Zhang, Wang, Tu, Mi, and Yu]{chen2024overthinking}
Xingyu Chen, Jiahao Xu, Tian Liang, Zhiwei He, Jianhui Pang, Dian Yu, Linfeng
  Song, Qiuzhi Liu, Mengfei Zhou, Zhuosheng Zhang, Rui Wang, Zhaopeng Tu,
  Haitao Mi, and Dong Yu.
\newblock Do {NOT} think that much for 2+3=? {On} the overthinking of o1-like
  {LLMs}.
\newblock \emph{arXiv preprint arXiv:2412.21187}, 2024.

\bibitem[Chen et~al.(2025)Chen, Benton, Radhakrishnan, Uesato, Denison,
  Schulman, Somani, Hase, Wagner, Roger, Mikulik, Bowman, Leike, Kaplan, and
  Perez]{chen2025reasoning}
Yanda Chen, Joe Benton, Ansh Radhakrishnan, Jonathan Uesato, Carson Denison,
  John Schulman, Arushi Somani, Peter Hase, Misha Wagner, Fabien Roger, Vlad
  Mikulik, Samuel~R. Bowman, Jan Leike, Jared Kaplan, and Ethan Perez.
\newblock Reasoning models don't always say what they think.
\newblock \emph{arXiv preprint arXiv:2505.05410}, 2025.

\bibitem[Deng et~al.(2023)Deng, Prasad, Fernandez, Smolensky, Chaudhary, and
  Shieber]{deng2023implicit}
Yuntian Deng, Kiran Prasad, Roland Fernandez, Paul Smolensky, Vishrav
  Chaudhary, and Stuart Shieber.
\newblock Implicit chain of thought reasoning via knowledge distillation.
\newblock \emph{arXiv preprint arXiv:2311.01460}, 2023.

\bibitem[Deng et~al.(2024)Deng, Choi, and Shieber]{deng2024implicit}
Yuntian Deng, Yejin Choi, and Stuart Shieber.
\newblock From explicit {CoT} to implicit {CoT}: Learning to internalize {CoT}
  step by step.
\newblock \emph{arXiv preprint arXiv:2405.14838}, 2024.

\bibitem[Geiping et~al.(2025)Geiping, McLeish, Jain, Kirchenbauer, Singh,
  Bartoldson, Kailkhura, Bhatele, and Goldstein]{geiping2025recurrent}
Jonas Geiping, Sean McLeish, Neel Jain, John Kirchenbauer, Siddharth Singh,
  Brian~R. Bartoldson, Bhavya Kailkhura, Abhinav Bhatele, and Tom Goldstein.
\newblock Scaling up test-time compute with latent reasoning: A recurrent depth
  approach.
\newblock In \emph{Advances in Neural Information Processing Systems
  (NeurIPS)}, 2025.
\newblock arXiv:2502.05171.

\bibitem[Goyal et~al.(2024)Goyal, Ji, Rawat, Menon, Kumar, and
  Nagarajan]{goyal2024pause}
Sachin Goyal, Ziwei Ji, Ankit~Singh Rawat, Aditya~Krishna Menon, Sanjiv Kumar,
  and Vaishnavh Nagarajan.
\newblock Think before you speak: Training language models with pause tokens.
\newblock In \emph{International Conference on Learning Representations
  (ICLR)}, 2024.
\newblock arXiv:2310.02226.

\bibitem[Gu et~al.(2018)Gu, Bradbury, Xiong, Li, and
  Socher]{gu2018nonautoregressive}
Jiatao Gu, James Bradbury, Caiming Xiong, Victor O.~K. Li, and Richard Socher.
\newblock Non-autoregressive neural machine translation.
\newblock In \emph{International Conference on Learning Representations
  (ICLR)}, 2018.
\newblock arXiv:1711.02281.

\bibitem[Guo et~al.(2025)Guo, Yang, Zhang, Song, Zhang, Xu, Zhu, Ma, Wang, Bi,
  et~al.]{guo2025deepseekr1}
Daya Guo, Dejian Yang, Haowei Zhang, Junxiao Song, Ruoyu Zhang, Runxin Xu,
  Qihao Zhu, Shirong Ma, Peiyi Wang, Xiao Bi, et~al.
\newblock {DeepSeek-R1}: Incentivizing reasoning capability in {LLMs} via
  reinforcement learning.
\newblock \emph{arXiv preprint arXiv:2501.12948}, 2025.

\bibitem[Hao et~al.(2025)Hao, Sukhbaatar, Su, Li, Hu, Weston, and
  Tian]{hao2024coconut}
Shibo Hao, Sainbayar Sukhbaatar, DiJia Su, Xian Li, Zhiting Hu, Jason Weston,
  and Yuandong Tian.
\newblock Training large language models to reason in a continuous latent
  space.
\newblock In \emph{Conference on Language Modeling (COLM)}, 2025.
\newblock arXiv:2412.06769.

\bibitem[H{\aa}stad(1987)]{hastad1987}
Johan H{\aa}stad.
\newblock \emph{Computational Limitations of Small-Depth Circuits}.
\newblock MIT Press, 1987.

\bibitem[H{\aa}stad(2014)]{hastad2014}
Johan H{\aa}stad.
\newblock On the correlation of parity and small-depth circuits.
\newblock \emph{SIAM Journal on Computing}, 43\penalty0 (5):\penalty0
  1699--1708, 2014.

\bibitem[Impagliazzo et~al.(2012)Impagliazzo, Matthews, and Paturi]{imp2012}
Russell Impagliazzo, William Matthews, and Ramamohan Paturi.
\newblock A satisfiability algorithm for {AC}$^0$.
\newblock In \emph{Proceedings of the Twenty-Third Annual ACM-SIAM Symposium on
  Discrete Algorithms (SODA)}, pp.\  961--972, 2012.

\bibitem[{Inception Labs} et~al.(2025){Inception Labs}, Khanna, Kharbanda, Li,
  Varma, Wang, Birnbaum, Luo, Miraoui, Palrecha, Ermon, Grover, and
  Kuleshov]{inception2025mercury}
{Inception Labs}, Samar Khanna, Siddhant Kharbanda, Shufan Li, Harshit Varma,
  Eric Wang, Sawyer Birnbaum, Ziyang Luo, Yanis Miraoui, Akash Palrecha,
  Stefano Ermon, Aditya Grover, and Volodymyr Kuleshov.
\newblock Mercury: Ultra-fast language models based on diffusion.
\newblock \emph{arXiv preprint arXiv:2506.17298}, 2025.

\bibitem[Kahneman(2011)]{kahneman2011thinking}
Daniel Kahneman.
\newblock \emph{Thinking, Fast and Slow}.
\newblock Farrar, Straus and Giroux, 2011.

\bibitem[Lanham et~al.(2023)Lanham, Chen, Radhakrishnan, Steiner, Denison,
  Hernandez, Li, Durmus, Hubinger, Kernion, Luko{\v s}i{\=u}t{\.e}, Nguyen,
  Cheng, Joseph, Schiefer, Rausch, Larson, McCandlish, Kundu, Kadavath, Yang,
  Henighan, Maxwell, Telleen-Lawton, Hume, Hatfield-Dodds, Kaplan, Brauner,
  Bowman, and Perez]{lanham2023}
Tamera Lanham, Anna Chen, Ansh Radhakrishnan, Benoit Steiner, Carson Denison,
  Danny Hernandez, Dustin Li, Esin Durmus, Evan Hubinger, Jackson Kernion,
  Kamil{\.e} Luko{\v s}i{\=u}t{\.e}, Karina Nguyen, Newton Cheng, Nicholas
  Joseph, Nicholas Schiefer, Oliver Rausch, Robin Larson, Sam McCandlish,
  Sandipan Kundu, Saurav Kadavath, Shannon Yang, Thomas Henighan, Timothy
  Maxwell, Timothy Telleen-Lawton, Tristan Hume, Zac Hatfield-Dodds, Jared
  Kaplan, Jan Brauner, Samuel~R. Bowman, and Ethan Perez.
\newblock Measuring faithfulness in chain-of-thought reasoning.
\newblock \emph{arXiv preprint arXiv:2307.13702}, 2023.

\bibitem[Li et~al.(2024)Li, Liu, Zhou, and Ma]{li2024cot}
Zhiyuan Li, Hong Liu, Denny Zhou, and Tengyu Ma.
\newblock Chain of thought empowers transformers to solve inherently serial
  problems.
\newblock In \emph{International Conference on Learning Representations
  (ICLR)}, 2024.
\newblock arXiv:2402.12875.

\bibitem[Lightman et~al.(2024)Lightman, Kosaraju, Burda, Edwards, Baker, Lee,
  Leike, Schulman, Sutskever, and Cobbe]{lightman2023verify}
Hunter Lightman, Vineet Kosaraju, Yuri Burda, Harrison Edwards, Bowen Baker,
  Teddy Lee, Jan Leike, John Schulman, Ilya Sutskever, and Karl Cobbe.
\newblock Let's verify step by step.
\newblock In \emph{International Conference on Learning Representations
  (ICLR)}, 2024.
\newblock arXiv:2305.20050.

\bibitem[Liu et~al.(2023)Liu, Ash, Goel, Krishnamurthy, and
  Zhang]{liu2023shortcuts}
Bingbin Liu, Jordan~T. Ash, Surbhi Goel, Akshay Krishnamurthy, and Cyril Zhang.
\newblock Transformers learn shortcuts to automata.
\newblock In \emph{International Conference on Learning Representations
  (ICLR)}, 2023.
\newblock arXiv:2210.10749.

\bibitem[Merrill \& Sabharwal(2023)Merrill and
  Sabharwal]{merrillsabharwal2023parallelism}
William Merrill and Ashish Sabharwal.
\newblock The parallelism tradeoff: Limitations of log-precision transformers.
\newblock \emph{Transactions of the Association for Computational Linguistics},
  11:\penalty0 531--545, 2023.
\newblock arXiv:2207.00729.

\bibitem[Merrill \& Sabharwal(2024)Merrill and
  Sabharwal]{merrillsabharwal2024cot}
William Merrill and Ashish Sabharwal.
\newblock The expressive power of transformers with chain of thought.
\newblock In \emph{International Conference on Learning Representations
  (ICLR)}, 2024.
\newblock arXiv:2310.07923.

\bibitem[Merrill \& Sabharwal(2025)Merrill and
  Sabharwal]{merrillsabharwal2025depth}
William Merrill and Ashish Sabharwal.
\newblock A little depth goes a long way: The expressive power of log-depth
  transformers.
\newblock In \emph{Advances in Neural Information Processing Systems
  (NeurIPS)}, 2025.
\newblock arXiv:2503.03961.

\bibitem[Merrill et~al.(2024)Merrill, Petty, and
  Sabharwal]{merrill2024illusion}
William Merrill, Jackson Petty, and Ashish Sabharwal.
\newblock The illusion of state in state-space models.
\newblock In \emph{International Conference on Machine Learning (ICML)}, 2024.

\bibitem[Miles \& Viola(2013)Miles and Viola]{milesviola2013}
Eric Miles and Emanuele Viola.
\newblock Shielding circuits with groups.
\newblock In \emph{45th ACM Symposium on Theory of Computing (STOC)}, pp.\
  251--260, 2013.
\newblock ECCC TR13-003; IACR ePrint 2013/001.

\bibitem[Nie et~al.(2025)Nie, Zhu, You, Zhang, Ou, Hu, Zhou, Lin, Wen, and
  Li]{nie2025llada}
Shen Nie, Fengqi Zhu, Zebin You, Xiaolu Zhang, Jingyang Ou, Jun Hu, Jun Zhou,
  Yankai Lin, Ji-Rong Wen, and Chongxuan Li.
\newblock Large language diffusion models.
\newblock In \emph{Advances in Neural Information Processing Systems
  (NeurIPS)}, 2025.
\newblock arXiv:2502.09992.

\bibitem[Pfau et~al.(2024)Pfau, Merrill, and Bowman]{pfau2024}
Jacob Pfau, William Merrill, and Samuel~R. Bowman.
\newblock Let's think dot by dot: Hidden computation in transformer language
  models.
\newblock \emph{arXiv preprint arXiv:2404.15758}, 2024.

\bibitem[Saunshi et~al.(2025)Saunshi, Dikkala, Li, Kumar, and
  Reddi]{saunshi2025looped}
Nikunj Saunshi, Nishanth Dikkala, Zhiyuan Li, Sanjiv Kumar, and Sashank~J.
  Reddi.
\newblock Reasoning with latent thoughts: On the power of looped transformers.
\newblock In \emph{International Conference on Learning Representations
  (ICLR)}, 2025.
\newblock arXiv:2502.17416.

\bibitem[Schulman et~al.(2017)Schulman, Wolski, Dhariwal, Radford, and
  Klimov]{schulman2017ppo}
John Schulman, Filip Wolski, Prafulla Dhariwal, Alec Radford, and Oleg Klimov.
\newblock Proximal policy optimization algorithms.
\newblock \emph{arXiv preprint arXiv:1707.06347}, 2017.

\bibitem[Shao et~al.(2025)Shao, Li, Xin, Geng, Wang, Oh, Du, Lambert, Min,
  Krishna, Tsvetkov, Hajishirzi, Koh, and Zettlemoyer]{shao2025spurious}
Rulin Shao, Shuyue~Stella Li, Rui Xin, Scott Geng, Yiping Wang, Sewoong Oh,
  Simon~Shaolei Du, Nathan Lambert, Sewon Min, Ranjay Krishna, Yulia Tsvetkov,
  Hannaneh Hajishirzi, Pang~Wei Koh, and Luke Zettlemoyer.
\newblock Spurious rewards: Rethinking training signals in {RLVR}.
\newblock \emph{arXiv preprint arXiv:2506.10947}, 2025.

\bibitem[Shao et~al.(2024)Shao, Wang, Zhu, Xu, Song, Bi, Zhang, Zhang, Li, Wu,
  and Guo]{shao2024deepseekmath}
Zhihong Shao, Peiyi Wang, Qihao Zhu, Runxin Xu, Junxiao Song, Xiao Bi, Haowei
  Zhang, Mingchuan Zhang, Y.~K. Li, Y.~Wu, and Daya Guo.
\newblock {DeepSeekMath}: Pushing the limits of mathematical reasoning in open
  language models.
\newblock \emph{arXiv preprint arXiv:2402.03300}, 2024.

\bibitem[Sheng et~al.(2024)Sheng, Zhang, Ye, Wu, Zhang, Zhang, Peng, Lin, and
  Wu]{sheng2024hybridflow}
Guangming Sheng, Chi Zhang, Zilingfeng Ye, Xibin Wu, Wang Zhang, Ru~Zhang,
  Yanghua Peng, Haibin Lin, and Chuan Wu.
\newblock {HybridFlow}: A flexible and efficient {RLHF} framework.
\newblock \emph{arXiv preprint arXiv:2409.19256}, 2024.

\bibitem[Turpin et~al.(2023)Turpin, Michael, Perez, and Bowman]{turpin2023}
Miles Turpin, Julian Michael, Ethan Perez, and Samuel~R. Bowman.
\newblock Language models don't always say what they think: Unfaithful
  explanations in chain-of-thought prompting.
\newblock In \emph{Advances in Neural Information Processing Systems
  (NeurIPS)}, 2023.
\newblock arXiv:2305.04388.

\bibitem[Xu et~al.(2025)Xu, Xie, Zhao, and He]{xu2025cod}
Silei Xu, Wenhao Xie, Lingxiao Zhao, and Pengcheng He.
\newblock Chain of draft: Thinking faster by writing less.
\newblock \emph{arXiv preprint arXiv:2502.18600}, 2025.

\bibitem[Yang et~al.(2025)Yang, Li, Yang, Zhang, Hui, Zheng, Yu, Gao, Huang,
  Lv, et~al.]{yang2025qwen3}
An~Yang, Anfeng Li, Baosong Yang, Beichen Zhang, Binyuan Hui, Bo~Zheng, Bowen
  Yu, Chang Gao, Chengen Huang, Chenxu Lv, et~al.
\newblock {Qwen3} technical report.
\newblock \emph{arXiv preprint arXiv:2505.09388}, 2025.

\end{thebibliography}
\bibliographystyle{iclr2027_conference}

\appendix
\section{Discussion: What our theorems say about reasoning models}
\label{sec:discussion}
In this section, we connect our theorem to the capabilities of several recent kinds of reasoning models and discuss what it implies about their performance on different tasks, to provide insights on where and why they can succeed, and when they are expected to fail.

\paragraph{Chain-free models.}
Jev \citep{typesafe2026jev} is a \emph{System One} model \citep{kahneman2011thinking}, a one-pass transformer that returns a probability distribution over all possible answers. It can be viewed as a reasoning model with an empty chain, a one-pass pipeline, so Lemma~\ref{lem:pipeline} applies to quantify its ceiling. Its intended tasks, classification, routing, extraction, and guardrails, are decisions that one pass over the state is expected to make, and in our terms whose ceiling is one. On such tasks, a reasoning model pays $T$ passes for a chain whose erasure costs nothing. On an average-case serial task, however, it fails on an $\eps$ fraction of inputs that no shallow circuit could solve, and on a maximally serial task it is no better than guessing, no matter how it is trained.

\paragraph{Latent reasoning.}
Methods that internalize the chain by training with chains or distilling from a model that writes them \citep{deng2023implicit, deng2024implicit} share the same bound as the chain-free models, because such a model answers in one pass at test time. Another line of research feeds a hidden vector back in place of a token \citep{hao2024coconut} or iterates a block of layers of constant depth at test time \citep{geiping2025recurrent, saunshi2025looped}. A constant number of latent steps is a pipeline and is bounded by the ceiling in the same way (Remark~\ref{rem:latent}).

\paragraph{Filler and pause tokens.}
Compute without content does not help either. Filler tokens \citep{pfau2024} and pause tokens \citep{goyal2024pause} add positions to the context whose content is fixed in advance rather than produced by the model, so the extra computation happens inside one forward pass over a longer context. They add width and not depth, and they recover the benefit of a chain only on parallelizable tasks, whose ceiling is one.

\paragraph{Non-autoregressive and diffusion decoding.}
Non-autoregressive decoding \citep{gu2018nonautoregressive} and masked diffusion language models \citep{nie2025llada, inception2025mercury} unmask a full sequence over a number of denoising steps. That number of steps is fixed in advance and does not grow with the input, so on an average-case serial task they are capped at $1 - \eps$ and on a maximally serial task they are no better than the guessing baseline. If the number of denoising steps grows with the input, the steps take the role of the chain, and Theorem~\ref{thm:depth} applies to the sequence of drafts as it applies to a chain of tokens (Remark~\ref{rem:latent}).

\paragraph{Compressing the chain.}
Theorem~\ref{thm:depth} says what a model loses when its chain could have been written shallowly, since when $\eta_n$ is close to zero, the model cannot do better than a pipeline. As a result, no training method can compress the reasoning for a serial task into a few tokens, nor into a few latent steps (Remark~\ref{rem:latent}). The theorem also tells us which part of a chain can be shortened. A model whose whole chain is shallowly written on all but a $\lambda$ fraction of its inputs has success at most $\shal_n + \lambda$ (Lemma~\ref{lem:boilerplate} in Appendix~\ref{app:proofs-depth}), so every unit of accuracy above the ceiling is achieved from the input on which the chain is not shallowly written. As a result, compression approaches that prompt the model to produce the fewest possible steps \citep{xu2025cod} or train it to halt after its first correct answer \citep{chen2024overthinking} can be applied repeatedly until only the steps that rely on earlier steps are left. After that, the accuracy on each input whose chain is further compressed falls to $\shal_n$, the ceiling of pipeline on this task.

\paragraph{Process reward models.}
Theorem~\ref{thm:locality} tells us which chains can be graded by a shallow process reward. A model that scores a reasoning trace in one pass, such as a process reward model that grades each step \citep{lightman2023verify}, is a decoder in the sense of Definition~\ref{def:decodability}. Our theorem tells us which chains it can check. It can only check the chain if each step it grades is a shallow consequence of the input and of a bounded number of earlier steps. For example, a running product follows from the previous one and the next input element. In this case, checking the chain means checking each step locally and combining the checks with one AND, which is a shallow computation. When a chain leaves a gap between two steps that only serial work bridges, as the chains of large language models can, no shallow process reward can grade the step that hides that work without redoing it. In the extreme case where a chain provides only its final conclusion, any shallow verifier for that conclusion would also be a shallow solver, so its ability to succeed hinges on whether the problem is solvable by a shallow solver.

\section{Details for Section~\ref{sec:prelim}: one forward pass and shallow pipelines}
\label{app:onepass}

\paragraph{Conventions.}
A Boolean circuit is a directed acyclic graph of gates. Its \emph{size} is the number of gates and its \emph{depth} is the length of the longest path from an input to the output. A gate with \emph{unbounded fan-in} reads all $n$ inputs at once. A majority gate outputs $1$ whenever strictly more than half of its inputs are $1$, which enables counting within a single circuit layer. A \emph{randomized} circuit takes an auxiliary uniformly random string. By Adleman's argument \citep{adleman1978}, a randomized circuit family can be turned into a non-uniform deterministic family of the same class if its error on every input is below $2^{-|x| \cdot \omega(1)}$, and in particular below the inverse of the number of inputs. This is because a single random string that is good for every input exists and can be hardwired. We use $\omega(1)$ to denote a quantity that grows without bound as $n \to \infty$, and a function $\negl(n)$ is negligible if it is $n^{-\omega(1)}$, that is, if it decays faster than every inverse polynomial. We write $\TCz/\poly$ and $\ACz/\poly$ when we want to stress that a class is non-uniform. Table~\ref{tab:classes} summarizes the three circuit classes that appear in the paper, and Figure~\ref{fig:circuits} shows the three shapes of computation behind the notion of depth.  In the definition of the ceiling in Section~\ref{sec:prelim}, the depth and the size have to be fixed before the supremum is taken, because at any single input length a large enough circuit is a lookup table that answers every input, so a supremum over all polynomial-size families would equal $1$ at every $n$. We write $\shal_n$ without the two constants when they are clear from the context, and we read such a statement as follows. An upper bound $A \le \shal_n$ on the success $A$ of a specific shallow object, such as a pipeline or a decoder, holds with $d$ and $s$ equal to the depth and the size exponent of the circuit of that object. An upper bound $\shal_n \le B_n$ on the ceiling itself holds for every fixed $d$ and $s$ and for all sufficiently large $n$. The two readings chain, since $A \le \shal_n(d,s) \le B_n$ for all sufficiently large $n$.

\begin{table}[htbp]
\centering
\footnotesize
\begin{tabular}{@{}p{0.8cm}>{\raggedright\arraybackslash}p{2.6cm}p{1.3cm}>{\raggedright\arraybackslash}p{4.2cm}>{\raggedright\arraybackslash}p{3.2cm}@{}}
\toprule
Class & Gates & Depth & Contains & Does not contain \\
\midrule
$\ACz$ & AND, OR, NOT with unbounded fan-in & $O(1)$ & OR of $n$ bits, addition of two $n$-bit numbers & parity and majority (proved) \\
\addlinespace
$\TCz$ & the same gates and majority & $O(1)$ & counting, sums of $n$ numbers, multiplication, division, sorting, parity, word problems of solvable groups & the word problem of $A_5$ if and only if $\TCz \neq \NCone$ (open) \\
\addlinespace
$\NCone$ & AND, OR, NOT with fan-in two & $O(\log n)$ & the word problem of every finite group, evaluation of Boolean formulas & \\
\bottomrule
\end{tabular}
\caption{The three circuit classes used in this paper. All classes have polynomial size. One forward pass of a transformer lies in $\TCz$ at log precision and in $\ACz$ at constant precision (Fact~\ref{fact:onepass}).}
\label{tab:classes}
\end{table}

\begin{figure}[htbp]
\centering
\includegraphics[width=0.9\linewidth]{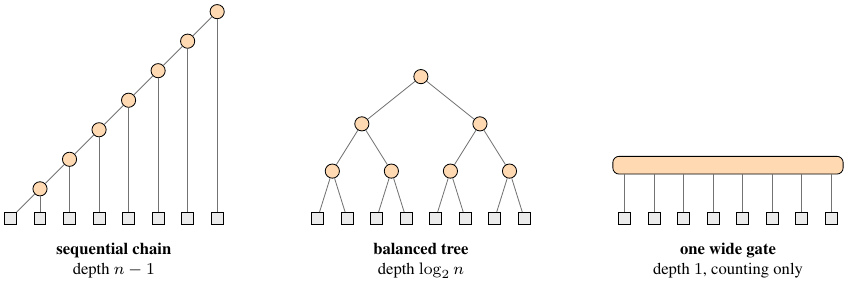}
\caption{Three circuits that combine $n = 8$ inputs. The sequential chain has depth $n - 1$ and is what a chain of thought implements, one step per token. The balanced tree has depth $\log_2 n$ and is how a circuit in $\NCone$ multiplies $n$ elements of any finite group. The single wide gate has depth $1$ and exists only when the answer depends on counts, which is the case for abelian groups and is the reason their word problems lie in $\TCz$.}
\label{fig:circuits}
\end{figure}

\begin{remark}[Sampling conventions]
\label{rem:sampling}
We model the sampling procedure for $\pi$ as a finite-precision program, which takes as input the logits together with a block of random bits and outputs a token. The sampler described in Facts~\ref{fact:tc0} and~\ref{fact:ac0} samples exactly from the distribution induced by $\pi$, since they draw by comparing a random value to the cumulative distribution. If instead one wishes to assume ideal sampling from the exact real-valued softmax, the corresponding finite-precision sampler is within total variation distance $\negl(n)$, so all claims in this paper continue to hold with an extra additive $\negl(n)$ term, which is absorbed by the existing slack.
\end{remark}

Fact~\ref{fact:onepass} is the log-precision case of the following two statements.

\begin{fact}[Log-precision setting; \citealp{merrillsabharwal2023parallelism}]
\label{fact:tc0}
Every function computed by one forward pass of a constant-depth, polynomial-width transformer with $O(\log n)$-bit precision on inputs of length $\poly(n)$ lies in $\TCz/\poly$. Moreover, sampling one token from the output distribution, given a uniformly random string, is computable by a randomized $\TCz$ circuit (compute the logits, the cumulative distribution, and a comparison).
\end{fact}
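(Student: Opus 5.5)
The plan is to build the circuit component by component, treating every real number in the forward pass as an $O(\log n)$-bit word. It rests on two closure properties of $\TCz/\poly$. First, \emph{any} function of $O(\log n)$ input bits has a truth table of size $2^{O(\log n)} = \poly(n)$, so it can be hardwired as a depth-two circuit (a DNF). This handles every scalar nonlinearity in the model: $\exp$ inside softmax, square root and reciprocal inside layer norm, GELU or ReLU, and the rounding back to $O(\log n)$ bits. Second, iterated addition of $\poly(n)$ numbers, each of $O(\log n)$ bits, lies in $\TCz$; this is the standard counting construction with majority gates. These two facts are all the arithmetic I intend to use.

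\textbf{Layers.} I would first handle the token and positional embeddings as table lookups indexed by the token and position bits; this needs non-uniformity only for hardwiring the weights. Within one layer I would proceed as follows.
\begin{itemize}
\item The query, key and value projections and the feed-forward matrices are constant-size (width-$d$) dot products with hardwired weights. Each product of two $O(\log n)$-bit words is a lookup, and the sum of the products is an iterated addition.
\item For attention, each pair of positions $(i,j)$ gets a score by a dot product and then a weight $\exp(\cdot)$ by lookup, computed in parallel over all $\poly(n)^2$ pairs.
\item The normalizer $Z_i = \sum_j \exp(s_{ij})$ is one iterated addition per position $i$. Each value-weighted sum $\sum_j \exp(s_{ij}) v_j$ is another.
\item After rounding the normalizer to $O(\log n)$ bits as the precision convention prescribes, the final division is again a function of $O(\log n)$ bits, hence a lookup. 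If one instead wants exact division of the unrounded sum, I would fall back on division being in $\TCz$.
\item Residual additions and layer norm are constant-size additions followed by lookups.
\end{itemize}
Each layer therefore has constant depth and polynomial size. Stacking a constant number of layers and the output projection to logits gives a $\TCz/\poly$ circuit.

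\textbf{Sampling.} Given the logits $\ell_1,\dots,\ell_V$, I would compute the weights $w_k = \exp(\ell_k)$ by lookup. In parallel for every $k$, I would compute the prefix sums $F_k = \sum_{k' \le k} w_{k'}$, each one an iterated addition. The random string is read as a number $r$ uniform in $[0, F_V)$. The sampled token is the unique $k$ with $F_{k-1} \le r < F_k$. Each comparison is an $\ACz$ operation, and the selection is an OR over $k$ of the AND of two comparisons, which outputs the bits of the chosen token.

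\textbf{Main obstacle.} I expect the main difficulty to be the precision bookkeeping, in two places.
\begin{itemize}
\item The circuit must reproduce the model's \emph{defined} finite-precision semantics, not an idealized real computation. So I must fix where rounding happens: after each softmax sum, or only at the end. Then I must check that every quantity fed into a lookup really has $O(\log n)$ bits. Sums of $\poly(n)$ words of $O(\log n)$ bits have $O(\log n)$ bits, which is what makes this consistent.
\item A uniform draw of $r$ from $[0, F_V)$ when $F_V$ is not a power of two cannot be produced exactly from a bounded string by a constant-depth circuit. I would first take $\poly(n)$ random bits and reduce them modulo $F_V$; division with remainder is in $\TCz$. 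This gives a distribution within total variation $\negl(n)$ of exact sampling, which is the slack that Remark~\ref{rem:sampling} already allows.
\end{itemize}
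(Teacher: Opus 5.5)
Your proposal is correct and is essentially the argument the paper relies on. The paper does not reprove the $\TCz/\poly$ containment but imports it from \citet{merrillsabharwal2023parallelism}, whose proof is the same combination you use: hardwired truth tables for functions of $O(\log n)$ bits, plus iterated addition for the unbounded sums. Since iterated addition allows $\poly(n)$ summands, it also covers polynomial width, so your ``constant-size'' dot products and layer-norm sums can have $\poly(n)$ terms without changing anything. The paper's sampler, spelled out in the proof of Lemma~\ref{lem:pipeline}, is likewise your prefix-sum-and-comparison circuit.

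The one difference is in how the random value is drawn. The paper compares against the normalized working-precision probabilities and reads the random bits directly as $r' \in [0,1)$, which is exact under the convention of Remark~\ref{rem:sampling}. You instead draw $r$ uniformly from $[0,F_V)$ by reduction modulo $F_V$. This incurs a $\negl(n)$ total-variation error, which that remark absorbs. It also avoids the paper's tacit assumption that the rounded probabilities sum to exactly one.
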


\begin{fact}[Constant-precision setting; \citealp{li2024cot}, Thm.~3.1]
\label{fact:ac0}
In the finite-precision model of \citet{li2024cot} (constant-bit mantissa and exponent, rounding after each operation), every function computed by one forward pass of a constant-depth transformer lies in $\ACz/\poly$, and sampling one output token is computable by a randomized $\ACz$ circuit.
\end{fact}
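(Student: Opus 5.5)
The plan is to build the circuit layer by layer. The structural point is that in the finite-precision model of \citet{li2024cot} every scalar is a string of $O(1)$ bits: a constant-bit mantissa and a constant-bit exponent. So every scalar ranges over a fixed finite set $\mathbb{F}$ whose size does not depend on $n$. Consequently, any operation with a constant number of scalar arguments is a Boolean function of $O(1)$ bits. This covers addition or multiplication of two numbers with rounding, $\exp$, division, square root, and the entrywise activation. Such a function is computed by a constant-size DNF, which is an $\ACz$ circuit of constant depth and size. The parameters of the transformer are fixed constants in $\mathbb{F}$, and positional embeddings are fixed for each $n$. We hardwire both as non-uniform advice, which is why the class is $\ACz/\poly$.

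Next we go through the components of a layer in order.
\begin{itemize}
\item The token and positional embedding is a lookup of $O(1)$ bits per coordinate. There are $\poly(n)$ positions and constant width, so this is polynomial size.
\item The query, key and value projections and the feed-forward block take inner products of constant dimension. They are compositions of constantly many binary operations, hence constant depth.
\item Attention needs, at each position, the scores $q_i\cdot k_j$ for all $j\le i$ (constant depth as above) and the entrywise $\exp$ of each score (a lookup). It then needs two sums over up to $\poly(n)$ positions, $\sum_j e^{s_{ij}}$ and $\sum_j e^{s_{ij}} v_j$, followed by one division (a lookup).
\item Layer normalization needs only constant-dimension sums and lookups.
\end{itemize}
Stacking a constant number $L$ of layers multiplies the depth by a constant and keeps the size polynomial.

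The main obstacle is the iterated sum of $m=\poly(n)$ values in $\mathbb{F}$. It is not computable by composing binary additions, because that would give depth $\log m$. I would use the summation-with-rounding convention of \citet{li2024cot}. Under it, the rounded sum of a multiset over $\mathbb{F}$ is a function only of the counts $c_v$ of each value $v\in\mathbb{F}$, each truncated at a constant threshold $K=K(\mathbb{F})$.

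The justification I would try first is as follows. The largest representable magnitude is a constant. Once more than $K$ copies of a nonzero value $v$ are added in the convention's order, the partial sum either saturates at the extreme value or sits at a rounding fixed point, where further copies of $v$ no longer change it. So $\min(c_v,K)$ is all that matters.

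Granting this, each truncated count $\min(c_v,K)$ for a constant $K$ is in $\ACz$. The condition ``at least $t$ of these $m$ indicator bits are $1$'' is an OR over the $\binom{m}{t}$ subsets of size $t$ of the AND of their indicators, which is polynomial size for constant $t$. The $|\mathbb{F}|$ indicators ``entry $j$ equals $v$'' are themselves constant-size. The sum is then a lookup on the constant-length vector $(\min(c_v,K))_{v\in\mathbb{F}}$. If the cited convention has signed cancellation that breaks truncation, I would fall back to its exact statement of the order of rounding in Thm.~3.1, and verify the truncated-count property directly for that order.

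For sampling, the vocabulary has constant size, so the output distribution is a constant-length vector over $\mathbb{F}$ obtained by a softmax as above. Its cumulative distribution is a constant number of rounded additions. Comparing a random value given by $O(1)$ random bits against each threshold, and selecting the first index that exceeds it, is again a constant-size function. Feeding the random string as an auxiliary input gives a randomized $\ACz$ circuit that samples exactly the distribution of this finite-precision sampler, as in Remark~\ref{rem:sampling}.
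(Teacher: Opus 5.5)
Your scaffolding is the standard route: constant-arity operations as constant-size lookups, hardwired parameters, $L$ stacked layers, and the sampler as a constant-size comparison on $O(1)$ random bits. You also correctly locate the difficulty in the iterated sum. The lemma you propose for that sum, however, is false under the convention the statement names.

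``Rounding after each operation'' is the summation with iterative rounding of \citet{li2024cot}, $[[[x_1+x_2]+x_3]+\cdots+x_m]$, taken in the order of the positions. This is not a symmetric function of its arguments, so it cannot be a function of the counts $c_v$, truncated or not.
\begin{itemize}
\item Signed summands: let $M$ be the largest representable value, so that $2M$ rounds to $M$. Then $(M,M,-M)$ sums to $[[M+M]-M]=[M-M]=0$, while $(M,-M,M)$ sums to $M$. Signed summands of this kind occur in $\sum_j e^{s_{ij}}v_j$.
\item Positive summands: $(B,a,a)$ and $(a,a,B)$ differ when $a$ is below half a unit in the last place of $B$ but $2a$ is above it.
\end{itemize}
Your saturation argument is valid only when the copies of each value are consecutive, as they would be if the convention sorted the summands first. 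With interleaved values the partial sum need never get stuck (take $+1,-1,+1,-1,\dots$). So the fallback you describe, verifying the truncated-count property for the order used in Thm.~3.1, cannot succeed: the obstruction is the order, not cancellation.

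What is missing is an argument that respects the order. Treat the sum as a finite automaton with state set $\mathbb{F}$ and transitions $f_v(p)=[p+v]$, so that the sum equals $f_{x_m}\circ\cdots\circ f_{x_2}(x_1)$.
\begin{itemize}
\item Adding $v$ and rounding to the nearest representable value are both monotone. So every $f_v$, and hence every element of the transition monoid, is an order-preserving self-map of the finite chain $\mathbb{F}$.
\item Under such a map $g$, every orbit is monotone and therefore constant after fewer than $|\mathbb{F}|$ steps. Hence $g^{|\mathbb{F}|}=g^{|\mathbb{F}|+1}$, and the monoid is aperiodic.
\item Iterated multiplication in a fixed finite aperiodic monoid is computable by polynomial-size $\ACz$ circuits (Chandra, Fortune and Lipton; see also \citet{barringtontherien1988}). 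This puts the sum in $\ACz$.
\end{itemize}
With this lemma in place of the truncated counts, the rest of your argument goes through.

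A smaller point: you assume constant width, whereas the cited theorem, like Fact~\ref{fact:tc0}, allows polynomial width. In that case the projections, the feed-forward inner products and the layer-norm statistics are also sums of $\poly(n)$ terms. They must be handled by the same lemma, not treated as compositions of constantly many binary operations.
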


Precision refers to the number of bits with which the transformer stores each number in its computation, such as an activation or an attention score. A transformer with $O(\log n)$ bits can represent an average over all $n$ positions accurately enough to count, which is what a majority gate does, whereas a transformer with constant precision cannot. This is why the two settings land in $\TCz$ and in $\ACz$. The log-precision setting is the standard model in this literature and the one that the main text uses, and we use the word \emph{shallow} for the class that matches the setting.

\begin{proof}[Proof of Lemma~\ref{lem:pipeline}]
Fix $n$ and a pipeline with $q$ stages. We show by induction over the stages that the output of every stage is computed by a randomized circuit of polynomial size and constant depth in the class of the setting. The output of every stage is a function of $x$ and of the random bits used so far. For the base case, any stage that uses a shallow randomized map satisfies the claim, since composing it with the circuits from the preceding stages preserves the result. A stage that runs a forward pass consists of three steps. First, it assembles a context of length $\poly(n)$ from $x$ and from the earlier outputs. The layout of the context is fixed in advance, so this step only copies symbols and has constant depth. Second, it runs one forward pass of $\pi$ on this context. The forward pass on a context of length $\poly(n)$ is in $\TCz$ by Fact~\ref{fact:tc0} and in $\ACz$ by Fact~\ref{fact:ac0}, and the softmax with its normalization is part of the cited containments. Third, it samples one token at every selected position. Given the probabilities $(p_1, \dots, p_{|\Sigma|})$ at the working precision, the circuit computes the prefix sums $s_j = \sum_{i \leq j} p_i$, interprets a fresh block of random bits as a number $r' \in [0,1)$ at the working precision, and outputs the first $j$ with $r' < s_j$. The prefix sums are an iterated addition of polynomially many $O(\log n)$-bit numbers, which is in $\TCz$, and in the constant-precision setting the alphabet is constant and the additions have constant size, so they are in $\ACz$. The comparisons and the selection are a constant number of operations over a constant alphabet. The selected positions are handled in parallel by disjoint copies of this sampler with independent random bits, which does not increase the depth. Each stage therefore has polynomial size and constant depth, and composing at most $q = O(1)$ such stages preserves both. The depth of the composed circuit is the sum of the depths of its stages and its size is the sum of their sizes, so the composed circuit lies in $\mathcal{C}_n(d,s)$ for constants $d$ and $s$ that depend only on the depth and the size exponent of the circuits of Facts~\ref{fact:tc0} and~\ref{fact:ac0} for $\pi$, on those of the shallow maps of the pipeline, and on $q$. By Remark~\ref{rem:sampling} the output of the resulting circuit has exactly the distribution of the answer of the pipeline. Its success is the probability, over a random input and the random bits, that its output equals $y(x)$, which is at most $\shal_n(d,s)$ by the definition of the ceiling in Section~\ref{sec:prelim}.
\end{proof}

\begin{remark}[Multi-token answers]
\label{rem:multitoken}
When the output contains $m = O(1)$ tokens generated autoregressively after the chain, we simply unroll the $m$ successive forward passes. The resulting composite has depth $O(mL) = O(1)$, remains within the stated class, and by Definition~\ref{def:pipeline}, it just appends $m$ stages to the pipeline. When $m = \omega(1)$, producing the answer introduces its own sequential cost that must be included in $T$, and the theorems then hold with $T + m$ substituted for $T$.
\end{remark}

\begin{remark}[Latent chains]
\label{rem:latent}
Neither Fact~\ref{fact:tc0}, Definition~\ref{def:pipeline}, nor Lemma~\ref{lem:pipeline} relies on the assumption that the item passed from one forward pass to the next is a token. Assume instead that a pass outputs a vector containing $\poly(n)$ values, each represented with $O(\log n)$ bits, and that the subsequent pass takes this vector as an input embedding. The function mapping the context to this vector is just the forward pass with the sampling step removed, and therefore it is in $\TCz$ by Fact~\ref{fact:tc0}. By the same inductive argument, a pipeline whose stages output these vectors forms a shallow circuit, and an erasing intervention that swaps the entire vector sequence for a shallow function of $x$ reduces the system to a two-stage pipeline. Repeating a layer block $r$ times is a special case where the vector is the complete hidden state and the context length does not increase. Hence Theorem~\ref{thm:necessity} and Theorem~\ref{thm:depth} also apply when the chain $c$ is replaced by the sequence of latent states, with total variation distances computed over the discretized vectors. Each latent step can encode $\poly(n)$ bits, whereas a token carries only $O(\log|\Sigma|)$ bits; this changes how many steps a task may require (Conjecture~\ref{conj:length}) but not whether the required number of steps must scale with $n$.
\end{remark}

\subsection{The constant-precision setting and parity}
\label{app:parity}
\label{app:constprec}

Under constant precision, Fact~\ref{fact:ac0} implies that a forward pass lies in $\ACz$, meaning that “shallow” corresponds to $\ACz$. In this setting, we take parity as an example of inherently serial task according to Definition~\ref{def:serial}.

\begin{definition}[Parity]
\label{def:parity}
In the parity task $\PAR_n$, the input is $x \sim \Unif(\{0,1\}^n)$, the target is $y(x) = x_1 \oplus \cdots \oplus x_n$, where $\oplus$ denotes addition modulo $2$, and the guessing baseline is $b = 1/2$.
\end{definition}

\begin{lemma}[Average-case hardness of parity; unconditional]
\label{lem:parity}
There is a constant $\kappa > 0$ such that for every randomized Boolean circuit family of size $s(n) = \poly(n)$ and constant depth $d$,
$\Pr_{x \sim \Unif(\{0,1\}^n),\, r}[C_n(x;r) = \PAR_n(x)] \leq \tfrac{1}{2} + 2^{-\kappa n / (\log s(n))^{d-1}} = \tfrac12 + 2^{-n/\polylog(n)}$.
\end{lemma}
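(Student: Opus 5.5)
The plan is to reduce the randomized statement to a deterministic correlation bound and then invoke the known sharp bound of \citet{hastad2014} (see also \citet{imp2012}). Throughout, ``Boolean circuit'' means a circuit with AND, OR and NOT gates of unbounded fan-in, i.e.\ the $\ACz$ setting of Fact~\ref{fact:ac0}. Majority gates must be excluded, since a single majority layer computes parity in $\TCz$ (Table~\ref{tab:classes}).

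\textbf{Step 1: derandomize by averaging.} For a randomized circuit $C_n(x;r)$, the success probability $\Pr_{x,r}[C_n(x;r)=\PAR_n(x)]$ equals $\E_r\,\Pr_x[C_n(x;r)=\PAR_n(x)]$. So there is a fixed string $r^\ast$ whose success over $x$ is at least the average. Hardwiring $r^\ast$ as constants yields a deterministic circuit $C'_n$ of size at most $s(n)$ and depth at most $d$. It therefore suffices to prove the bound for deterministic circuits.

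If the circuit's output is not a single bit, we compose it with the fixed predicate ``output equals $1$''. This predicate is a constant-size $\ACz$ test, so it costs $O(1)$ extra depth and size. We then absorb it into the constants: replace $d$ by $d+O(1)$ and adjust $\kappa$.

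\textbf{Step 2: translate success into correlation.} For a Boolean $f$, write $\mathrm{corr}(f)=\E_x[(-1)^{f(x)\oplus \PAR_n(x)}]$. Then
\[
\Pr_x[f(x)=\PAR_n(x)] = \tfrac12 + \tfrac12\,\mathrm{corr}(f).
\]
Hence an upper bound of $2^{-\kappa' n/(\log s)^{d-1}}$ on $|\mathrm{corr}(f)|$ gives the claimed success bound, with $\kappa=\kappa'$ and a spare factor $\tfrac12$.

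\textbf{Step 3: invoke the correlation bound.} \citet{hastad2014} proves that every depth-$d$ circuit of size $s$ has correlation at most $2^{-\Omega(n/(\log s)^{d-1})}$ with parity. This is the step where all the real difficulty lives: it rests on the refined switching lemma with adaptive random restrictions. We cite it rather than reprove it.

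The main care point is matching conventions. We must check that the cited bound is stated for the same notion of depth (counting layers of alternating unbounded fan-in gates, with NOT gates pushed to the inputs at no cost in depth). We must also check that the case $d=1$ and small $n$ are covered, which we handle by shrinking $\kappa$. Finally, with $s(n)=n^{O(1)}$ we have $(\log s)^{d-1}=\polylog(n)$, which yields the stated form $\tfrac12+2^{-n/\polylog(n)}$.
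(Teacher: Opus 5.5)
Your proposal is correct and follows the paper's proof: average over the random string to reduce to a deterministic circuit, then cite H\aa stad's (2014) correlation bound. Your correlation-to-success conversion and your explicit exclusion of majority gates are detail the paper leaves implicit. One small caveat: the multi-bit-output remark is unnecessary, since the circuit outputs a single bit. It would also not work as stated, because an extra layer of depth changes the exponent $(\log s)^{d-1}$ and cannot be absorbed by adjusting $\kappa$.
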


\begin{proof}
For deterministic circuits this is the correlation bound of \citet{hastad2014}; see also \citet{imp2012} and \citet{hastad1987}. For randomized circuits, the success is an average over the random string of the successes of deterministic circuits, so it is at most their maximum.
\end{proof}

Parity is the right task for this setting because it is hard for $\ACz$ and easy for $\TCz$, where a majority gate counts the ones. The word problem of $A_5$ is the right task for the log-precision setting for the mirror-image reason. In the terms of Definition~\ref{def:serial}, Lemma~\ref{lem:parity} says that parity on uniform inputs is maximally serial for $\ACz$, unconditionally, where the passage from a bound for every family to a bound on $\shal_n(d,s)$ is the one given after Lemma~\ref{lem:avgcase}. At log precision parity is not even inherently serial, since a $\TCz$ circuit computes it on every input. Every result in Sections~\ref{sec:prelim} to~\ref{sec:locality} is proved from three ingredients only, namely Fact~\ref{fact:onepass}, Lemma~\ref{lem:pipeline}, and the ceiling bound of Lemma~\ref{lem:avgcase}. If we replace them by Fact~\ref{fact:ac0}, the $\ACz$ case of Lemma~\ref{lem:pipeline}, whose proof above covers both settings, and Lemma~\ref{lem:parity}, we obtain the constant-precision version of every statement. In that version shallow means $\ACz$, the task is $\PAR_n$, the guessing baseline is $\tfrac12$, the ceiling is at most $\tfrac12 + 2^{-n/\polylog(n)}$, and the conjecture of Appendix~\ref{app:avgcase} is not needed. The interventions of Section~\ref{sec:necessity} lie in both classes. The two constructions in Appendix~\ref{app:proofs-locality} that use counting, Lemma~\ref{lem:bag} and Proposition~\ref{prop:dissoc}, are stated for the log-precision setting only.

\section{Details for Section~\ref{sec:exp-toy}: average-case hardness of the word problem of $A_5$}
\label{app:avgcase}

\begin{fact}[\citealp{barrington1989, barringtontherien1988}]
\label{fact:barrington}
For every fixed finite nonsolvable group $G$, the iterated word problem $\WP_G$ (given $(g_1, \dots, g_n) \in G^n$, output the product $g_1 g_2 \cdots g_n$) is $\NCone$-complete under $\ACz$ (indeed projection) reductions. In particular this holds for $G = A_5$, the alternating group on five elements, which is simple and nonsolvable, with $|A_5| = 60$.
\end{fact}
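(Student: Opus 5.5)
This is a classical result, and I would reproduce its two directions along the lines of \citet{barrington1989}: membership in $\NCone$, then $\NCone$-hardness under projections.

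\textbf{Membership.} It is routine. Encode each element of the fixed group $G$ by $\lceil \log_2 |G| \rceil = O(1)$ bits. Multiply $g_1 \cdots g_n$ along a balanced binary tree of depth $\lceil \log_2 n \rceil$. Each internal node multiplies two elements of $G$. Since $G$ is fixed, this is a constant-size lookup table realized by a fan-in-two circuit of constant depth. The whole circuit therefore has depth $O(\log n)$ and size $O(n)$. This is exactly the balanced tree of Figure~\ref{fig:circuits}.

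\textbf{Hardness for $A_5$.} I would first convert any $\NCone$ circuit into a fan-in-two formula of depth $d = O(\log n)$ over AND and NOT, with polynomial blow-up. Then I would prove by induction on $d$ the following claim: for every $5$-cycle $\sigma$, each subformula $f$ of depth $d$ has a \emph{group program} of length at most $4^d$ that multiplies to $\sigma$ when $f(x) = 1$ and to the identity when $f(x) = 0$.

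A group program is a word in which the $j$-th letter is one of two fixed elements, chosen according to a single input bit $x_{i(j)}$. This is exactly a projection.

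The induction steps are as follows.
\begin{itemize}
\item[] \emph{Base case.} A literal is a one-letter program.
\item[] \emph{Change of target cycle.} Since all $5$-cycles are conjugate in $A_5$, any target $\sigma$ is reached by conjugating the first and last letters.
\item[] \emph{NOT.} Append $\sigma^{-1}$ to a program for $\sigma^{-1}$ and swap the roles of identity and $\sigma$; this is then fixed by conjugation.
\item[] \emph{AND.} Use the commutator $P_\alpha P_\beta P_\alpha^{-1} P_\beta^{-1}$ of programs for two $5$-cycles $\alpha, \beta$ whose commutator $[\alpha, \beta]$ is again a $5$-cycle. This quadruples the length.
\end{itemize}

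The key computational lemma is the existence of such $\alpha, \beta$, for example $\alpha = (12345)$ and $\beta = (13542)$. I would check it by hand. The resulting word has length $4^{O(\log n)} = \poly(n)$. Its product is $\sigma$ or $1$, so the output bit is a constant-size function of the $\WP_{A_5}$ output, which is allowed by the reduction.

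\textbf{General nonsolvable $G$.} I expect this step to be the main obstacle. Since $G$ is nonsolvable, its derived series stabilizes at a nontrivial perfect subgroup $H = [H, H]$. I would run the same induction inside $H$. Perfectness is what supplies an element $h \neq 1$ that is a product of commutators of conjugates of $h$. The AND gate then becomes a bounded product of commutator gadgets, with some constant $c$ in place of $4$, giving length $c^{d} = \poly(n)$.

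The delicate point is arranging the "all targets conjugate" step. If one target fails to be conjugate to another, I would fall back on the route of \citet{barringtontherien1988}: pass to a nonabelian simple section of $G$, and lift programs through the quotient map. That lifting preserves the projection form.
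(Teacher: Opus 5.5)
\textbf{The general nonsolvable case has a genuine gap.} The paper gives no proof of its own and only cites Barrington. Your membership argument and your $A_5$ construction follow his, but your argument for an arbitrary nonsolvable $G$ does not go through as written, for three reasons.
\begin{itemize}
\item Your key claim, that perfectness ``supplies an element $h \neq 1$ that is a product of commutators of conjugates of $h$'', is not automatic. Take $h=-I$ in the perfect group $\mathrm{SL}(2,5)$: its only conjugate is itself, so every such commutator is trivial. You would have to choose $h$ carefully and prove that a suitable one exists.
\item The conjugation step you call delicate is left open.
\item The fallback through a simple section is only named, not carried out.
\end{itemize}

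The missing idea is that you need no conjugation at all if you quantify the induction over \emph{every} target in the perfect subgroup. Let $H \neq \{e\}$ be the last term of the derived series. Fix $c \le |H|$ such that every element of $H$ is a product of at most $c$ commutators of elements of $H$. Such a $c$ exists because $H=[H,H]$, the set of commutators is closed under inversion, and $H$ is finite. Now prove by induction that for every $h \in H$, each formula $f$ of depth $d$ has a program with letters in $H$, of length at most $(4c)^d$, whose product is $h$ if $f(x)=1$ and $e$ if $f(x)=0$.
\begin{itemize}
\item For $\neg f$: reverse the program for $h$ and invert each letter, which gives target $h^{-1}$. Then multiply the last letter by $h$.
\item For $f \wedge g$: write $h=\prod_{i\le c}[a_i,b_i]$ and concatenate the gadgets $P_f^{(a_i)}P_g^{(b_i)}P_f^{(a_i^{-1})}P_g^{(b_i^{-1})}$, where $P_f^{(a)}$ has target $a$. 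If both inputs are $1$ the product is $h$; otherwise every gadget collapses to $e$.
\end{itemize}
The length is $(4c)^{O(\log n)}=\poly(n)$. The product lands exactly on $h$ or $e$, so no quotient or simple section is needed. The paper only ever uses $G=A_5$ (and $S_5 \supseteq A_5$), but the statement as written covers every nonsolvable $G$.

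\textbf{Two smaller repairs in the $A_5$ case.}
\begin{itemize}
\item The $5$-cycles are \emph{not} all conjugate in $A_5$. Their $S_5$-class splits into two $A_5$-classes of size $12$; for example, $\sigma$ and $\sigma^2$ lie in different ones. When the conjugator $\theta$ is odd, multiplying only the first letter by $\theta$ and the last by $\theta^{-1}$ produces letters in $S_5\setminus A_5$. The word is then no longer an instance of $\WP_{A_5}$. Instead, conjugate \emph{every} letter by $\theta\in S_5$; this keeps all letters in $A_5$ because $A_5 \trianglelefteq S_5$.
\item Your NOT step is garbled as written. Take a program with target $\sigma^{-1}$ (obtained by reversing and inverting a program for $\sigma$) and append $\sigma$. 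The product is then $e$ when $f=1$ and $\sigma$ when $f=0$, with no conjugation needed.
\end{itemize}
Your commutator check is correct: $[(12345),(13542)]$ is the $5$-cycle $(13254)$ when composing left to right, and $(14352)$ when composing right to left.
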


\paragraph{The conjecture.}
All log-precision claims about $\WP_n$ rely on the standard conjecture $\NCone \not\subseteq \TCz$, which is the sole unproved assumption used in this paper. By Fact~\ref{fact:barrington}, this conjecture is equivalent to $\WP_{A_5} \notin \TCz/\poly$. Indeed, if there were a constant-depth, polynomial-size family of threshold circuits that computed the product correctly on every input length, then the projection reductions in Fact~\ref{fact:barrington} would imply $\NCone \subseteq \TCz/\poly$; conversely, $\WP_{A_5}$ is contained in $\NCone$. Interpreted for a single circuit family, the conjecture asserts that any such family must fail on some length-$n$ input for infinitely many $n$. The results in this appendix instead assume the almost-everywhere version, which requires the same conclusion for all sufficiently large $n$.

\begin{assumption}[Almost-everywhere hardness of the word problem of $A_5$]
\label{asm:worstcase}
For every polynomial-size threshold circuit family $\{C_n\}$ of constant depth and all sufficiently large $n$, there is an input $x \in A_5^n$ with $C_n(x) \neq y(x)$.
\end{assumption}

Assumption~\ref{asm:worstcase} is slightly more restrict than the conjecture we use in order to keep the quantifiers tidy. If it were false, then some $\NCone$-complete problem would admit constant-depth circuits for infinitely many input lengths. This is the version implicitly meant in the main text whenever a bound is stated for $\WP_n$. With the conjecture in its usual formulation, each bound stated below would hold for infinitely many $n$ rather than for all sufficiently large $n$, and that is enough for all of the paper’s conceptual takeaways (Appendix~\ref{app:quantifiers}).

\begin{lemma}[Average-case hardness of the word problem of $A_5$; conditional]
\label{lem:avgcase}
Under Assumption~\ref{asm:worstcase}, for every polynomial-size randomized $\TCz$ family $\{C_n\}$ and every constant $k$, for all sufficiently large $n$:
\[
\Pr_{x \sim \Unif(A_5^n),\ r}\big[C_n(x; r) = y(x)\big] \;\leq\; \frac{1}{60} + n^{-k}.
\]
\end{lemma}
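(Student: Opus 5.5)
The plan is a worst-case-to-average-case reduction by contradiction, exploiting the random self-reducibility of $\WP_{A_5}$. Suppose that for some constant $k$ there is a polynomial-size randomized $\TCz$ family $\{C_n\}$ with success at least $\tfrac{1}{60} + \delta$ for infinitely many $n$, where $\delta = n^{-k}$. From it I would build a polynomial-size deterministic $\TCz/\poly$ family that computes $y$ on \emph{every} input for all those $n$, contradicting Assumption~\ref{asm:worstcase}. The randomness of $C_n$ itself costs nothing: as in Lemma~\ref{lem:parity}, a randomized success is an average of deterministic successes, so I fix the best random string and assume $C_n$ is deterministic.

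\emph{Step 1 (randomize the input).} Given an arbitrary $x = (g_1,\dots,g_n)$, draw independent uniform $r_1,\dots,r_n \in A_5$, set $r_0 = 1$, and form
\[
x' = (r_0 g_1 r_1^{-1},\; r_1 g_2 r_2^{-1},\; \dots,\; r_{n-1} g_n r_n^{-1}).
\]
This $x'$ is uniform on $A_5^n$ and satisfies $y(x') = y(x)\, r_n^{-1}$. The candidate answer is then $C_n(x')\, r_n$. Each coordinate of $x'$ is a constant-size group multiplication, so the map $x \mapsto x'$ has constant depth. It follows that for every fixed $x$ the candidate equals $y(x)$ with probability at least $\tfrac{1}{60}+\delta$.

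\emph{Step 2 (make the errors spread out).} A $\delta$ advantage on the correct value does not by itself make $y(x)$ the plurality answer: a wrong value of the form $y(x)e$ for a fixed offset $e$ could be even more frequent. So I would also randomize the error offset. Write $C_n(z) = y(z)\, e(z)$. Compose with a uniform two-sided shift and a uniform conjugation of the whole word, then undo both on the output. The effect is that the offset $e$ is replaced by a uniformly random conjugate of itself, independently of how the input was randomized. Within each conjugacy class of $A_5$ (sizes $1, 15, 20, 12, 12$), the wrong answers then appear with equal probability. For a candidate $h$ I would estimate by sampling the probability that the output equals $h$, and compare it with the average probability over the conjugacy class of offsets that $h$ would imply. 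I expect the correct answer to stand out by about $\delta$, since the identity offset is its own conjugacy class of size one. If this class-level comparison is too weak, I would fall back on the known Miles--Viola-style argument \citep{milesviola2013}: use nonabelian simplicity (commutator gadgets) to reduce to deciding whether $y(x)=1$, where a $\delta$ advantage is directly usable.

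\emph{Step 3 (amplify and derandomize).} I would run $m = O(n\,\delta^{-2})$ independent copies of Steps 1--2 in parallel and take the plurality, or the class-corrected score, over the $60$ candidate values. This tally is a constant number of majority or counting layers, which is available in $\TCz$. By Chernoff, the error on every fixed input is at most $2^{-\Omega(n)} \cdot 60^{-n}$, which is below $|A_5^n|^{-1}$. Adleman's argument (Appendix~\ref{app:onepass}) then hardwires a single random string, giving a deterministic constant-depth polynomial-size threshold circuit that is correct on all of $A_5^n$ for infinitely many $n$. For the stated ``all sufficiently large $n$'' form, I would run the same argument for each $n$ at which the bound fails, and appeal to Assumption~\ref{asm:worstcase} exactly as quantified there.

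The main obstacle is Step 2. Step 1 alone gives only a $\delta$ advantage on the correct value, and turning that into a majority-decodable signal needs the error distribution to be symmetrized using the structure of $A_5$. I would settle this first, and the other steps are routine.
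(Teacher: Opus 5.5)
Your Step~1, the amplification, the Adleman step, and the symmetrization in Step~2 are sound and match the paper's proof. After the two-sided shift and the conjugation, each repetition returns $\hat y^{(j)} = e^{(j)}\,y(x)$. Here $e^{(j)}$ is drawn from a conjugation-averaged law $\bar D$ that is a class function, does not depend on $x$, and has $\bar D(e) = s_n \ge \tfrac1{60}+\delta$.

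The gap is the decoding rule in Step~2, which you leave as an expectation, and the reason you give for it is not valid. That $\{e\}$ is a singleton conjugacy class holds in every group, and conjugation averaging works in every group. So your argument would apply verbatim to $S_5$ with baseline $1/120$ and show that no shallow circuit beats $1/120+n^{-k}$ there. But the sign circuit of Proposition~\ref{prop:quotient} reaches $1/60$. For that circuit $\bar D$ is uniform on $A_5$, so the profile $\hat q(a)\approx \bar D(a\,y(x)^{-1})$ is flat on the whole coset $A_5\,y(x)$. No rule applied to the estimates, class-corrected or not, can single out $y(x)$ among those $60$ elements. Plurality fails for a related reason: $\bar D(g)$ for $g\neq e$ can exceed $\bar D(e)$. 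Your fallback via commutator gadgets is only a pointer. A $\delta$ advantage on uniform inputs does not by itself give a gap between yes-instances of ``$y(x)=e$'' and every non-identity conjugacy class, so it hits the same identifiability problem.

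What is missing is the identifiability argument, and this is the only place where simplicity is used. Since $\bar D$ does not depend on the instance, hardwire a table $\wt D$ that is pointwise within $\gamma$ of $\bar D$ as non-uniform advice. Then output the shift $v$ minimizing $\max_a|\hat q(a)-\wt D(a v^{-1})|$, so you match the whole profile rather than compare single probabilities.

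Suppose some $v\neq y(x)$ fit as well. With $u=y(x)\,v^{-1}\neq e$ you get $|\bar D(b)-\bar D(bu)|\le 4\gamma$ for all $b$. Because $\bar D$ is a class function, this extends to every conjugate of $u^{\pm1}$. These conjugates generate the simple group $A_5$ in words of bounded length $c$ (Lemma~\ref{lem:covering}). Hence $\bar D$ is within $4c\gamma$ of uniform, which contradicts $\bar D(e)\ge\tfrac1{60}+\delta$ once $\gamma=\delta/(16c)$.

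In $S_5$ the normal closure of $u\in A_5$ is only $A_5$, which is exactly the coset ambiguity above (Remark~\ref{rem:simplicity}). With this step in place, your amplification with $O(c^2 n/\delta^2)$ repetitions and the derandomization go through as you describe.
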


Lemma~\ref{lem:avgcase} upper-bounds the best achievable success probability for $\WP_n$ at any fixed depth and size. Suppose $\shal_n(d,s) > \tfrac{1}{60} + n^{-k}$ occurred for infinitely many $n$. Then, choosing for each such $n$ a circuit in $\mathcal{C}_n(d,s)$ that succeeds above this threshold, and selecting any circuit of the same depth and size for the remaining values of $n$, would yield a polynomial-size randomized $\TCz$ family contradicting the lemma. Therefore, for all $d,s$ and any constant $k$, we have $\shal_n(d,s) \le \tfrac{1}{60} + n^{-k}$ for all sufficiently large $n$. Consequently, $\WP_n$ under the uniform distribution is maximally serial as in Definition~\ref{def:serial}(iii), and this is the bound invoked in Section~\ref{sec:exp-toy} for $\WP_n$.

The ceiling in Lemma~\ref{lem:avgcase} is induced from Theorem~3.9 of \citet{milesviola2013}. That theorem says that if $\TCz \neq \NCone$, then for every $k$ and for infinitely many $t$, the family of $\TCz$ circuits of size at most $t^k$ with $k \log t$ output bits is $t^{-k}$-fooled by $(A_5)^t$. A class is said to be $\eps$-fooled if no circuit in it has statistical distance $\eps$ between its output distribution under the uniform distribution on $G^t$ and its output distribution under the uniform distribution on $t$-tuples whose product is $\alpha$, for any choice of $\alpha$. Converting their formulation into the prediction version is just a single averaging argument, which we spell out. Assume a $\TCz$ family $C$ predicts the product on uniform inputs with success probability $1/m + \delta$. For each $\alpha \in G$, let $D_\alpha$ be the uniform distribution on $n$-tuples with product $\alpha$, set $p_\alpha := \Pr_{x \sim D_\alpha}[C(x) = \alpha]$, and define $q_\alpha := \Pr_{x \sim \Unif(G^n)}[C(x) = \alpha]$, the overall frequency with which $C$ outputs $\alpha$. Because $\Unif(G^n)$ is the uniform mixture $\tfrac1m \sum_\alpha D_\alpha$, we have $\tfrac1m \sum_\alpha p_\alpha = \tfrac1m + \delta$, so $\sum_\alpha p_\alpha = 1 + m\delta$, whereas $\sum_\alpha q_\alpha \leq 1$. Therefore, for some $\alpha$ we must have $p_\alpha - q_\alpha \geq \delta$, and for that $\alpha$ the test “output equals $\alpha$” distinguishes $D_\alpha$ from $\Unif(G^n)$ with advantage $\delta$, contradicting $\delta$-fooling. Their theorem is stated for infinitely many $t$, while our version ranges over all sufficiently large $n$; Appendix~\ref{app:quantifiers} explains how to align these quantifier conventions. The randomization in Step~1 of the proof below is the random self-reduction from their Lemma~3.2, applied here in a two-sided way. The remainder proceeds differently: we use a two-sided self-reduction with profile matching, completed by a covering argument. We retain this approach because it produces the prediction statement directly, and because Remark~\ref{rem:simplicity} extracts that line of reasoning.

\medskip\noindent\emph{Proof of Lemma~\ref{lem:avgcase}.} Throughout, $G = A_5$, $m = |G| = 60$, and $y(x) = g_1 \cdots g_n$ for $x = (g_1, \dots, g_n) \in G^n$. Suppose, for contradiction, that there is a polynomial-size randomized $\TCz$ family $\{C_n\}$, a constant $k$, and an infinite set $N \subseteq \mathbb{N}$ such that for all $n \in N$,
\[
s_n \;:=\; \Pr_{\wt x \sim \Unif(G^n),\ \rho}\big[C_n(\wt x; \rho) = y(\wt x)\big] \;\geq\; \frac{1}{m} + \delta, \qquad \delta := n^{-k}.
\]
We construct, for each $n \in N$, a polynomial-size deterministic $\TCz$ circuit that computes $\WP_n$ exactly on all inputs. Since $N$ is infinite, this contradicts Assumption~\ref{asm:worstcase}.

\paragraph{Step 1: two-sided random self-reduction.}
Fix an arbitrary target instance $x = (g_1, \dots, g_n)$. Draw $h_0, h_1, \dots, h_n \sim \Unif(G)$ i.i.d.\ and define $\wt x = (\wt x_1, \dots, \wt x_n)$ by $\wt x_i := h_{i-1}^{-1} g_i h_i$. Telescoping,
\[
y(\wt x) = h_0^{-1} g_1 h_1 \cdot h_1^{-1} g_2 h_2 \cdots h_{n-1}^{-1} g_n h_n = h_0^{-1}\, y(x)\, h_n,
\qquad\text{so}\qquad
y(x) = h_0\, y(\wt x)\, h_n^{-1}.
\]
Define the estimate $\hat y := h_0\, C_n(\wt x; \rho)\, h_n^{-1}$ with fresh coins $\rho$.

We use two distributional facts. The first is that $\wt x$ is uniform and independent of $h_0$. For each fixed value of $h_0$ and the fixed $x$, the map $(h_1, \dots, h_n) \mapsto \wt x$ is a bijection of $G^n$, since it can be inverted coordinatewise by $h_i = g_i^{-1} h_{i-1} \wt x_i$, so the conditional law of $\wt x$ given $h_0$ is uniform for every $h_0$. The second is that the offset is a conjugated copy of an error that does not depend on the instance. Writing $\eps := C_n(\wt x; \rho)\, y(\wt x)^{-1}$,
\[
\hat y\, y(x)^{-1}
= h_0\, C_n(\wt x;\rho)\, h_n^{-1} \cdot \big(h_0\, y(\wt x)\, h_n^{-1}\big)^{-1}
= h_0\, C_n(\wt x;\rho)\, y(\wt x)^{-1}\, h_0^{-1}
= h_0\, \eps\, h_0^{-1}.
\]
By the first fact, $\eps$ is a function of $(\wt x, \rho)$ only and hence independent of $h_0$, and $\eps \sim D$ where $D(g) := \Pr_{\wt x \sim \Unif, \rho}[\,C_n(\wt x;\rho)\, y(\wt x)^{-1} = g\,]$ is a fixed distribution on $G$ depending only on $(C_n, n)$, with $D(e) = s_n$.

Consequently the offset $\hat y\, y(x)^{-1}$ is distributed according to the \emph{conjugation average}
\[
\bar D(g) \;:=\; \frac{1}{m} \sum_{h \in G} D\big(h^{-1} g\, h\big),
\]
which is a class function, that is, $\bar D(g) = \bar D(w g w^{-1})$ for all $w$, satisfies $\bar D(e) = D(e) = s_n$, and does not depend on the target instance $x$. The last property is what the following steps use.

\paragraph{Step 2: repetition and the empirical profile.}
Let $c = c(G) \leq m$ be the covering constant of Lemma~\ref{lem:covering}, set $\gamma := \delta / (16c)$, and repeat Step~1 independently $k^* := \big\lceil 8 \gamma^{-2} \big( (n+2) \ln m + \ln 8 \big) \big\rceil = O\big(c^2 n / \delta^2\big) = \poly(n)$ times, obtaining estimates $\hat y^{(1)}, \dots, \hat y^{(k^*)}$, i.i.d.\ with $\hat y^{(j)} = e^{(j)} \cdot y(x)$ and $e^{(j)} \sim \bar D$. Form the empirical profile $\hat q(a) := \frac{1}{k^*} \sum_j \ind[\hat y^{(j)} = a]$ for each $a \in G$. Since $\E\, \hat q(a) = \bar D(a\, y(x)^{-1})$, Hoeffding and a union bound over the $m$ atoms give
\[
\Pr\Big[\ \exists a:\ \big|\hat q(a) - \bar D(a\, y(x)^{-1})\big| > \gamma\ \Big] \;\leq\; 2 m\, e^{-2 \gamma^2 k^*} \;\leq\; \tfrac{1}{4}\, m^{-(n+1)}.
\]
Call the complementary event \textsc{Good}.

\paragraph{Step 3: profile matching with hardwired advice.}
The non-uniform advice of the circuit includes a table $\wt{D}$ with $|\wt{D}(g) - \bar D(g)| \leq \gamma$ for all $g$. Such a table exists because $\bar D$ does not depend on the instance and has $m$ entries, and storing each entry to $O(\log(1/\gamma)) = O(\log n)$ bits suffices. The decoder outputs
\[
v^\star \;\in\; \arg\min_{v \in G}\ \Delta(v), \qquad \Delta(v) := \max_{a \in G} \big|\hat q(a) - \wt{D}(a\, v^{-1})\big|,
\]
breaking ties arbitrarily. On \textsc{Good}, $\Delta(y(x)) \leq 2\gamma$, since the sampling error is at most $\gamma$ and the table error is at most $\gamma$.

\paragraph{Step 4: simplicity forces uniqueness of the match.}
Suppose that on \textsc{Good} the minimizer were some $v \neq y(x)$. Then $\Delta(v) \leq \Delta(y(x)) \leq 2\gamma$, hence $|\hat q(a) - \bar D(a\, v^{-1})| \leq 3\gamma$ for all $a$ after removing the table error, and the triangle inequality against $|\hat q(a) - \bar D(a\, y(x)^{-1})| \leq \gamma$ gives, for all $a \in G$,
\[
\big|\bar D(a\, y(x)^{-1}) - \bar D(a\, v^{-1})\big| \leq 4\gamma.
\]
Substituting $a = b\, y(x)$ and setting $u := y(x)\, v^{-1} \neq e$: for all $b \in G$,
\begin{equation}
\label{eq:translation}
\big|\bar D(b) - \bar D(b\, u)\big| \leq 4\gamma .
\end{equation}
We now extend \eqref{eq:translation} from $u$ to all of $G$ in three steps. The first step extends it to $u^{-1}$. Substituting $b \mapsto b u^{-1}$ in \eqref{eq:translation} gives $|\bar D(b u^{-1}) - \bar D(b)| \leq 4\gamma$ for all $b$. The second step extends it to every conjugate $w = h u h^{-1}$, and likewise to $h u^{-1} h^{-1}$. Since $\bar D$ is a class function,
\[
\bar D(b\, h u h^{-1}) = \bar D\big(h^{-1} b\, h\, u\big), \qquad \bar D(b) = \bar D\big(h^{-1} b\, h\big),
\]
so $|\bar D(b w) - \bar D(b)| = |\bar D(b' u) - \bar D(b')| \leq 4\gamma$ with $b' = h^{-1} b h$. Thus \eqref{eq:translation} holds with any element of the set $K$ of Lemma~\ref{lem:covering} in place of $u$. The third step extends it to an arbitrary $g \in G$. Writing $g = w_1 \cdots w_\ell$ with $w_j \in K$ and $\ell \leq c$ by Lemma~\ref{lem:covering} and telescoping,
\[
\big|\bar D(b\, g) - \bar D(b)\big| \;\leq\; \sum_{j=1}^{\ell} \big|\bar D(b\, w_1 \cdots w_j) - \bar D(b\, w_1 \cdots w_{j-1})\big| \;\leq\; 4 c \gamma \qquad \text{for all } b, g \in G.
\]
Taking $b = e$ shows $\bar D$ is within $4c\gamma$ of the constant $\bar D(e)$ pointwise; summing over $G$ and using $\sum_g \bar D(g) = 1$,
\[
\Big| \bar D(e) - \tfrac{1}{m} \Big| \;\leq\; 4 c \gamma \;=\; \frac{\delta}{4}.
\]
But $\bar D(e) = s_n \geq \tfrac1m + \delta$, a contradiction. Hence on \textsc{Good} the unique minimizer is $v^\star = y(x)$, and the reduction errs with probability at most $\tfrac14 m^{-(n+1)}$ \emph{on every input $x$}.

\paragraph{Step 5: complexity of the wrapper and derandomization.}
The wrapper implementing the $k^*$ oracle invocations consists of five components. The coordinate-wise products $h_{i-1}^{-1} g_i h_i$ can be computed by table lookups on triples of symbols from a $60$-symbol alphabet, and therefore fall in $\mathsf{NC}^0$; the same holds for the final multiplications $h_0 (\cdot) h_n^{-1}$. The $k^*$ instances of $C_n$ are executed in parallel, each having polynomial size and identical constant depth. For the $m$ atoms, the quantities $k^* \hat q(a)$ are obtained via iterated addition of $k^*$ bits using majority gates, placing this part in $\TCz$. Comparing $O(\log n)$-bit fixed-point values to the hardwired table, as well as forming the $m^2$ absolute differences $|\hat q(a) - \bar D(a v^{-1})|$, can be done in $\ACz$. Taking the argmin over the $m=60$ possibilities is merely a constant-size selection step. Altogether, the resulting circuit has constant depth and polynomial size, hence is in $\TCz$, and it uses randomness from $(h^{(j)}_i)_{i,j}$ together with the oracle’s random bits. For any fixed input its error is at most $\tfrac14 m^{-(n+1)} < m^{-n}$. As there are exactly $m^n$ inputs, a union bound implies that a uniformly random choice of the random string works simultaneously for all inputs with probability at least $3/4$. Therefore a good string exists, and fixing it in the circuit (as in Adleman’s method) gives a deterministic $\TCz/\poly$ family computing $\WP_n$ exactly for every $n \in N$. By Fact~\ref{fact:barrington}, this would put an $\NCone$-complete problem in $\TCz/\poly$ for infinitely many input lengths, contradicting Assumption~\ref{asm:worstcase}. \hfill $\qed$

\begin{remark}[Where simplicity enters, and where $S_5$ fails]
\label{rem:simplicity}
The only place where the proof uses the structure of $G$ is Step~4. The passage from near-invariance under one $u \neq e$ to near-invariance under all of $G$ requires that the conjugates of $u^{\pm 1}$ generate $G$, that is, that the normal closure of every nontrivial element is the whole group, which is exactly simplicity. For $G = S_5$ and $u \in A_5$ the normal closure is $A_5 \neq S_5$, so the telescoping argument only forces $\bar D$ to be constant on the cosets of $A_5$, and no contradiction arises at $s = 2/|S_5| = 1/60$. This is the correct behavior, because Proposition~\ref{prop:shortcut} exhibits a $\TCz$ circuit that reaches exactly this value by computing the sign of the product.
\end{remark}

\subsection{The covering lemma}
\label{app:covering}

\begin{lemma}[Covering in finite simple groups]
\label{lem:covering}
Let $G$ be a finite nonabelian simple group and $u \in G \setminus \{e\}$. Let $K = \{g u g^{-1} : g \in G\} \cup \{g u^{-1} g^{-1} : g \in G\}$. Then there exists $c = c(G) \leq |G|$ such that every element of $G$ is a product of at most $c$ elements of $K$.
\end{lemma}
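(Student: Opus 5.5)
Let $H = \langle K \rangle$ be the subgroup generated by $K$. The plan has two parts: show that $H$ is all of $G$, and then bound the number of factors needed by a length argument in the Cayley graph of $G$ with respect to $K$.

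\textbf{Step 1: $K$ generates $G$.} The set $K$ is closed under conjugation, since conjugating $gug^{-1}$ by $h$ gives $(hg)u(hg)^{-1} \in K$, and similarly for $u^{-1}$. It is also closed under inversion, since $(gug^{-1})^{-1} = gu^{-1}g^{-1}$. Hence $H$ is closed under conjugation, so $H$ is a normal subgroup of $G$. It contains $u \neq e$, so it is nontrivial, and simplicity of $G$ forces $H = G$. Because $K$ is closed under inverses, every element of $G$ is a finite product of elements of $K$; no inverses of generators are needed.

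\textbf{Step 2: bounding the length.} For $\ell \ge 0$, let $B_\ell$ be the set of products of at most $\ell$ elements of $K$, with $B_0 = \{e\}$. These sets form an increasing chain $B_0 \subseteq B_1 \subseteq \cdots$ of subsets of $G$. If $B_{\ell+1} = B_\ell$ for some $\ell$, then $B_\ell K \subseteq B_\ell$, and by induction $B_j = B_\ell$ for all $j \ge \ell$. Step 1 says the union of all $B_j$ is $G$, so in that case $B_\ell = G$.

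Consequently, until the chain reaches $G$, each step adds at least one new element, so $|B_\ell| \ge \ell + 1$ as long as $B_\ell \neq G$. It follows that $B_{|G|-1} = G$, which gives the claim with $c = |G| - 1 \le |G|$.

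If a bound independent of $u$ is wanted, one can take the maximum of these lengths over the finitely many $u \neq e$. The statement only asks for $c(G) \le |G|$, and the bound above already has this form uniformly in $u$.

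\textbf{Main obstacle.} There is no serious obstacle here. The only points that need care are two checks: that $K$ is really conjugation-invariant, so that $H$ is normal, and that it is closed under inversion, so that generation yields products of elements of $K$ rather than of $K \cup K^{-1}$. Both are one-line verifications. Nonabelianness is not needed beyond simplicity; it only rules out the cyclic case, where the argument works anyway.
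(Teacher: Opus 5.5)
Your proof is correct and follows essentially the same route as the paper's. Both show that $K$ is symmetric and conjugation-invariant, so $\langle K\rangle$ is a nontrivial normal subgroup and equals $G$ by simplicity, and then argue that the nondecreasing chain of products of at most $j$ elements of $K$ strictly grows until it reaches $G$; your counting gives the slightly sharper constant $c = |G| - 1$ where the paper settles for $c = |G|$.
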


\begin{proof}
The set $K$ is symmetric, since the inverse of a conjugate of $u$ is the same conjugate of $u^{-1}$, and it is closed under conjugation by construction. Let $H := \langle K \rangle$. Since conjugation permutes $K$, it maps $H$ to itself, so $H \trianglelefteq G$, and $H \neq \{e\}$ since $u \in K$. By simplicity, $H = G$. Now let $S_j := \{w_1 \cdots w_i : 0 \leq i \leq j,\ w_1, \dots, w_i \in K\}$ denote the set of products of at most $j$ factors from $K$, with the empty product giving $e \in S_0$. The chain $S_0 \subseteq S_1 \subseteq \cdots$ is nondecreasing. If $S_{j+1} = S_j$ for some $j$, then $S_j K \subseteq S_j$, and by induction $S_j$ contains every finite product of elements of $K$. Since $K$ is symmetric and $G$ is finite, the set of such products is the subgroup $\langle K \rangle = G$, so $S_j = G$. Each strict inclusion in the chain adds at least one element of the finite group, so stabilization, and hence $S_j = G$, occurs at some $j \leq |G|$. Taking $c := |G|$ proves the claim. The known covering numbers of $A_5$ are far smaller, but the proof of Lemma~\ref{lem:avgcase} only needs some constant, so we keep the self-contained argument.
\end{proof}

\subsection{The word problem at each degree}
\label{app:degrees}
The example of Section~\ref{sec:exp-toy} places the word problem at each of the three degrees of Definition~\ref{def:serial}, depending on the group and on the input distribution. We collect the four statements and their proofs here. Throughout this subsection shallow means $\TCz$, the class of the log-precision setting.

For a commutative group, such as $\mathbb{Z}_{60}$, the cyclic group of the integers modulo $60$, a shallow circuit computes the product on every input, so the ceiling is $1$ and the task is not serial in any sense. For $A_5$ (the $60$ even permutations of five objects) under a distribution that puts almost all of its mass on inputs with few non-identity elements, no shallow circuit computes the product on every input, so the task is inherently serial, but a shallow circuit solves every input with few non-identity elements, which carry almost all of the mass, so the ceiling tends to $1$ (Proposition~\ref{prop:skewed}). For $S_5$, the symmetric group of all $120$ permutations of five objects, a shallow circuit is able to compute the sign of the product by counting the number of odd factors with majority gates, and guesses the rest, so the ceiling is $2b = 1/60$ and the task is average-case serial. For $A_5$ on uniform inputs, no shallow circuit predicts the product better than guessing \citep{milesviola2013}, so the ceiling is $b = 1/60$ and the task is maximally serial. 

\paragraph{Solvable groups: no degree.}
If $G$ is solvable, and in particular if it is commutative, then $\WP_G$ lies in $\TCz$ \citep{barringtontherien1988}, so a shallow circuit computes the product on every input, the ceiling is one under every input distribution, and none of the three conditions holds. This covers the two control groups $\mathbb{Z}_{60}$ and $A_4 \times \mathbb{Z}_5$ of Section~\ref{sec:experiments}.

\paragraph{$S_5$ on uniform inputs: average-case serial, not maximally serial.}
The sign of a product is the parity of the signs of its factors, which a $\TCz$ circuit computes by counting the odd permutations, and a circuit that outputs a fixed element of the correct coset of $A_5$ is right with probability $1/|A_5| = 1/60 = 2/|S_5|$ (Proposition~\ref{prop:shortcut}, proved in Appendix~\ref{app:proofs-necessity}). Under Assumption~\ref{asm:worstcase}, no polynomial-size randomized $\TCz$ family exceeds $1/60 + n^{-k}$ on uniform inputs from $S_5^n$, by the variant of the proof of Lemma~\ref{lem:avgcase} given in the same appendix. Hence $\shal_n \leq 1/60 + n^{-k}$, which is bounded away from one, so the task is average-case serial with $\eps = 1 - 1/60 - o(1)$; and $\shal_n \geq 1/60 = 2b$, so it is not maximally serial.

\paragraph{$A_5$ on uniform inputs: maximally serial.}
This is Lemma~\ref{lem:avgcase}.

\paragraph{$A_5$ under a skewed distribution: inherently serial, not average-case serial.}
The first degree does not depend on the input distribution, and the second does. The following proposition makes the difference concrete on the same group.

\begin{proposition}[The distribution decides the degree]
\label{prop:skewed}
Let $E_n \subseteq A_5^n$ be the set of inputs with at most three entries different from the identity, and let $D_n$ be any distribution on $A_5^n$ with $D_n(E_n) \geq 1 - 2^{-n}$. A deterministic $\TCz$ family computes the product on every input in $E_n$, so the ceiling of $\WP_{A_5}$ under $D_n$ is at least $1 - 2^{-n}$ and the task is not average-case serial, whereas under Assumption~\ref{asm:worstcase} it remains inherently serial.
\end{proposition}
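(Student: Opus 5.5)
The proposition has three claims, and I would prove them in order: construct the circuit, read off the ceiling, then deduce inherent seriality from Assumption~\ref{asm:worstcase}.

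\emph{The circuit.} For each triple of positions $i<j<k$ (with $j$ or $k$ allowed to be absent, so that inputs with fewer non-identity entries are covered), and each triple of group elements $(a,b,c)\in A_5^3$, the circuit has one AND gate. This gate checks that $g_i=a$, $g_j=b$, $g_k=c$, and that every other coordinate equals $e$. The last condition is a single unbounded fan-in AND over the $O(n)$ equality tests. There are $O(n^3\cdot 60^3)=\poly(n)$ such gates, and each has constant depth. Each gate is labelled with the product $abc$, which is hardwired.

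On an input in $E_n$, exactly one gate fires: the one indexed by the true non-identity positions and values. If no gate fires, the input is all-identity and the answer is $e$. So for each output bit, the circuit takes an OR over the fired gates whose label has that bit set. The circuit therefore lies in $\ACz\subseteq\TCz$, has constant depth and polynomial size, and is correct on all of $E_n$. The only step needing care is the bookkeeping that the indicators are mutually exclusive. To guarantee this, I would define them by the exact support set of the input, so that at most one fires.

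\emph{The ceiling.} The circuit is deterministic, and its depth $d_0$ and size exponent $s_0$ are fixed. Its success under $D_n$ is therefore at least $D_n(E_n)\ge 1-2^{-n}$. This gives $\shal_n(d_0,s_0)\ge 1-2^{-n}$, which tends to $1$. So no $\eps>0$ can satisfy $\shal_n(d_0,s_0)\le 1-\eps$ for all large $n$, and condition (ii) of Definition~\ref{def:serial} fails. Condition (iii) fails for the same reason.

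\emph{Inherent seriality.} Condition (i) does not depend on the distribution; it asks whether any shallow family computes $y$ on every input of $A_5^n$. Assumption~\ref{asm:worstcase} says directly that every constant-depth polynomial-size threshold family errs on some input for all large $n$. Hence no shallow family computes $\WP_{A_5}$ everywhere, and the task is inherently serial.

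No step is a real obstacle. The one subtle point is the quantifier order in (ii): it suffices to exhibit a single fixed pair $(d_0,s_0)$ for which the ceiling is not bounded away from one.
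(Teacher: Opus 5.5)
Your proof is correct. The ceiling step and the inherent-seriality step match the paper's proof, including your observation that one fixed pair $(d_0,s_0)$ suffices to negate condition (ii). The circuit is where you take a different route.

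\textbf{The paper's circuit.} It computes $z_i=\ind[g_i\neq e]$ and the prefix counts $c_i=\sum_{j\le i}z_j$ by iterated addition. It then retrieves the $j$-th non-identity entry, for $j\in\{1,2,3\}$, as an OR of the encodings of $g_i$ gated by $z_i=1\wedge c_i=j$. Finally it multiplies the three retrieved entries with a constant-size lookup.

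\textbf{Your circuit.} You enumerate all $O(n^3\cdot 60^3)$ exact supports together with their values and OR the hardwired labels.

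\textbf{What each route buys.} Your route needs no majority gates, so the circuit lies in $\ACz$. The proposition therefore also holds in the constant-precision setting of Appendix~\ref{app:constprec}, which the paper's counting-based construction does not show directly. The paper's route is more modular and smaller: its size is dominated by the prefix counts and one final lookup. It would stay polynomial for a slowly growing number of non-identity entries, whereas your enumeration is polynomial only when that number is constant. Both routes handle any constant number.

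\textbf{A simplification.} The mutual-exclusivity bookkeeping you flag is not needed. Each of your AND gates tests every coordinate of the input, so any gate that fires carries exactly the label $y(x)$, and the OR of label bits is correct even if several gates fire. The only convention you need is the one the paper also uses: the all-zero output, produced when no gate fires, is read as $e$.
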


\begin{proof}
For each position $i$ the circuit computes the bit $z_i := \ind[g_i \neq e]$ by a lookup and the prefix count $c_i := \sum_{j \leq i} z_j$, an iterated addition of $n$ bits, which lies in $\TCz$. For $j \in \{1, 2, 3\}$ it forms the one-hot encoding of the $j$-th non-identity entry as the bitwise OR, over all positions $i$, of the encoding of $g_i$ gated by $\ind[z_i = 1 \wedge c_i = j]$. On an input in $E_n$ at most one position passes each gate, so the OR returns exactly that entry, and an all-zero result is read as $e$. The circuit then outputs the product of the three retrieved entries, a lookup on three symbols. On every input in $E_n$ this product equals $y(x)$, because the identity entries contribute nothing to $g_1 \cdots g_n$. The circuit has polynomial size and constant depth, so its accuracy under $D_n$ is at least $D_n(E_n) \geq 1 - 2^{-n}$, and the ceiling is at least that; it is not bounded away from one, so Definition~\ref{def:serial}(ii) fails. Definition~\ref{def:serial}(i) does not involve the distribution, and Assumption~\ref{asm:worstcase} says exactly that every $\TCz$ family errs on some input of $A_5^n$ for all sufficiently large $n$, so the task is inherently serial. The number three is not essential, and the same construction handles any constant number of non-identity entries.
\end{proof}

\subsection{On almost-everywhere versus infinitely-often hardness}
\label{app:quantifiers}
Assumption~\ref{asm:worstcase} is stated almost everywhere so that Lemma~\ref{lem:avgcase} holds for all sufficiently large $n$. Under the conjecture in its standard form, which by Fact~\ref{fact:barrington} says that $\WP_{A_5} \notin \TCz/\poly$, that is, that every family fails on infinitely many $n$, the same proof shows that the bound of Lemma~\ref{lem:avgcase} holds for infinitely many $n$. The results of Sections~\ref{sec:necessity} to~\ref{sec:locality} then hold along that subsequence, which suffices for every qualitative conclusion of this paper, since the experiments operate at fixed $n$ in any case. We use the almost-everywhere form because it reads more easily.
 \section{Proofs for Section~\ref{sec:necessity}}
\label{app:proofs-necessity}

\paragraph{Formal setting.}
We follow the conventions from Appendix~\ref{app:onepass}. An intervention is a family $\Phi=\{\Phi_n\}$ of randomized maps $\Phi_n:\Sigma^n \times \Sigma^{T(n)} \to \Sigma^{T'(n)}$ with $T'(n)\le \poly(n)$, such that $\Phi_n$ can be computed from $(x,c)$ together with the intervention’s random string by a constant-depth, polynomial-size randomized circuit in the relevant class for the setting. The specific interventions we study—constants, truncation, independent random tokens, and shuffling—belong to both classes. We call an intervention erasing if $\Phi_n(x,c)=\varphi_n(x;r)$ for some randomized map $\varphi_n$ in the same class and some random string $r$. Figure~\ref{fig:interventions} depicts the protocol along with these two types of interventions.

\begin{figure}[htbp]
\centering
\begin{tikzpicture}[>={Stealth[length=4pt]}, font=\small,
  box/.style={draw, rounded corners=2pt, minimum height=0.95cm, align=center, inner sep=3pt, fill=black!6},
  phi/.style={box, fill=orange!22},
  tokc/.style={draw, minimum width=0.66cm, minimum height=0.44cm, inner sep=0pt, font=\footnotesize, fill=black!6},
  toke/.style={tokc, fill=orange!22},
  lab/.style={anchor=east, font=\footnotesize},
  note/.style={anchor=west, font=\footnotesize, align=left},
  brk/.style={black!60, line width=0.5pt}]
  \node (x) at (0.1,0) {input $x$};
  \node[box, minimum width=2.5cm] (gen) at (2.45,0) {model generates\\[-1pt]the chain $c$};
  \node[phi, minimum width=2.8cm] (phi) at (5.9,0) {intervention $\Phi$\\[-1pt]replaces $c$ by $\tilde c$};
  \node[box, minimum width=2.7cm] (head) at (9.4,0) {answer head\\[-1pt]reads $x$ and $\tilde c$};
  \node (a) at (12.05,0) {answer $a$};
  \draw[->] (x) -- (gen); \draw[->] (gen) -- (phi); \draw[->] (phi) -- (head); \draw[->] (head) -- (a);
  \draw[->, black!55] (x.north) -- ++(0,0.55) -| (head.north);
  \node[lab] at (3.35,-1.35) {generated chain $c$};
  \foreach \k/\t in {0/P_1,1/P_2,2/P_3,3/P_4} { \node[tokc] at (3.85+0.72*\k,-1.35) {$\t$}; }
  \node[lab] at (3.35,-2.00) {filler};
  \foreach \k in {0,1,2,3} { \node[toke] at (3.85+0.72*\k,-2.00) {$\cdot$}; }
  \node[lab] at (3.35,-2.55) {random tokens};
  \foreach \k/\t in {0/u,1/k,2/z,3/m} { \node[toke] at (3.85+0.72*\k,-2.55) {$\t$}; }
  \node[lab] at (3.35,-3.10) {template};
  \node[toke, minimum width=2.82cm] at (4.93,-3.10) {restates the question};
  \node[lab] at (3.35,-3.75) {truncation};
  \foreach \k/\t in {0/P_1,1/P_2} { \node[toke] at (3.85+0.72*\k,-3.75) {$\t$}; }
  \node[lab] at (3.35,-4.30) {shuffle};
  \foreach \k/\t in {0/P_3,1/P_1,2/P_4,3/P_2} { \node[toke] at (3.85+0.72*\k,-4.30) {$\t$}; }
  \draw[brk] (6.55,-1.75) -- (6.7,-1.75) -- (6.7,-3.35) -- (6.55,-3.35);
  \node[note] at (6.85,-2.55) {\textbf{erasing interventions}\\the replacement $\tilde c$ ignores the content of $c$,\\and accuracy falls to chance (Theorem~\ref{thm:necessity})};
  \draw[brk] (6.55,-3.50) -- (6.7,-3.50) -- (6.7,-4.55) -- (6.55,-4.55);
  \node[note] at (6.85,-4.02) {\textbf{content-dependent interventions}\\the replacement $\tilde c$ is computed from $c$\\(what this certifies is the topic of Section~\ref{sec:locality})};
\end{tikzpicture}
\caption{Test-time interventions. The model generates its chain $c$ as usual. The intervention $\Phi$ then replaces $c$ by a modified chain $\tilde c$, and the answer head produces the answer from the input and $\tilde c$. The load-bearing gap is the accuracy that the model loses in this way. The lower part shows examples of $\tilde c$ (orange) for a chain that consists of four running products $P_1, \dots, P_4$.}
\label{fig:interventions}
\end{figure}

\begin{proof}[Proof of Theorem~\ref{thm:necessity}]
Fix $n$ and an erasing intervention with $\tilde c = \varphi_n(x; r)$. The intervened model runs $\pi$ for $T$ passes to generate a chain $c$, exactly as the reasoning model does, but an erasing intervention discards $c$, so the model then samples $a \sim \pi(\cdot \mid x, \tilde c)$ from a context that depends on $(x, \tilde c)$ alone. Neither the law of $\tilde c$ nor the conditional law of $a$ given $(x, \tilde c)$ depends on $c$, so the $T$ passes that produced $c$ contribute nothing to the distribution of the answer, and marginalizing over $c$ leaves
\[
\suc_n(\pi^\Phi) \;=\; \Pr_{x,\, r,\, r'}\big[\,A_n(x, \varphi_n(x;r); r') = y(x)\,\big],
\]
where $A_n(x, \tilde c\,; r')$ denotes one forward pass of $\pi$ on the context formed by $x$, a separator, and $\tilde c$, which has length $\poly(n)$, followed by sampling one token from the answer head with the random string $r'$. The right-hand side is the success of a pipeline with two stages in the sense of Definition~\ref{def:pipeline}. The first stage applies the shallow map $\varphi_n$, and the second stage runs the forward pass and samples the answer. The intervened model is therefore not a pipeline as a procedure, since it performs $T(n)$ serial passes, but its answer has the same distribution as the answer of a two-stage pipeline, which is all that Lemma~\ref{lem:pipeline} needs. The lemma bounds $\suc_n(\pi^\Phi)$ by $\shal_n(d,s)$ for constants $d$ and $s$ that depend only on the forward pass of $\pi$ and on $\varphi_n$, hence only on $\pi$ and $\Phi$, and the bound on the gap follows from its definition. On $\WP_n$, Lemma~\ref{lem:avgcase} bounds $\shal_n(d,s)$ by $\tfrac{1}{60} + n^{-k}$ for every constant $k$ and all sufficiently large $n$, so every shallow imitator of a model with success $p$ on $\WP_n$ has error at least $p - \tfrac{1}{60} - n^{-k}$. On the other hand, a shallowly written chain caps the accuracy on $\WP_n$ at $\tfrac{1}{60} + n^{-k}$.
\end{proof}

\paragraph{Shortcuts.}
On a maximally serial task no part of the answer can be computed without serial work. On other tasks some part can, and we call such a part a shortcut. A shortcut narrows the answer down to a few candidates, and guessing among them makes it easier for a shallow circuit to get the correct answer.

\begin{proposition}[Shortcuts raise the ceiling]
\label{prop:shortcut}
Suppose that some feature $\phi(y(x))$ of the answer can be computed from the input $x$ by a shallow randomized circuit family $\phi$, and knowing this value leaves at most $m$ candidate answers. Then $\shal_n \geq 1/m$, and a shallow circuit that computes the feature and guesses among the candidates reaches accuracy $1/m$ without any chain.
\end{proposition}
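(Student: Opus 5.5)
The plan is to construct the shallow circuit explicitly and compute its accuracy by conditioning on the input. Write $\phi(x)$ for the randomized shallow circuit that, with the accuracy the hypothesis grants, outputs the feature value $\phi(y(x))$. For each feature value $v$, let $S_v$ be the set of answers $a$ with $\phi(a) = v$; the hypothesis says $|S_v| \le m$. The circuit $C$ first computes $v = \phi(x)$. It then uses fresh random bits to select an element of $S_v$ uniformly, or selects uniformly among $m$ fixed slots, padding with dummy answers when $|S_v| < m$. The final guessing step works on a finite answer alphabet: it is a lookup from $v$ and $O(\log m)$ random bits to an answer. This adds only constant depth. Composing it with the circuit for $\phi$ therefore stays in $\mathcal{C}_n(d,s)$ for constants $d,s$, by the same composition argument as in the proof of Lemma~\ref{lem:pipeline}.

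For the accuracy, fix $x$. Whenever the feature is computed correctly, $y(x) \in S_{\phi(x)}$, and the guess hits it with probability $1/|S_v| \ge 1/m$, or exactly $1/m$ under the padded convention. Averaging over $x \sim \mathcal{D}_n$ and the random strings gives $\Pr[C(x,r) = y(x)] \ge 1/m$. By the definition of the ceiling as a supremum over $\mathcal{C}_n(d,s)$, this gives $\shal_n(d,s) \ge 1/m$. Applied to $S_5$ with $\phi = $ sign and $m = 60$, it yields the value $1/60 = 2b$ used in Appendix~\ref{app:degrees}: the sign is computed by counting odd factors with majority gates and taking the count mod $2$, which is in $\TCz$.

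The main obstacle is interpreting ``computed from the input'' when $\phi$ is itself randomized and may err. If the feature is correct only with probability $p_\phi$, the same argument gives accuracy $p_\phi/m$. So I would read the hypothesis as exact computation on every input, equivalently $p_\phi = 1$, in line with the $S_5$ use. I would note the $p_\phi/m$ bound as the general form. A second, minor point is uniform selection among $m$ candidates using random bits when $m$ is not a power of two. Since $m$ is a constant, I would handle this by sampling $\lceil \log_2 m\rceil + k$ bits and folding modulo $m$, which costs an $O(2^{-k})$ deviation. Alternatively, the circuit can output a fixed canonical candidate, as in the $S_5$ argument, and the accuracy is then computed via the uniform input distribution.
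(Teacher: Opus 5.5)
Your outline is sound, but your guessing step differs from the paper's. The paper never guesses at random. For each feature value $q$ it hardwires the answer $a^*(q)$ that maximizes $\Pr_x[\,y(x) = a \mid \phi(y(x)) = q\,]$. This is at least $1/m$ because that conditional law is supported on at most $m$ answers, and summing over $q$ finishes the proof. Your uniform guess among $S_v$ gives a guarantee that is pointwise in $x$, does not depend on $\mathcal{D}_n$, and degrades gracefully to $p_\phi/m$ for a noisy feature. The paper's choice is deterministic and exact, but depends on $\mathcal{D}_n$ through the non-uniform advice.

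The cost of your route is exactly the point you flagged, and for the statement as written it matters. A circuit fed fair random bits can only realize dyadic probabilities. So when $|S_v| = m$ is not a power of two, no construction can hit every candidate with probability $1/m$. Folding therefore gives only $\shal_n \ge 1/m - O(2^{-k})$, which falls short of the stated inequality. Your fallback of a fixed canonical candidate is exact only when $y(x)$ is uniform on each fiber. That holds for uniform inputs in the $S_5$ case, but not for a general $\mathcal{D}_n$.

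Both issues disappear if you derandomize. With your padded slots, consider the deterministic circuits ``output slot $j$ of $S_{\phi(x)}$'' for $j = 1, \dots, m$. Their accuracies sum to $\Pr[y(x) \in S_{\phi(y(x))}] = 1$, so one of them reaches $1/m$. Choosing the best candidate separately for each $q$, as the paper does, is at least as good.
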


\begin{proof}[Proof of Proposition~\ref{prop:shortcut}]
Let $A$ be the shallow randomized family that computes the feature, so $A(x; r) = \phi(y(x))$ for every input. For each value $q$ in the range of $\phi$, fix an answer $a^*(q)$ that maximizes $\Pr_{x}[\,y(x) = a \mid \phi(y(x)) = q\,]$ over the answers $a$. Since at most $m$ answers are consistent with $q$, this maximum is at least $1/m$. The circuit computes $q = A(x; r)$ and outputs $a^*(q)$ by a lookup table with polynomially many entries, which is a circuit of constant depth. Its accuracy is $\sum_q \Pr[\phi(y(x)) = q] \cdot \Pr[y(x) = a^*(q) \mid \phi(y(x)) = q] \geq 1/m$. The table can be hardwired because our circuit families are non-uniform. The circuit lies in $\mathcal{C}_n(d,s)$ for the depth and the size exponent of $A$ increased by those of the lookup, so $\shal_n(d,s) \geq 1/m$ for these constants, and since the circuit reads only $x$ it reaches this accuracy without any chain, which gives the last statement.
\end{proof}

\paragraph{Shortcuts through a quotient.}
In the case of word problems, the answer feature described in Proposition~\ref{prop:shortcut} is most naturally obtained by taking an appropriate quotient of the group. The formulation below applies to all finite groups, and in the special case of abelianization it gives the expression $1/|[G,G]|$. Table~\ref{tab:prices} reports the resulting ceilings for the groups studied in this paper as well as for the two control groups introduced in Section~\ref{sec:experiments}: the cyclic group $\mathbb{Z}_{60}$ of integers modulo $60$ and the solvable group $A_4 \times \mathbb{Z}_5$. Both control groups have $60$ elements, like $A_5$, and therefore share the same vocabulary and the same guessing baseline.

\begin{proposition}[Shortcuts through a quotient]
\label{prop:quotient}
Let $G$ be a finite group and let $\phi : G \to Q$ be a surjective homomorphism onto a group $Q$ whose word problem lies in $\TCz$. This holds for every commutative group $Q$ and, by \citet{barringtontherien1988}, for every solvable group $Q$. Then, in the log-precision setting, a shallow circuit family reaches accuracy $1/|\ker \phi|$ on the word problem of $G$.
\end{proposition}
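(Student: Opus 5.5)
The plan is to reduce to Proposition~\ref{prop:shortcut}, taking the feature to be $\phi(y(x))$. The feature is computable by a shallow circuit because $\phi$ is a homomorphism:
\[
\phi(y(x)) = \phi(g_1)\phi(g_2)\cdots\phi(g_n).
\]
So the circuit first applies $\phi$ coordinatewise. Each $\phi(g_i)$ is a lookup on a single symbol from the constant alphabet $G$, which is a depth-one ($\mathsf{NC}^0$) layer. The circuit then feeds $(\phi(g_1),\dots,\phi(g_n)) \in Q^n$ into the $\TCz$ circuit for $\WP_Q$, which exists by hypothesis. The composition has constant depth and polynomial size, so it is a deterministic shallow family computing $\phi(y(x))$ on every input. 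For commutative $Q$ I will recall why the hypothesis holds. Write $Q$ as a product of cyclic groups. For each element $q \in Q$, count the occurrences of $q$ with majority gates (iterated addition of $n$ bits is in $\TCz$). Reduce each count modulo the order of $q$; this step is in $\TCz$ since the counts have $O(\log n)$ bits. Finally, combine the constantly many residues by a finite lookup. For solvable $Q$ I simply cite \citet{barringtontherien1988}.

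Second, I count the candidates. Knowing $q = \phi(y(x))$ confines $y(x)$ to the fiber $\phi^{-1}(q)$, which is a coset of $\ker\phi$ and has exactly $|\ker\phi|$ elements. Hence $m = |\ker\phi|$ in Proposition~\ref{prop:shortcut}. That proposition already hardwires, for each $q$, a most likely answer $a^*(q)$ in the fiber, and so achieves accuracy at least $1/|\ker\phi|$ under any input distribution. To claim accuracy exactly $1/|\ker\phi|$ on uniform inputs, I would also note the following. Under $x \sim \Unif(G^n)$ the product $y(x)$ is uniform on $G$, because the last factor alone is uniform and independent of the rest. So conditioned on $q$, every element of the fiber is equally likely, and any fixed representative of the coset succeeds with probability exactly $1/|\ker\phi|$.

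For abelianization, $\phi : G \to G/[G,G]$, the kernel is $[G,G]$, which recovers $1/|[G,G]|$. The construction only ever reads $x$, so no chain is involved.

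I expect no real obstacle. The only delicate point is making sure the $\WP_Q$ circuit is genuinely in $\TCz$ for the quotient in question. This is immediate for the abelian case via counting mod orders. In the solvable case it rests entirely on the cited Barrington–Thérien result, so the proof is a short composition argument plus the fiber count.
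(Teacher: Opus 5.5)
Your proposal is correct and follows essentially the paper's argument: apply $\phi$ coordinatewise by a lookup, compute $\phi(y(x))$ with the $\TCz$ circuit for $\WP_Q$, and output a fixed element of the fiber $\phi^{-1}(q)$. That output is right with probability exactly $1/|\ker\phi|$ because $y(x)$ is uniform on $G$ under uniform inputs. Routing the general lower bound through Proposition~\ref{prop:shortcut} and spelling out the counting circuit for abelian $Q$ are harmless additions that the paper's direct proof does not need.
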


\begin{proof}
The circuit applies $\phi$ to every input element, which is a lookup per position. It then computes $q = \phi(g_1) \cdots \phi(g_n) = \phi(y(x))$ with a $\TCz$ circuit for the word problem of $Q$, and it outputs a fixed element of the fiber $\phi^{-1}(q)$. For uniform inputs the product $y(x)$ is uniform on $G$, so conditioned on $\phi(y(x)) = q$ it is uniform on the fiber, which has $|\ker \phi|$ elements. The output is therefore correct with probability $1/|\ker \phi|$.
\end{proof}

\begin{table}[htbp]
\centering
\small
\begin{tabular}{@{}lccccc@{}}
\toprule
Group $G$ & $|G|$ & $[G,G]$ & \shortstack{chance\\$1/|G|$} & \shortstack{price of the\\abelian quotient\\$1/|[G,G]|$} & \shortstack{best shallow\\success (ceiling)} \\
\midrule
$\mathbb{Z}_{60}$ (abelian) & $60$ & $\{e\}$ & $1/60$ & $1$ & $1$ \\
$A_4 \times \mathbb{Z}_5$ (solvable) & $60$ & $V_4$ & $1/60$ & $1/4$ & $1$ \\
$S_5$ & $120$ & $A_5$ & $1/120$ & $1/60$ & $1/60 + o(1)$ \\
$A_5$ (simple) & $60$ & $A_5$ & $1/60$ & $1/60$ & $1/60 + o(1)$ \\
\bottomrule
\end{tabular}
\caption{What a shallow circuit can obtain on the word problem of each group. The price of the abelian quotient is the success of the circuit that computes the image of the product in $G/[G,G]$ and guesses inside the coset (Proposition~\ref{prop:quotient}). The last column is the ceiling $\shal_n(d,s)$ at every fixed depth and size, i.e., the best success of a shallow circuit, up to the error term $n^{-k}$. It equals $1$ for the two solvable groups, because their word problems lie in $\TCz$; these two groups are the controls of Section~\ref{sec:experiments}. For $S_5$ and $A_5$ the last column holds under Assumption~\ref{asm:worstcase} (Proposition~\ref{prop:shortcut} and Lemma~\ref{lem:avgcase}). The ceiling equals the guessing baseline only when the group has no shallow quotient, which is why Section~\ref{sec:exp-toy} instantiates Theorem~\ref{thm:necessity} on $A_5$.}
\label{tab:prices}
\end{table}

For $G = S_5$ with the sign homomorphism, the sign of a product is determined by the parity of the signs of the individual factors, the kernel is $A_5$, and the resulting accuracy is $1/60 = 2/|S_5|$; this matches the lower bound for the $S_5$ case of Proposition~\ref{prop:shortcut} in Section~\ref{sec:exp-toy}. \citet{milesviola2013} already point to the sign map on $S_5$ as the motivation for working over $A_5$. Proposition~\ref{prop:quotient} additionally quantifies the cost, translating this obstruction into an attainable accuracy level for a predictor. Proposition~\ref{prop:quotient} implies that enforcing a ceiling at the pure-guessing rate $1/|G|$ demands that $G$ have no nontrivial quotient whose word problem is in $\TCz$. It is enough that $G$ be perfect: any quotient of a perfect group remains perfect, any nontrivial perfect group is not solvable, and the word problem for a nonsolvable group is $\NCone$-complete \citep{barringtontherien1988}. Simplicity is a stronger condition used by our proof of Lemma~\ref{lem:avgcase}, and $A_5$ satisfies both properties. For the two solvable control groups, we can take $\phi$ to be the identity, so a shallow circuit can compute the full product and the ceiling becomes $1$.

\begin{proof}[Proof of the upper bound for $S_5$ in the instantiation of Proposition~\ref{prop:shortcut}]
We follow the proof of Lemma~\ref{lem:avgcase} with $G = S_5$ and $m = |G| = 120$, and we describe only the changes. Suppose that a polynomial-size randomized $\TCz$ family $\{C_n\}$ has success $s_n \geq \tfrac{1}{60} + \delta$ with $\delta = n^{-k}$ on uniform inputs from $S_5^n$, for all $n$ in an infinite set $N$. Steps~1 to~3 of that proof use only the group axioms, so they carry over unchanged with $\gamma := \delta/(16c)$, where $c \leq |G|$ is the constant from the covering argument below. They yield the class function $\bar D$ with $\bar D(e) = s_n$ and the profile-matching decoder.

In Step~4, suppose that on the event $\mathsf{GOOD}$ the minimizer is some $v \neq y(x)$, and let $u := y(x)\, v^{-1} \neq e$. As before, $|\bar D(b) - \bar D(bw)| \leq 4\gamma$ holds for all $b \in G$ and for every $w$ in the set $K$ of conjugates of $u$ and of $u^{-1}$. Let $H := \langle K \rangle$, which is a normal subgroup of $G$. The proof of Lemma~\ref{lem:covering} uses simplicity only to conclude that $H = G$. Without this hypothesis, the same argument shows that every element of $H$ is a product of at most $c \leq |G|$ elements of $K$. Telescoping as before gives $|\bar D(g) - \bar D(e)| \leq 4c\gamma$ for all $g \in H$. The only normal subgroups of $S_5$ are $\{e\}$, $A_5$, and $S_5$, so $A_5 \subseteq H$. Summing over $g \in A_5$ yields
\[
60\,\big(\bar D(e) - 4c\gamma\big) \;\leq\; \sum_{g \in A_5} \bar D(g) \;\leq\; 1,
\]
and hence $\bar D(e) \leq \tfrac{1}{60} + 4c\gamma = \tfrac{1}{60} + \tfrac{\delta}{4}$, which contradicts $\bar D(e) = s_n \geq \tfrac{1}{60} + \delta$. On $\mathsf{GOOD}$ the unique minimizer is therefore $y(x)$.

Step~5 then yields a deterministic $\TCz/\poly$ family that computes the word problem of $S_5$ exactly for every $n \in N$. Since $A_5$ is a subgroup of $S_5$, the same family computes $\WP_n$ on inputs from $A_5^n$, which contradicts Assumption~\ref{asm:worstcase}.
\end{proof}

\section{Proofs for Section~\ref{sec:depth}}
\label{app:proofs-depth}

\paragraph{Formal setting.}
Given two distributions $P$ and $Q$ over the same finite domain, their total variation distance is defined as $\TV(P, Q) := \max_A |P(A) - Q(A)|$, where the maximum is taken over all events $A$. Let $\pi_c(\cdot \mid x)$ denote the distribution over chains produced by $\pi$’s reasoning model on input $x$. A shallow imitator is a shallow randomized family $\varphi$ that, from an input $x$ together with its internal randomness, outputs a chain; we write $\varphi(x)$ for the resulting output distribution on $x$, and define its imitation error as $\eta_n := \E_x\, \TV\!\big(\varphi(x),\, \pi_c(\cdot \mid x)\big)$. We will use the standard fact that for any function $h:[0,1]$-valued, $|\E_P h - \E_Q h| \le \TV(P,Q)$.

\begin{proof}[Proof of Theorem~\ref{thm:depth}]
The idea is that a shallow imitator is itself an erasing intervention, so the theorem reduces to Theorem~\ref{thm:necessity} through a total variation argument. Let $\varphi$ be a shallow imitator of $\pi$ with imitation error $\eta_n$, and use it as the erasing intervention $\Phi(x, c) := \varphi(x)$. For a fixed input $x$, let $h_x(c) := \pi\big(y(x) \mid x, c\big) \in [0,1]$ be the probability that the answer head outputs the target when it reads the chain $c$. The success of the model on $x$ is $\E_{c \sim \pi_c(\cdot \mid x)}\, h_x(c)$, and the success of $\pi^\Phi$ on $x$ is $\E_{\tilde c \sim \varphi(x)}\, h_x(\tilde c)$. By the property above, the two differ by at most $\TV(\varphi(x), \pi_c(\cdot \mid x))$. Averaging over $x$ gives $\suc_n(\pi^\Phi) \geq \suc_n(\pi) - \eta_n$. Theorem~\ref{thm:necessity} bounds the left-hand side by $\shal_n(d,s)$ for constants $d$ and $s$ that depend only on $\pi$ and $\varphi$, so $\eta_n \geq \suc_n(\pi) - \shal_n(d,s)$, which is the claim. On $\WP_n$, Lemma~\ref{lem:avgcase} bounds $\shal_n(d,s)$ by $\tfrac{1}{60} + n^{-k}$ for every constant $k$ and all sufficiently large $n$, which gives the instantiation stated in Section~\ref{sec:exp-toy}.
\end{proof}

\paragraph{Shallowly written chains.}
Constant-length chains and drafts polished for only a constant number of rounds (Section~\ref{sec:depth}) can be viewed as chains generated by a pipeline as in Definition~\ref{def:pipeline}. Concretely, for a chain of fixed length $T$ the pipeline performs one forward pass per token, and for a draft updated over $q = O(1)$ rounds it performs one forward pass per round. In round $j$, the model runs a forward pass on the input together with the tokens produced in rounds $1, \dots, j-1$, and then samples tokens at some chosen positions. Masked diffusion decoding with $q$ denoising steps fits this second template: at step $j$ it runs a forward pass on the partially unmasked sequence obtained from steps $1, \dots, j-1$ and samples tokens exactly at the positions it reveals. By Lemma~\ref{lem:pipeline}, the output distribution of any such pipeline matches that of a shallow randomized circuit, so it is a shallow imitator with $\eta_n = 0$; likewise, a chain produced by a shallow map is, by definition, a shallow imitator with $\eta_n = 0$. In each of these three settings, Theorem~\ref{thm:depth} yields $\suc_n(\pi) \leq \shal_n$. 

For the mixture case, write $\pi_c(\cdot \mid x) = (1 - \lambda(x))\, \mu_x + \lambda(x)\, \nu_x$, where $\mu_x$ is the output distribution of a shallow map and $\nu_x$ is arbitrary. Then
$\TV(\mu_x, \pi_c(\cdot \mid x)) = \lambda(x)\, \TV(\mu_x, \nu_x) \leq \lambda(x)$,
so the shallow map serves as an imitator with $\eta_n \leq \E_x \lambda(x) = \lambda$. Applying Theorem~\ref{thm:depth} gives $\suc_n(\pi) \leq \shal_n + \lambda$; taking $\lambda$ to be constant in the statement corresponds to the special case where $\lambda(x)$ is constant.

\begin{lemma}[Boilerplate does not enter the bound]
\label{lem:boilerplate}
Suppose the chain of a reasoning model splits as $c = (c_D, c_S)$, where $c_S = f(x, c_D)$ for a shallow randomized map $f$, and write $\pi_{c_D}(\cdot \mid x)$ for the distribution of $c_D$. Then for every shallow imitator $\psi$ of $c_D$, $\suc_n(\pi) \leq \shal_n + \E_x\, \TV\big(\psi(x), \pi_{c_D}(\cdot \mid x)\big)$; that is, Theorem~\ref{thm:depth} holds with $c_D$ in place of $c$.
\end{lemma}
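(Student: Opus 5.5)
The plan is to build, from the shallow imitator $\psi$ of $c_D$, a shallow imitator $\varphi$ of the full chain $c$, and then invoke Theorem~\ref{thm:depth}. Concretely, let $\varphi(x)$ draw $\tilde c_D \sim \psi(x)$, then draw $\tilde c_S = f(x, \tilde c_D)$ with fresh randomness, and output $(\tilde c_D, \tilde c_S)$. Since $\psi$ and $f$ are shallow randomized maps, their composition is again a shallow randomized family: this is a two-stage composition in the sense of Lemma~\ref{lem:pipeline}, whose depth is the sum of the depths and whose size is the sum of the sizes. So $\varphi$ is a legitimate shallow imitator of $\pi$, with constants depending only on $\psi$ and $f$.

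The key step is to bound the imitation error of $\varphi$ by that of $\psi$. By hypothesis, the model's chain distribution $\pi_c(\cdot\mid x)$ is the law of $(c_D, f(x,c_D))$ with $c_D \sim \pi_{c_D}(\cdot\mid x)$. So $\varphi(x)$ and $\pi_c(\cdot\mid x)$ are the images of $\psi(x)$ and $\pi_{c_D}(\cdot\mid x)$ under the same Markov kernel $K_x(c_D) := \mathrm{law}\bigl(c_D, f(x,c_D)\bigr)$. Total variation does not increase under a common Markov kernel (data processing), which one can see directly: for any event $A$, $|P K(A) - Q K(A)| = |\E_P K(A\mid\cdot) - \E_Q K(A\mid\cdot)| \le \TV(P,Q)$, using the $[0,1]$-valued function fact already stated in this appendix. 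Hence
\[
\TV\bigl(\varphi(x), \pi_c(\cdot\mid x)\bigr) \le \TV\bigl(\psi(x), \pi_{c_D}(\cdot\mid x)\bigr),
\]
and averaging over $x$ bounds $\eta_n$ for $\varphi$ by the right-hand side of the lemma.

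Theorem~\ref{thm:depth} applied to $\varphi$ then gives $\suc_n(\pi) \le \shal_n(d,s) + \eta_n$, with $d,s$ depending on $\pi$, $\psi$ and $f$, which is the claim.

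The main obstacle is the modeling point that $c_S$ must be generated from $(x,c_D)$ by $f$ with randomness independent of everything else; this is what makes the kernel common to both sides. The statement's ``$c_S = f(x,c_D)$ for a shallow randomized map'' is exactly this, so I would state it explicitly as the reading of the hypothesis. A second point to handle is that the ordering and interleaving of $c_D$ and $c_S$ inside the chain is fixed in advance, so reassembling $c$ is a copy operation of constant depth.
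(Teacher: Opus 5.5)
Your proposal is correct and follows the paper's proof essentially verbatim. Like the paper, you compose $\psi$ with $f$ to obtain a shallow imitator of the full chain, use that total variation does not increase under the common kernel $c_D \mapsto (c_D, f(x,c_D))$, and then apply Theorem~\ref{thm:depth}. Your explicit remarks about $f$ using fresh independent randomness and about the fixed layout of $(c_D, c_S)$ are harmless clarifications that the paper leaves implicit.
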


\begin{proof}
Define $\varphi(x)$ by drawing $c_D' \sim \psi(x)$ and outputting $(c_D', f(x, c_D'))$. This is a shallow imitator of $c$, since it composes two shallow randomized maps. The chain of the model is $(c_D, f(x, c_D))$ with $c_D \sim \pi_{c_D}(\cdot \mid x)$, so $\varphi(x)$ and $\pi_c(\cdot \mid x)$ are the images of $\psi(x)$ and $\pi_{c_D}(\cdot \mid x)$ under the same kernel $c_D \mapsto (c_D, f(x, c_D))$, and applying one kernel to two distributions does not increase their total variation distance. Hence $\TV(\varphi(x), \pi_c(\cdot \mid x)) \leq \TV(\psi(x), \pi_{c_D}(\cdot \mid x))$ for every $x$, and Theorem~\ref{thm:depth} applied to $\varphi$ gives the claim.
\end{proof}

\begin{conjecture}[Linear length]
\label{conj:length}
On $\WP_n$, every log-precision transformer whose success tends to $1$ requires a chain of length $T(n) = \Omega(n / \log n)$.
\end{conjecture}

The analogous statement with $T(n) = \Omega(n)$ is a theorem for parity in the unique-hard-attention regime \citep{amiri2025lower}. We leave the version for soft attention and log precision open, which is an instance of Open Problem~5.1 of \citet{amiri2025lower}. Theorem~\ref{thm:depth} shows that the number of sequential steps has to grow without bound. The conjecture concerns the rate.

\section{Proofs for Section~\ref{sec:locality}}
\label{app:proofs-locality}

\paragraph{Formal setting.}
For a transformer $\pi$, an intervention $\Phi$, and a task with target $y$, the shallow decodability of Definition~\ref{def:decodability} is
\[
\decode_\Phi(\pi; d,s) \;:=\; \sup_{D \in \mathcal{C}_n(d,s)}\ \Pr_{x,\ c \sim \pi_c(\cdot \mid x),\ r, r'}\Big[ D\big(x,\, \Phi_n(x, c; r);\, r'\big) = y(x) \Big],
\]
where the supremum ranges over randomized decoders $D$ of depth at most $d$ and size at most $n^s$ in the class of the setting.

\begin{proof}[Proof of Theorem~\ref{thm:locality}]
The intervened answer is drawn as $a \sim \pi(\cdot \mid x, \tilde c)$. By Fact~\ref{fact:tc0}, Fact~\ref{fact:ac0}, and the sampling construction of Appendix~\ref{app:onepass}, the map $(x, \tilde c; r') \mapsto a$ is a randomized circuit of constant depth and polynomial size in the class of the setting, with a depth $d$ and a size exponent $s$ that depend only on $\pi$. It is therefore an admissible decoder $D_0 \in \mathcal{C}_n(d,s)$, and $\Pr[D_0(x, \Phi(x,c;r); r') = y(x)] = \suc_n(\pi^\Phi)$ by the definition of the intervened model. Hence $\suc_n(\pi^\Phi) \leq \decode_\Phi(\pi; d,s)$. This argument uses neither the task nor the input distribution. For an erasing intervention, a decoder in $\mathcal{C}_n(d,s)$ applied to $(x, \varphi_n(x; r))$ is a pipeline with two stages and hence a shallow randomized circuit (Lemma~\ref{lem:pipeline}), whose depth and size exponent $d'$ and $s'$ depend only on $d$, $s$, and $\varphi_n$, so $\decode_\Phi(\pi; d,s) \leq \shal_n(d',s')$; in particular a decoder that ignores the chain has accuracy at most $\shal_n$, whereas $\decode(\pi) \geq \suc_n(\pi) = p$ by the first statement. Combined with the first statement, the erasing bound recovers Theorem~\ref{thm:necessity}. On $\WP_n$, no decoder reads the answer from an erased chain with accuracy above $\tfrac{1}{60} + n^{-k}$ (Lemma~\ref{lem:avgcase}), whereas the answer head of a model with success $p$ reads it from the model's own chain with accuracy $p$; the chain raises the shallow decodability from chance to $p$.
\end{proof}

\paragraph{Shuffling and the bag.}
Fix a way to segment the chain into units—either tokens or sentences, where sentences are split at sentence-final punctuation and line breaks. This segmentation can be computed by a shallow circuit, since it merely needs to identify the delimiters. The shuffle intervention $\Phi^{\mathrm{shuf}}$ returns the units of $c$ arranged according to a uniformly random permutation $\sigma$. Let $\bag(c)$ denote the multiset of units in $c$, and define $\decode_{\bag}(\pi)$ as the intervention that produces a canonical encoding of $\bag(c)$, e.g., by listing the units in sorted order.
\begin{lemma}[Shuffle sees the bag]
\label{lem:bag}
For either segmentation, in the log-precision setting, for every $d$ and $s$ there are constants $d'$ and $s'$ such that $\decode_{\Phi^{\mathrm{shuf}}}(\pi; d,s) - \negl(n) \;\leq\; \decode_{\bag}(\pi; d',s')$ and $\decode_{\bag}(\pi; d,s) \;\leq\; \decode_{\Phi^{\mathrm{shuf}}}(\pi; d',s')$.
\end{lemma}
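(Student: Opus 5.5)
Both inequalities come from simulating one intervened chain from the other with a shallow randomized map, and then composing that map with the decoder, as in the composition step of Lemma~\ref{lem:pipeline}.

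\emph{First inequality: from the bag to a shuffle.} I would construct a shallow randomized map $S$ that takes the canonical encoding of $\bag(c)$ and outputs a uniformly random arrangement of its units. Let $N = \poly(n)$ be the number of units.
\begin{itemize}
\item For each position $i$ of the sorted list, draw a random label $\ell_i$ of $\Theta(\log n)\cdot\omega(1)$ bits, say $\lceil\log^2 n\rceil$ bits.
\item For each unit $i$, compute its rank $\rho_i = \sum_{j} \ind[\ell_j < \ell_i]$. Each comparison is in $\ACz$, and the sum of $N$ bits is in $\TCz$.
\item Place unit $i$ at output slot $\rho_i$, using an OR over $i$ of the unit gated by $\ind[\rho_i = k]$, as in Proposition~\ref{prop:skewed}.
\end{itemize}
If the labels are distinct, the induced permutation is uniform. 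Listing a multiset in sorted order and then permuting it uniformly gives the same law as permuting the original units of $c$ uniformly. So the output of $S$ has the law of $\Phi^{\mathrm{shuf}}(x,c)$ up to the collision probability, which is $N^2 2^{-\log^2 n} = \negl(n)$. This matches the $\negl(n)$ in the first inequality. Composing any decoder $D$ for shuffled chains with $S$ gives a bag decoder in $\mathcal{C}_n(d',s')$, whose success differs from that of $D$ by at most the total variation distance. Taking the supremum over $D$ gives the first inequality. A step I expect to be fiddly is handling sentences of variable length: slots must be byte offsets, so the offset of a unit is the sum of the lengths of units with smaller label. This is still an iterated addition of $O(\log n)$-bit numbers, which is in $\TCz$.

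\emph{Second inequality: from a shuffle to the bag.} I would construct a shallow map $B$ that turns any arrangement of the units into the canonical sorted encoding. This is sorting $N$ items of $\poly(n)$ length each, where ties between equal units are harmless.
\begin{itemize}
\item For each unit $i$, compute $\mathrm{rank}_i = \#\{j : u_j < u_i\} + \#\{j < i : u_j = u_i\}$. Lexicographic comparison of two strings is in $\ACz$, and the counts are in $\TCz$.
\item Route unit $i$ to slot $\mathrm{rank}_i$ by the same gated-OR construction as above, with offsets computed by iterated addition.
\end{itemize}
Since $B(\Phi^{\mathrm{shuf}}(x,c))$ equals the canonical encoding of $\bag(c)$ exactly, composing a bag decoder with $B$ gives a shuffle decoder with identical success. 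This holds for either segmentation, since segmentation itself is shallow as noted in the paragraph before Lemma~\ref{lem:bag}.

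The main obstacle is showing that the random-label permutation in the first inequality is shallow, since generating a uniform permutation by the usual sequential shuffle is serial. The rank-of-random-labels trick is what makes it shallow. It uses counting, which is why the lemma is stated only in the log-precision setting, and it is also the source of the $\negl(n)$ loss.
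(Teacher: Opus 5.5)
Your proposal is correct and follows the paper's proof: sorting the shuffled units into the canonical bag encoding with a $\TCz$ rank computation gives the second inequality, and ordering the bag's units by independent random keys gives the first, at the cost of the key-collision probability. One point in your favor: the paper uses keys of $C\log n$ bits, for which the bound $T^2 2^{-C\log n}$ is only inverse-polynomial, whereas your $\lceil \log^2 n\rceil$-bit labels make the loss genuinely $\negl(n)$ while keeping the circuit at constant depth and polynomial size.
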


\begin{proof}
We first show the second inequality. Given $\sigma(c)$, the canonical sorted sequence is a deterministic $\TCz$ computation that depends only on the bag, since the ranks follow from iterated addition and comparisons, so any bag decoder in $\mathcal{C}_n(d,s)$ becomes a shuffle decoder with identical success, at a constant additional depth and a polynomial additional size, which fixes $d'$ and $s'$. We then show the first inequality. Given the bag, assign each unit an independent random key of $C \log n$ bits and order the units by key. Conditioned on all keys being distinct, which fails with probability at most $T^2 2^{-C\log n} = \negl(n)$ for a large constant $C$, the resulting ordering is exactly uniform, that is, it has the law of $\sigma(c)$ given the bag. Running any shuffle decoder in $\mathcal{C}_n(d,s)$ on this simulated input loses at most the failure probability, again at a constant additional depth and a polynomial additional size.
\end{proof}

Together with Theorem~\ref{thm:locality}, the lemma gives $\suc_n(\pi^{\Phi^{\mathrm{shuf}}}) \leq \decode_{\bag}(\pi) + \negl(n)$, which is the statement about shuffling in Section~\ref{sec:locality}.

\begin{proposition}[Fragile to erasure and robust to shuffling]
\label{prop:dissoc}
There is a polynomial-size log-precision transformer $\pi^\star$ with $\suc_n(\pi^\star) = 1$ on $\WP_n$ whose load-bearing gap is $0$ under shuffling. Under Assumption~\ref{asm:worstcase}, its gap is at least $1 - \tfrac{1}{60} - n^{-k}$ under every erasing intervention, for every constant $k$ and all sufficiently large $n$.
\end{proposition}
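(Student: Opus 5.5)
The plan is to construct $\pi^\star$ explicitly so that its chain makes the answer recoverable from the bag of tokens. The experiment caption hints at the design: the chain repeats the answer $2n+1$ times. The chain has two parts.

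\emph{First part.} The model writes the running products $P_1, \dots, P_n$, with $P_i = g_1\cdots g_i$, one token per pass. At step $i$ the model attends to the previous chain token $P_{i-1}$ (its position is fixed, $n+i-1$) and to input position $i$. It then outputs the table lookup $P_{i-1} g_i$. This needs only positional attention and a constant-size feedforward lookup over the 60-symbol alphabet, so it is a constant-depth log-precision transformer. Without further work this part already gives $\suc_n(\pi^\star)=1$.

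\emph{Second part.} The model then writes $2n+1$ copies of the final product $P_n$, read off position $2n$. The answer head is built to output the \emph{plurality} (mode) token among the chain tokens. Under shuffling, the bag contains at most $n$ running-product tokens other than the final value, while the final value occurs at least $2n+1$ times. So among the at most $3n+1$ tokens, the final value has a strict majority. A transformer computes the mode over a constant alphabet with uniform attention: for each of the 60 symbols it computes that symbol's frequency, which needs $O(\log n)$ precision to distinguish counts, and then takes an argmax over 60 values in the MLP. This is the counting step that makes the construction log-precision only, as the paper says. Since the answer head depends only on the bag, shuffling tokens or sentences (for the chosen segmentation, e.g.\ one unit per line) leaves the answer unchanged, so the gap under shuffling is $0$.

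\emph{Erasure bound.} This is immediate from earlier results. Theorem~\ref{thm:necessity} gives $\suc_n(\pi^{\star\Phi}) \le \shal_n(d,s)$ for every erasing $\Phi$. Lemma~\ref{lem:avgcase} (under Assumption~\ref{asm:worstcase}) gives $\shal_n(d,s)\le \tfrac1{60}+n^{-k}$ for all large $n$. Hence the gap is at least $1-\tfrac1{60}-n^{-k}$.

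The main obstacle is making the answer head's mode computation robust while also making the normal unshuffled run correct. Two details need checking. First, the head must use only chain positions and ignore the input tokens, which is handled with a segment-type embedding. Second, the argmax must be exact under $O(\log n)$ rounding. The frequency gap between the majority value and any other symbol is at least $(n+1)/(3n+1)>1/3$, a constant, so rounding errors of order $1/\poly(n)$ cannot flip the argmax.
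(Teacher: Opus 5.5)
Your proposal is correct and follows the paper's proof essentially step for step. You use the same chain of $3n+1$ tokens, the running products $P_1,\dots,P_n$ followed by $2n+1$ copies of $P_n$, and the same plurality answer head: uniform attention over the segment-embedded chain, with the constant margin $(n+1)/(3n+1)>1/3$ absorbing the $O(\log n)$-bit rounding. You also get the erasure bound the same way, by combining Theorem~\ref{thm:necessity} with Lemma~\ref{lem:avgcase}.
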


\begin{proof}
On input $x$, the transformer $\pi^\star$ deterministically emits $c = (P_1, \dots, P_n, y, y, \dots, y)$, where $P_t = g_1 \cdots g_t$ are the prefix products and the final block repeats $P_n = y(x)$ exactly $2n+1$ times, so $T = 3n + 1$. Its answer head outputs the plurality element of the chain-token sub-vocabulary in its context. Such a generator exists, because computing $P_t$ from $(P_{t-1}, g_t)$ is a single multiplication in a finite group, which one attention step implements by retrieving the previous chain token and the aligned input token, followed by a constant lookup. This is a special case of the step simulation of \citet[Thm.~3.3]{li2024cot}, and repeating the last value is trivial. Such an answer head exists as well. With one-hot value vectors, uniform attention over the chain segment yields the exact frequency vector of the $60$ symbols, where segment embeddings distinguish chain tokens from input tokens, and each entry is a multiple of $1/T$ and representable at $O(\log n)$ bits. Uniform attention scores do not depend on the order of the tokens, which is exactly why the head is robust to shuffling. The plurality symbol leads every other symbol by a margin of at least $(n+1)/(3n+1) > 1/3$, so a constant-size feed-forward argmax with constant separation suffices and tolerates the rounding. The transformer is correct on its own chain and on every permutation of it. In the unshuffled chain, and in any permutation of it, the multiplicity of $y(x)$ is at least $2n+1$, while any other symbol occurs at most $n$ times among the prefixes, so the plurality is $y(x)$ deterministically, $\suc_n(\pi^\star) = \suc_n((\pi^\star)^{\Phi^{\mathrm{shuf}}}) = 1$, and the gap under shuffling is $0$. Finally, $\pi^\star$ is fragile to erasure. It is a polynomial-size log-precision transformer, so Theorem~\ref{thm:necessity} applies to it as stated, and Lemma~\ref{lem:avgcase} bounds the ceiling of $\WP_n$. For every erasing intervention, the intervened success is therefore at most $\tfrac{1}{60} + n^{-k}$.
\end{proof}

\paragraph{Fragility under content-dependent interventions.}
Theorem~\ref{thm:locality} limits what the answer head can contribute, but it does not restrict how the answer head acts on chains the model would never generate on its own, such as a permuted or truncated chain. Consequently, a reduction in accuracy under such an intervention reflects the answer head rather than the chain itself, while an erasing intervention guarantees a drop for any model by Theorem~\ref{thm:necessity}. The next proposition highlights this contrast in its most extreme form. Section~\ref{sec:experiments} leverages it to interpret the interventions on open models.

\begin{proposition}[Fragility does not mean that the answer is absent]
\label{prop:converse}
There are two transformers on $\WP_n$ that generate exactly the same chains and both have success $1$. The first keeps success $1$ when its chain is shuffled. The second falls to $\tfrac{1}{60} + O(1/n)$ under the same intervention, although a decoder reads the answer from its shuffled chain with accuracy $1$.
\end{proposition}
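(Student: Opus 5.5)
We build both transformers on the same chain generator as in Proposition~\ref{prop:dissoc}, namely $c = (P_1, \dots, P_n, y, \dots, y)$ with the final value repeated $2n+1$ times, so $T = 3n+1$. Using one generator for both models makes their chain distributions identical by construction. The first transformer is $\pi^\star$ itself, with the plurality answer head. Proposition~\ref{prop:dissoc} already gives it success $1$ on its own chain and on every permutation of it, which settles the first claim.

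For the second transformer we keep the generator and replace the answer head by a position-sensitive head that reads only the chain token at one fixed position. Concretely, it attends with a sharp positional score to position $n+1$, the first repeated copy of $y$ and hence $y(x)$ in the unshuffled chain, and copies that token as the answer. Hard attention to a fixed offset is implementable at log precision with positional encodings, so this model is a legitimate transformer in the sense of Fact~\ref{fact:onepass}, and it has success $1$ on its own chain.

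Under the token shuffle, the token at position $n+1$ becomes a uniformly random element of the multiset $\bag(c)$. That element equals $y(x)$ with probability $(2n+1+m_x)/(3n+1)$, where $m_x$ is the number of prefix products equal to $y(x)$. As written this is about $2/3$, not $\tfrac{1}{60}+O(1/n)$, so the construction needs a change: the second head must look at a position where a random element of the bag is almost never $y$. The fix I would try is to have the second head read position $n$ instead, meaning the last prefix product, which also equals $y(x)$ in the unshuffled chain. After shuffling it still sees a uniform element of the bag, so this alone does not help either.

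I therefore expect this to be the main obstacle. The bag must give a shallow decoder the answer with accuracy $1$, but a random single token must give it only with probability about $1/60$. The plan is to change the shared chain so that $y$ appears many times but is a vanishing fraction of it. The chain becomes the prefix products $P_1, \dots, P_n$, then a padding block of $n^2$ copies of a separator token, then $2n+1$ copies of $y$.

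The plurality over the $60$ group symbols, ignoring separators, still returns $y$, so the first head keeps success $1$ on every permutation. The second head reads the fixed position holding the first copy of $y$ and outputs it if it is a group element, and otherwise outputs a fixed element $e$. After shuffling, that position holds a separator with probability $1 - O(1/n)$, in which case the output is $e$ and is correct with probability $1/60$ over the uniform $y(x)$. It holds a group element with probability $O(1/n)$. So the success is $\tfrac{1}{60}+O(1/n)$.

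The remaining check is that a bag decoder recovers $y(x)$ with accuracy $1$ from the shuffled chain. This is the plurality computation by counting, and Lemma~\ref{lem:bag} transfers it to the shuffled chain, which completes the argument.
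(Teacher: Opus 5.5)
Your final construction is correct, but it takes a different route from the paper.

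\textbf{The paper's route.} The paper keeps the chain of Proposition~\ref{prop:dissoc} unchanged, so the first twin is literally $\pi^\star$. The second twin's head is gated on a canonical-form check ($c_t = c_{t-1} g_t$ for $t \le n$, and $c_t = c_{t-1}$ afterwards) and outputs a fixed symbol $z$ whenever the check fails. The bound then comes from counting: a uniform permutation reproduces the canonical sequence with probability $\prod_g m_g! / (3n+1)! \le 1/(3n+1)$, unless the bag is single-valued, which happens only on a $60^{1-n}$ fraction of inputs.

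\textbf{Your route.} You change the shared chain by inserting $n^2$ separators. A head that reads one fixed position then sees a separator with probability $n^2/(n^2+3n+1)$ after shuffling. This event depends only on the permutation, since the separator positions do not depend on $x$. In that case the head outputs $e$, which is correct with probability exactly $1/60$.

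\textbf{What each approach buys.} Your head is simpler: one positional read plus a lookup, with no verification of the whole chain, and the probability computation is immediate. The cost is a chain of length $\Theta(n^2)$, and the first twin is no longer $\pi^\star$.
\begin{itemize}
\item Its plurality head must discard separators.
\item The frequency margin of $y(x)$ drops from more than $1/3$ to $(n+1)/(n^2+3n+1) = \Theta(1/n)$.
\item The constant-separation argmax argument of Proposition~\ref{prop:dissoc} therefore no longer applies as stated. You need a replacement, for example masking separators in the attention scores, or a feed-forward comparison with weights of size $O(n^2)$, which log precision permits. You should say this explicitly.
\end{itemize}
The paper's construction also makes the sharper conceptual point: the twins share the natural witness chain and differ only on contexts outside the support of generation.

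Finally, the appeal to Lemma~\ref{lem:bag} is harmless but unnecessary. The first twin's plurality head is permutation-invariant and is itself a shallow decoder. Because the chains are identical, it reads the answer from the second twin's shuffled chain with accuracy $1$.
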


\begin{proof}[Proof of Proposition~\ref{prop:converse}]
The first transformer is $\pi^\star$ from Proposition~\ref{prop:dissoc}. Let $\pi'$ agree with $\pi^\star$ on generation (hence identical chains and identical success $1$) but differ in the behavior of the answer head on contexts outside the support of generation. On any context whose chain segment is not of the canonical form, that is, prefix products followed by a constant block, $\pi'$ outputs a fixed symbol $z \in G$. Both behaviors are realizable by polynomial-size log-precision transformers. The two heads are gated by a constant-depth check of the canonical form, which verifies $c_t = c_{t-1}\, g_t$ for $t \leq n$ with the convention $c_0 := e$, and $c_t = c_{t-1}$ for $t > n$, and this check is an $\mathsf{NC}^0$ test per position aggregated by an AND. Under $\Phi^{\mathrm{shuf}}$, passing the canonical-form check pins down the whole sequence, because the check forces $c_1 = g_1$ and each subsequent token by induction, so a shuffled chain passes if and only if the permutation reproduces the canonical sequence exactly. For fixed $x$, this happens with probability $\prod_g m_g! \,/\, (3n+1)!$ under a uniform permutation, where $m_g$ are the multiplicities of the chain's token values. If the bag contains at least two distinct values, $\prod_g m_g! \leq m_{\max}!\,(3n+1-m_{\max})!$, so the probability is at most $1/\binom{3n+1}{m_{\max}} \leq 1/(3n+1)$; the bag is single-valued only on the degenerate instances $x = (y, e, \dots, e)$, a $60^{1-n}$ fraction of inputs. Hence $\suc_n(\pi'^{\Phi^{\mathrm{shuf}}}) \leq \tfrac1{60} + \tfrac{1}{3n+1} + 60^{1-n} = \tfrac1{60} + O(1/n)$, while $\suc_n((\pi^\star)^{\Phi^{\mathrm{shuf}}}) = 1$. The answer head of $\pi^\star$ is a decoder that reads the answer from the shuffled chain of $\pi'$ with accuracy $1$, because the two transformers generate the same chains.
\end{proof}

\begin{table}[htbp]
\centering
\footnotesize
\begin{tabular}{@{}>{\raggedright\arraybackslash}p{2.35cm}>{\raggedright\arraybackslash}p{2.05cm}>{\raggedright\arraybackslash}p{4.15cm}>{\raggedright\arraybackslash}p{4.15cm}@{}}
\toprule
Intervention & The answer head still sees & If the accuracy stays high, we learn that & If the accuracy drops, we learn that \\
\midrule
Erasing (filler, random tokens, template, empty chain)
 & nothing from the chain
 & the model can answer these inputs without its chain, so a shortcut exists
 & the content of the chain was needed. On a maximally serial task the drop is forced and goes all the way to the guessing baseline (Theorem~\ref{thm:necessity}) \\
\addlinespace
Truncation (keep a prefix)
 & the first part of the chain
 & the answer was already determined at that point of the chain
 & this answer head cannot finish the remaining work in one pass. We learn nothing about the chain itself \\
\addlinespace
Shuffle (tokens or sentences)
 & the bag of tokens or of sentences
 & the answer is readable from the bag, so the chain states it in a way that does not depend on order
 & nothing about the chain. Two policies with identical chains can react in opposite ways (Proposition~\ref{prop:converse}) \\
\addlinespace
Re-encoding (relabeling, paraphrase)
 & all of the information, in a different form
 & the answer head relies on features that survive the re-encoding
 & nothing about the chain. The information is unchanged, so the drop only reflects the answer head \\
\bottomrule
\end{tabular}
\caption{How to read an intervention experiment. By Theorem~\ref{thm:locality}, the accuracy after an intervention is bounded by what a shallow decoder can read from the part of the chain that the intervention keeps. High accuracy is therefore informative for every intervention. A drop in accuracy is informative about the chain only for erasing interventions.}
\label{tab:semantics}
\end{table}

\section{Experimental setup}
\label{app:exp}

\paragraph{Small transformers.}
We train small transformer models with $L \in \{2, 4, 8\}$ layers at input lengths $n$ ranging from $8$ to $256$ on four word problems. The tasks are $\WP_n$, the word problem of $A_5$; the word problem of $S_5$ (Proposition~\ref{prop:shortcut} in Appendix~\ref{app:proofs-necessity}); and the word problems of the two solvable control groups $\mathbb{Z}_{60}$ and $A_4 \times \mathbb{Z}_5$ of Table~\ref{tab:prices} in Appendix~\ref{app:proofs-necessity}, which have ceiling $1$ and the same vocabulary and baseline as $A_5$. Two training regimes are applied to each task. The first trains the model to generate the chain of running products defined in Section~\ref{sec:exp-toy} before answering. The second trains the model to answer in a single forward pass without generating a chain. A chainless model is a pipeline with one forward pass, so Lemma~\ref{lem:pipeline} caps its success at the ceiling of its task, which is listed in Table~\ref{tab:prices} for the four groups.

All models are decoder-only transformers with rotary positional embeddings, hidden size $512$, $8$ attention heads, and $L \in \{2, 4, 8\}$ layers. They are trained from scratch with batch size $256$ for $6{,}000$ steps when $n \leq 32$, $10{,}000$ steps when $n \in \{64, 128\}$, and $16{,}000$ steps when $n = 256$. Inputs consist of uniformly sampled group-element sequences, represented with one token per element, and evaluation is performed on $10{,}000$ new inputs using a fixed evaluation seed. In the chain-trained setting, the model is trained to output the running products $P_t = g_1 \cdots g_t$ as an explicit chain and then produce the final answer; erasing interventions in this setting replace that chain with filler tokens, uniformly random tokens, or a fixed template. In the no-chain setting, the model outputs the answer in a single forward pass (thus forming a one-pass pipeline under Definition~\ref{def:pipeline}) and is trained with the correct running product as the target at every position, providing the strongest supervision available without an explicit chain. Parity experiments follow Definition~\ref{def:parity}, and control groups are as specified in Table~\ref{tab:prices}. Most core cells use three random seeds, while the $S_5$ cells at $n \in \{32, 64, 128\}$ use eight. For each run, the tracked prefix $t^\ast$ is defined as the largest index $t$ such that per-position accuracy is at least $0.9$ for every position up through $t$. At $n = 256$, the fixed training budget becomes insufficient even for commutative controls (whose maximum accuracy is $1$), leading to reduced accuracy; accordingly, plateau comparisons in the main text focus on $n \leq 128$. The no-chain runs were introduced after the main program was finalized and were labeled retrospectively in the released logs.

\paragraph{Chain formats, relabeling, and the bag decoder.}
For Figure~\ref{fig:small}f, models with $L = 4$ are trained on $A_5$ at $n \in \{16, 32, 64\}$ with three seeds on one of three chains: the running products $P_1, \dots, P_n$; the running products with each product written twice; and the running products followed by $2n + 1$ copies of $P_n = y(x)$, the chain of Proposition~\ref{prop:dissoc}. Besides the three erasing interventions, we apply a token shuffle, which permutes the chain tokens uniformly at random, and a relabeling, which applies one fixed random bijection of the group to every chain token. The bag decoder is the better of a logistic regression and a network with one hidden layer of $128$ units, both fit on the vector that counts how often each group element occurs in the chain the model generated, on $70\%$ of the evaluation inputs, and scored on the remaining $30\%$. Its accuracy is a lower bound on the bag decodability of Lemma~\ref{lem:bag}.

\paragraph{Chain length.}
The chain that writes every $k$-th running product lists $P_k, P_{2k}, \dots$ and always ends with $P_n$, so it has $T = \lceil n/k \rceil$ steps and its last token is the answer; $T = n$ is the full chain. Models with $L = 4$ are trained at $n \in \{16, 32, 64, 128\}$ for every $T \in \{1, 2, 4, \dots, n\}$, with three seeds each.

\paragraph{Reinforcement learning on the word problem.}
Models with $L = 4$ are trained on $A_5$ at $n = 32$ on the chain for $3{,}000$ supervised steps, which leaves their accuracy well below one, and then for $1{,}500$ steps with GRPO, RLOO, or PPO, with learning rate $10^{-4}$, a KL coefficient of $0.03$, and reward $1$ for a correct final answer and $0$ otherwise. We first ran RL after training on the chain to accuracy $1$; every sample then received reward $1$, the policy gradient vanished, and RL left the model unchanged, so we report only the protocol above.

\paragraph{Open reasoning models.}
We test DeepSeek-R1-Distill-Qwen-1.5B, -7B, and -14B, as well as Qwen3-32B, on MATH-500 and additionally evaluate Qwen3-32B on AIME24 and AIME25, using four generations per problem and seed $1234$ for both the engine and the sampler. Checkpoints and datasets were downloaded from the Hugging Face hub without pinning specific revisions. The released records specify the chat templates used to format the prompts, and the latest hub modification of those templates predates our executions. The model initially produces its solution within a think block, sampled with temperature $0.6$, top-$p$ $0.95$, and top-$k$ $20$, using a chain budget of $16{,}384$ tokens for models with at least 14B parameters and $8{,}192$ tokens for smaller models. Our intervention overwrites the think-block content, then closes the block, after which decoding proceeds greedily under a forced suffix ending in \texttt{Final Answer: \textbackslash boxed\{}, for up to $64$ tokens, while a logit bias prevents the think tag from being generated so the block cannot be reopened. Consequently, everything generated after the intervened chain corresponds to the answer head in Section~\ref{sec:prelim}, i.e., a single forward pass on the input plus the intervened chain; Remark~\ref{rem:multitoken} discusses the case where the answer spans multiple tokens.

We define six channels. \emph{Filler} substitutes the chain with a neutral token sequence of the same length. \emph{Random tokens} swaps it for uniformly sampled tokens with matched length. \emph{Template} repeats the question text until it reaches the chain's length. \emph{Sentence shuffle} permutes the chain's sentences uniformly at random: the chain is segmented at sentence-final punctuation and line breaks, and each sentence remains unchanged. \emph{Token shuffle} randomly permutes all tokens in the chain. \emph{Truncation} retains the specified fraction of the chain's tokens. The first three are erasing as defined in Section~\ref{sec:necessity}, while the last three are content-dependent as defined in Section~\ref{sec:locality}; Table~\ref{tab:semantics} describes what each one certifies. For each problem, we also note whether the ground-truth answer appears verbatim anywhere in the intervened chain (referred to in the main text as the answer being present); for randomized channels, this is evaluated on an independent sample from the same channel. Accuracy is computed as exact match on the forced answer. The released records have two caveats. We did not store the generated chain texts, only aggregated per-cell metrics, so quantities that require the raw chains (e.g., sentences per chain) cannot be recomputed. Additionally, for the three distilled models whose chat template ends the prompt with an open think tag followed by a newline, the intervened context re-adds the tag but omits the newline. This occurs consistently across all channels, including the identity condition, so within-cell comparisons remain consistent, but each intervened context differs from the clean run by that single character.

\paragraph{Open models on the word problem.}
We render each word of $\WP_n$ in natural language. The $60$ elements of $A_5$ receive fixed names of orientations of an object, such as \emph{Upright-North}. The prompt states that the object starts in the orientation named by the identity, lists the rotations $g_1, \dots, g_n$ by name, and gives, for every rotation that occurs in the word, one line with the orientation reached from each orientation, that is, the corresponding column of the multiplication table. The answer is the final orientation, so $b = 1/60$. The controls $\mathbb{Z}_{60}$ and $A_4 \times \mathbb{Z}_5$ use the same names and the same template, so only the multiplication table changes. We also include parity, rendered as a coin that is flipped or left alone at each step ($b = 1/2$), and a retrieval task that counts marked items in a context whose length grows with $n$. We use the post-trained Qwen3 models with 4B, 8B, 14B, and 32B parameters in thinking mode, $200$ words per task and length, and the sampling settings above with seed $1234$, with at most $2{,}048$ chain tokens. For the budget sweep, the model generates at most $T \in \{0, 32, 64, \dots, 2{,}048\}$ thinking tokens, the block is closed, and the answer is forced with the suffix \texttt{The final orientation is:} for up to $24$ tokens, with the think tag banned; $T = 0$ closes the block immediately. For the erasing intervention, the full chain is replaced by the same number of copies of one neutral token, and the accuracy with the full chain is measured with the same forced suffix.

\paragraph{Reinforcement-learning checkpoints.}
We train Qwen3 models with GRPO \citep{shao2024deepseekmath} in verl \citep{sheng2024hybridflow}, with a token-level loss, on mathematical problems from DeepScaleR and from the harder part of DeepMath, after removing every problem that shares a word $n$-gram with MATH-500, AIME24, or AIME25. The reward is $1$ for a correct final answer, or a pseudo-random bit that does not depend on the answer \citep{shao2025spurious}. The correct reward is applied to the post-trained Qwen3-4B, 8B, and 14B, and the random reward to the post-trained Qwen3-4B and 8B and to Qwen3-4B-Base and Qwen3-8B-Base; one checkpoint of Qwen3-4B is trained with a reward for the answer format only. Table~\ref{tab:app-checkpoints} lists the $22$ checkpoints we evaluated and their training steps. They are evaluated on MATH-500 with the protocol above, on $500$ problems with four generations each, except for four checkpoints evaluated on $200$ problems and two on $64$ problems with two generations. On three checkpoints the forced answer ignores the chain: every intervention gives the same accuracy, and the answer is never present in the chain. We exclude these three from Figure~\ref{fig:open}d.

\section{Additional experimental results}
\label{app:more-exp}

\begin{figure}[t]
\centering
\includegraphics[width=\textwidth]{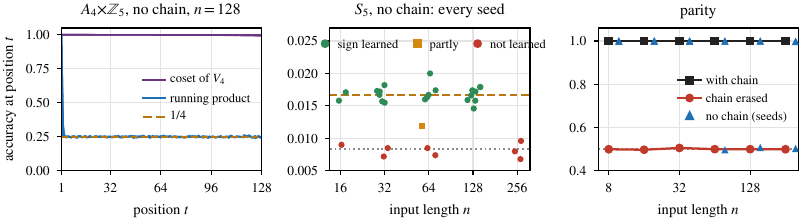}
\caption{What small transformers learn without a chain, and parity ($L = 4$). \emph{Left:} on $A_4 \times \mathbb{Z}_5$ at $n = 128$, chainless models predict the coset of the running product modulo $V_4 = [G, G]$ at every position and are right about the product itself a quarter of the time (three seeds). \emph{Middle:} every chainless seed on $S_5$; seeds that learn the sign of the product land at the ceiling $2b = 1/60$ (dashed), the others at $b = 1/120$ (dotted). \emph{Right:} on parity, chain-trained models fall to $1/2$ when their chain is erased, while chainless models with dense supervision (one triangle per seed) solve it in $15$ of $18$ seeds.}
\label{fig:app-nochain}
\end{figure}

\paragraph{What chainless models learn.}
Figure~\ref{fig:app-nochain} (left and middle) looks inside the chainless models of Section~\ref{sec:exp-toy}. On $A_4 \times \mathbb{Z}_5$, six reruns at $n = 64$ and $128$ predict the coset of the running product modulo $V_4 = [G, G]$ with accuracy $0.999$ to $1.000$, averaged over positions $t \geq 16$, while their accuracy on the product itself is $0.249$ to $0.251$. They compute the image of the product in the abelian quotient and guess inside the coset, which is the circuit of Proposition~\ref{prop:quotient}, and they never find the shallow circuit for the rest of the product. On $S_5$, we count a seed as having learned the sign when its sign accuracy at the answer is at least $0.95$ and its accuracy lies within three standard errors of $1/60$, and as not having learned it when its sign accuracy is at most $0.55$ and its accuracy is at most $0.012$. At $n = 16$, $32$, $64$, $128$, and $256$, the sign is learned in $2$ of $3$, $6$ of $8$, $5$ of $8$, $8$ of $8$, and $0$ of $3$ seeds, and one seed at $n = 64$ is in between. Seeds that learn it score $0.0146$ to $0.0200$, and the others score $0.0068$ to $0.0096$. A hand-coded shallow circuit that computes the sign as the parity of the signs of the factors and guesses inside the coset reaches $0.016$ to $0.021$ at $n = 8$ to $64$, the value $2b$ of Proposition~\ref{prop:quotient}, while a uniform guess on $A_5$ reaches $0.014$ to $0.021$. With the final answer as the only target, training does not find the sign at $n = 16$ to $64$: a model trained to predict only the sign stays at $0.48$ to $0.50$, and so do a linear probe for the sign on the hidden state of a chainless model ($0.47$ to $0.53$), training for $120{,}000$ steps ($0.46$ to $0.51$), and pretraining on the sign before training on the product ($0.48$ to $0.52$). The sign is a parity of the input, and only the running targets of dense supervision let training find it.

\paragraph{Parity.}
Parity (Definition~\ref{def:parity}) is maximally serial at constant precision (Lemma~\ref{lem:parity}) but not serial at log precision, where a majority gate computes it (Appendix~\ref{app:constprec}). Chain-trained models reach $1.000$ at every length from $8$ to $256$ and fall to between $0.493$ and $0.509$ when their chain is erased. Chainless models with dense supervision reach at least $0.997$ in $15$ of $18$ seeds and stay at $0.50$ in the other three (Figure~\ref{fig:app-nochain}, right), consistent with the log-precision setting. As on the two non-serial groups, the drop under erasing comes from the answer head of the chain-trained models and not from the task.

\paragraph{Models trained on the answer alone or with filler tokens.}
Besides the chain and dense supervision, we trained models that answer immediately with the final answer as the only target, and models whose chain positions hold a fixed filler token. On all four groups and at every length, both stay within $0.0012$ of the baseline in the mean over seeds, including $\mathbb{Z}_{60}$ at $n = 8$, which chainless models with dense supervision solve; on parity they reach $0.66$ at $n = 8$ and $0.50$ beyond. These regimes learn nothing, so they neither support nor test the theorems, and we report them only for completeness.

\paragraph{Witnesses and twins.}
Table~\ref{tab:app-witness} realizes Propositions~\ref{prop:dissoc} and~\ref{prop:converse}. A hand-coded answer head that outputs the most frequent chain token, run on the chain of Proposition~\ref{prop:dissoc}, keeps $1.000$ under a token shuffle and falls to the baseline or below under erasing. Models trained on this chain survive the shuffle across $12$ architectures ($0.690$ to $1.000$ at $n = 32$, with hidden size $256$ or $512$, $L \in \{2, 4, 8\}$, and learned or rotary positions) and at every length from $18$ to $56$ ($0.696$ to $0.926$), always below the trained bag decoder. For Proposition~\ref{prop:converse}, two hand-coded heads read identical chains: the head that takes the plurality keeps $1.000$ under a shuffle, and the head gated on the canonical form of the chain falls to $0.015$ to $0.018$, although the plurality head reads the answer from the same shuffled chains. Two students cloned from the chains of one trained teacher, one with the teacher's architecture and one with eight layers, react to a shuffle in the same way ($0.032$ to $0.077$, below the bag decoder at $0.034$ to $0.108$), so training does not by itself produce twins that diverge.

\begin{table}[t]
\centering
\footnotesize
\caption{Witnesses for Propositions~\ref{prop:dissoc} and~\ref{prop:converse} on the word problem of $A_5$. Ranges over input lengths, architectures, and the three erasing interventions (means over three seeds, or over three evaluation seeds for the hand-coded heads). The bag decoder is trained on the counts of the chain tokens; the hand-coded twins read identical chains, so the plurality head is itself a decoder with accuracy $1$ on the shuffled chains of the gated head.}
\label{tab:app-witness}
\begin{tabular}{@{}p{4.35cm}ccccc@{}}
\toprule
setting & $n$ & with chain & chain erased & tokens shuffled & bag decoder \\
\midrule
Hand-coded plurality head on the chain of Prop.~\ref{prop:dissoc} & 16--64 & 1.000 & 0.000--0.018 & 1.000 & 1.000 \\
Trained on that chain, 12 architectures & 32 & 1.000 & 0.015--0.019 & 0.690--1.000 & 1.000 \\
Trained on that chain, $L = 4$ & 18--56 & 1.000 & 0.014--0.019 & 0.696--0.926 & 1.000 \\
Twin with plurality head (Prop.~\ref{prop:converse}) & 16--64 & 1.000 & 0.000--0.021 & 1.000 & -- \\
Twin with gated head (Prop.~\ref{prop:converse}) & 16--64 & 1.000 & 0.015--0.018 & 0.015--0.018 & -- \\
Student cloned from a trained teacher, same architecture & 16--64 & 1.000 & 0.009--0.019 & 0.032--0.077 & 0.034--0.108 \\
Student cloned from the same teacher, $L = 8$ & 16--64 & 0.999--1.000 & 0.013--0.019 & 0.032--0.077 & 0.034--0.108 \\
\bottomrule
\end{tabular}
\end{table}

\paragraph{Open models on the word problem and its controls.}
Table~\ref{tab:app-wp-controls} gives the full chain and the erased chain for every model, task, and length. Replacing the chain by filler drops every model to the baseline on $A_5$, and equally on $\mathbb{Z}_{60}$ and $A_4 \times \mathbb{Z}_5$, whose ceiling is $1$, while with the full chain the three groups are solved about equally often. The models compute all three products by stepping through the word and do not use the shortcut that the solvable groups admit. On parity the chain helps every model except Qwen3-8B, and filler returns them to about $1/2$; on the retrieval task, filler costs Qwen3-32B $0.02$ at $n = 4$ and $0.80$ at $n = 16$. Before running this experiment we expected the drop under filler to be near zero on the solvable groups and on the retrieval task, and largest on $A_5$ and parity. That expectation did not hold. The theorems bound the accuracy after erasing from above and say nothing about how far a model that relies on its chain falls when the ceiling is one.

\begin{table}[t]
\centering
\footnotesize
\caption{Open models on the word problem of $A_5$ and on its controls, $200$ problems per cell: accuracy with the full chain / with the chain replaced by filler of the same length. The baseline is $1/60$ on the three groups and $1/2$ on parity; the retrieval task counts marked items in a context that grows with $n$.}
\label{tab:app-wp-controls}
\begin{tabular}{llcccc}
\toprule
task & model & $n = 4$ & $n = 6$ & $n = 8$ & $n = 16$ \\
\midrule
$A_5$ & Qwen3-4B & 0.12 / 0.01 & 0.03 / 0.01 & 0.03 / 0.01 & 0.02 / 0.03 \\
 & Qwen3-8B & 0.29 / 0.01 & 0.09 / 0.03 & 0.07 / 0.01 & 0.03 / 0.01 \\
 & Qwen3-14B & 0.45 / 0.02 & 0.26 / 0.01 & 0.17 / 0.00 & 0.02 / 0.01 \\
 & Qwen3-32B & 0.62 / 0.01 & 0.37 / 0.02 & 0.29 / 0.01 & 0.04 / 0.02 \\
\midrule
$\mathbb{Z}_{60}$ & Qwen3-4B & 0.14 / 0.03 & 0.03 / 0.01 & 0.02 / 0.01 & 0.01 / 0.01 \\
 & Qwen3-8B & 0.35 / 0.03 & 0.12 / 0.01 & 0.04 / 0.03 & 0.02 / 0.01 \\
 & Qwen3-14B & 0.47 / 0.01 & 0.23 / 0.01 & 0.07 / 0.03 & 0.01 / 0.01 \\
 & Qwen3-32B & 0.60 / 0.01 & 0.32 / 0.02 & 0.14 / 0.01 & 0.01 / 0.01 \\
\midrule
$A_4{\times}\mathbb{Z}_5$ & Qwen3-4B & 0.12 / 0.01 & 0.04 / 0.03 & 0.04 / 0.03 & 0.03 / 0.01 \\
 & Qwen3-8B & 0.34 / 0.03 & 0.15 / 0.03 & 0.04 / 0.04 & 0.03 / 0.01 \\
 & Qwen3-14B & 0.52 / 0.01 & 0.30 / 0.03 & 0.10 / 0.01 & 0.02 / 0.01 \\
 & Qwen3-32B & 0.65 / 0.03 & 0.45 / 0.03 & 0.24 / 0.01 & 0.04 / 0.03 \\
\midrule
parity & Qwen3-4B & 0.58 / 0.46 & 0.76 / 0.49 & 0.81 / 0.47 & 0.70 / 0.49 \\
 & Qwen3-8B & 0.47 / 0.46 & 0.49 / 0.49 & 0.47 / 0.47 & 0.56 / 0.49 \\
 & Qwen3-14B & 0.99 / 0.46 & 0.95 / 0.49 & 0.93 / 0.47 & 0.51 / 0.49 \\
 & Qwen3-32B & 0.64 / 0.56 & 0.73 / 0.51 & 0.72 / 0.44 & 0.91 / 0.48 \\
\midrule
retrieval & Qwen3-4B & 0.82 / 0.39 & 0.81 / 0.32 & 0.66 / 0.29 & 0.62 / 0.13 \\
 & Qwen3-8B & 0.81 / 0.34 & 0.97 / 0.48 & 0.99 / 0.34 & 0.98 / 0.13 \\
 & Qwen3-14B & 1.00 / 0.56 & 0.99 / 0.46 & 0.96 / 0.45 & 0.97 / 0.20 \\
 & Qwen3-32B & 0.99 / 0.97 & 0.99 / 0.65 & 0.97 / 0.38 & 0.97 / 0.17 \\
\bottomrule
\end{tabular}
\end{table}

\paragraph{Reinforcement-learning checkpoints.}
Table~\ref{tab:app-checkpoints} lists the accuracy of every checkpoint of Figure~\ref{fig:open}d under every intervention, and the three checkpoints we excluded.

\begin{table}[t]
\centering
\scriptsize
\setlength{\tabcolsep}{4pt}
\caption{The $22$ Qwen3 checkpoints trained by GRPO that we evaluated on MATH-500 (records are problems times generations): accuracy with the full chain and after each intervention, all with the forced answer. $^\dagger$~Excluded from Figure~\ref{fig:open}d: the forced answer of these checkpoints ignores the chain, so every intervention gives the same accuracy, and the answer is never present in the chain.}
\label{tab:app-checkpoints}
\begin{tabular}{llrrccccccc}
\toprule
start model & reward & step & records & full chain & filler & random & template & empty & sent.\ shuffle & token shuffle \\
\midrule
Qwen3-4B & correct & 25 & 128 & 0.891 & 0.258 & 0.266 & 0.297 & 0.297 & 0.891 & 0.328 \\
Qwen3-4B & correct & 50 & 128 & 0.953 & 0.297 & 0.273 & 0.336 & 0.312 & 0.914 & 0.375 \\
Qwen3-4B & correct & 425 & 2000 & 0.573 & 0.205 & 0.206 & 0.242 & 0.218 & 0.566 & 0.197 \\
Qwen3-8B & correct & 45 & 2000 & 0.853 & 0.266 & 0.244 & 0.296 & 0.304 & 0.839 & 0.271 \\
Qwen3-8B & correct & 255 & 2000 & 0.782 & 0.188 & 0.157 & 0.268 & 0.274 & 0.690 & 0.079 \\
Qwen3-14B & correct & 110 & 2000 & 0.874 & 0.316 & 0.300 & 0.334 & 0.340 & 0.861 & 0.332 \\
Qwen3-4B$^\dagger$ & format only & 750 & 800 & 0.010 & 0.010 & 0.010 & 0.010 & 0.010 & 0.010 & 0.010 \\
Qwen3-4B & random & 25 & 800 & 0.870 & 0.226 & 0.225 & 0.276 & 0.280 & 0.844 & 0.264 \\
Qwen3-4B$^\dagger$ & random & 475 & 800 & 0.020 & 0.020 & 0.020 & 0.020 & 0.020 & 0.020 & 0.020 \\
Qwen3-4B-Base & random & 25 & 2000 & 0.573 & 0.258 & 0.262 & 0.286 & 0.274 & 0.572 & 0.249 \\
Qwen3-4B-Base & random & 50 & 2000 & 0.557 & 0.260 & 0.254 & 0.289 & 0.266 & 0.589 & 0.262 \\
Qwen3-4B-Base & random & 100 & 2000 & 0.607 & 0.210 & 0.247 & 0.297 & 0.276 & 0.602 & 0.265 \\
Qwen3-4B-Base & random & 150 & 2000 & 0.610 & 0.212 & 0.243 & 0.288 & 0.278 & 0.594 & 0.261 \\
Qwen3-4B-Base & random & 200 & 2000 & 0.602 & 0.211 & 0.243 & 0.290 & 0.276 & 0.598 & 0.266 \\
Qwen3-4B-Base & random & 300 & 2000 & 0.594 & 0.222 & 0.253 & 0.292 & 0.274 & 0.590 & 0.258 \\
Qwen3-8B$^\dagger$ & random & 285 & 800 & 0.325 & 0.325 & 0.325 & 0.325 & 0.325 & 0.325 & 0.325 \\
Qwen3-8B-Base & random & 30 & 2000 & 0.615 & 0.306 & 0.306 & 0.317 & 0.320 & 0.594 & 0.306 \\
Qwen3-8B-Base & random & 45 & 2000 & 0.627 & 0.306 & 0.298 & 0.317 & 0.312 & 0.609 & 0.302 \\
Qwen3-8B-Base & random & 105 & 2000 & 0.618 & 0.326 & 0.316 & 0.335 & 0.336 & 0.598 & 0.303 \\
Qwen3-8B-Base & random & 150 & 2000 & 0.695 & 0.280 & 0.290 & 0.340 & 0.304 & 0.673 & 0.293 \\
Qwen3-8B-Base & random & 195 & 2000 & 0.669 & 0.273 & 0.279 & 0.348 & 0.300 & 0.638 & 0.303 \\
Qwen3-8B-Base & random & 300 & 2000 & 0.644 & 0.286 & 0.288 & 0.327 & 0.310 & 0.616 & 0.280 \\
\bottomrule
\end{tabular}
\end{table}

\end{document}